%%%%%%%% ICML 2019 EXAMPLE LATEX SUBMISSION FILE %%%%%%%%%%%%%%%%%

\documentclass{article}

 \usepackage[preprint,nonatbib]{neurips_2019}

\usepackage{microtype}
\usepackage{graphicx}
\usepackage{algorithmic, algorithm}

 \usepackage{booktabs} % for professional tables
\usepackage{caption}
\usepackage{subfig}
\usepackage{nicefrac}

% TO DO NOTES 
\usepackage[colorinlistoftodos,bordercolor=orange,backgroundcolor=orange!20,linecolor=orange,textsize=scriptsize]{todonotes}

\newcommand{\R}{\mathbb{R}}
\newcommand{\dotprod}[2]{\left< #1,#2\right>}
\newcommand{\norm}[1]{\left\lVert#1\right\rVert}

\newcommand{\eqdef}{\overset{\text{def}}{=}}

 % expectation with subscript
\newcommand{\Exp}[1]{{{\rm E}}\left[#1\right] }    % expectation
\newcommand{\E}[1]{{\rm E}\left[#1\right] }

\newcommand{\cD}{{\cal D}}

\newcommand{\cO}{{\cal O}}

\newcommand{\cW}{{\cal W}}
\newcommand{\cX}{{\cal X}}
\newcommand{\cY}{{\cal Y}}
\newcommand{\cZ}{{\cal Z}}

\newcommand{\SM}{Loopless SVRG (\texttt{L-SVRG})}
\newcommand{\ASM}{Loopless Katyusha (\texttt{L-Katyusha})}

\newcommand{\SMs}{\texttt{L-SVRG}}
\newcommand{\ASMs}{\texttt{L-Katyusha}}

% Attempt to make hyperref and algorithmic work together better:

\usepackage{amsmath,amsfonts,amssymb,amsthm,array}
\usepackage{hyperref}
\usepackage[capitalize,noabbrev]{cleveref}

\usepackage{mdframed} 
\usepackage{thmtools}

\definecolor{shadecolor}{gray}{1.00}
\declaretheoremstyle[
headfont=\normalfont\bfseries,
notefont=\mdseries, notebraces={(}{)},
bodyfont=\normalfont,
postheadspace=0.5em,
spaceabove=1pt,
mdframed={
	skipabove=5pt, % 8pt
	skipbelow=5pt, % 8pt
	hidealllines=true,
	backgroundcolor={shadecolor},
	innerleftmargin=4pt,
	innerrightmargin=4pt}
]{shaded}

\declaretheorem[style=shaded,within=section]{definition}
\declaretheorem[style=shaded,sibling=definition]{theorem}

\declaretheorem[style=shaded,sibling=definition]{assumption}

\declaretheorem[style=shaded,sibling=definition]{lemma}

\title{Don't Jump Through Hoops and Remove Those Loops:  SVRG and Katyusha are Better Without the Outer Loop}

%\title{SVRG and Katyusha Without the Outer Loop: Simpler Analysis, Robust Complexity and Better Practical Behaviour}

\author{%
  Dmitry Kovalev \\
  KAUST, KSA \\
  \texttt{dmitry.kovalev@kaust.edu.sa} \\
  \And
  Samuel Horv\'ath \\
  KAUST, KSA \\
  \texttt{samuel.horvath@kaust.edu.sa} \\
  \And
  Peter Richt\'arik \\
  KAUST, KSA  and MIPT, Russia\\
  \texttt{peter.richtarik@kaust.edu.sa} \\
}

\begin{document}

\maketitle

\begin{abstract}
The stochastic variance-reduced gradient method (\texttt{SVRG}) and its accelerated variant (\texttt{Katyusha}) have attracted enormous attention in the machine learning community in the last few years due to their superior theoretical properties and empirical behaviour on training supervised machine learning models via the empirical risk minimization paradigm. A key structural element in both of these methods is the inclusion of an outer loop at the beginning of which  a full pass over the training data is made in order to compute the exact gradient, which is then used in an inner loop to construct a variance-reduced estimator of the gradient using new stochastic gradient information. In this work we design {\em loopless variants} of both of these methods. In particular,  we remove the outer loop and replace its function by a coin flip performed in each iteration designed to trigger, with a small probability, the computation of the gradient. We prove that the new methods enjoy the same superior theoretical convergence properties as the original methods. For loopless \texttt{SVRG},  the same rate is obtained for a large interval of coin flip probabilities, including the probability $\nicefrac{1}{n}$, where $n$ is the number of functions. This is the first result where a variant of \texttt{SVRG} is shown to converge with the same rate without the need for the algorithm to know the condition number, which is often unknown or hard to estimate correctly. We demonstrate through numerical experiments that the loopless methods can have  superior and more robust practical behavior.
\end{abstract}

\section{Introduction}

Empirical risk minimization (aka finite-sum) problems form the dominant paradigm for training supervised machine learning models such as ridge regression, support vector machines, logistic regression, and neural networks. In its most general form, a finite sum problem has the form
\begin{equation}
\label{eq:problem}
\textstyle \min_{x \in \R^d}  f(x) \eqdef \tfrac{1}{n}\sum \limits_{i=1}^n f_i(x)  \;,
\end{equation}
where $n$ refers to the number of training data points (e.g., videos, images, molecules), $x$ is the vector representation of a model using $d$ features, and $f_i(x)$ is the loss of model $x$ on data point $i$.

{\bf Variance-reduced methods.} One of the most remarkable algorithmic breakthroughs in recent years was the development of {\em variance-reduced} stochastic gradient algorithms for solving \eqref{eq:problem}. These methods are significantly faster than \texttt{SGD} \cite{nemirovsky1983problem, Nemirovski_2009, pegasos2} in theory and practice on convex and strongly convex problems, and faster in theory on several classes on nonconvex problems (unfortunately, these methods are no yet successful in training production-grade neural networks).

Two of the most notable and popular methods belonging to the family of variance-reduced methods are \texttt{SVRG}
 \cite{johnson2013accelerating} and its accelerated variant known as \texttt{Katyusha} \cite{allen2017katyusha}. The latter method accelerates the former via the employment of a novel ``negative momentum'' idea. Both of these methods have a double loop design. At the beginning of the outer loop, a full pass over the training data is made to compute the gradient of $f$ at a reference point $w^k$, which is chosen as the freshest iterate  (\texttt{SVRG}) or a weighted average of recent iterates (for \texttt{Katyusha}). This gradient is then used in the inner loop to {\em adjust} the stochastic gradient $\nabla f_i(x^k)$, where $i$ is sampled uniformly at random from $[n]\eqdef \{1,2,\dots,n\}$, and $x^k$ is the current iterate, so as to reduce its variance. In particular, both \texttt{SVRG} and \texttt{Katyusha} perform the adjustment
$g^k =  \nabla f_i(x^k) - \nabla f_i(w^k) + \nabla f(w^k).$
Note that, like $\nabla f_i(x^k)$, the new search direction $g^k$ is an unbiased estimator of $\nabla f(x^k)$. Indeed,
	\begin{equation}\label{eq:unbiased}
		\E{g^k} = \nabla f(x^k) - \nabla f(w^k) + \nabla f(w^k) =  \nabla f(x^k).
	\end{equation}
	where the expectation is taken over random choice of $i \in [n]$. However, it turns out that as the methods progress, the variance of $g^k$, unlike that of $\nabla f_i(x^k)$, progressively decreases to zero. The total effect of this is significantly faster convergence. 

{\bf Converegnce of \texttt{SVRG} and \texttt{Katyusha} for $L$--smooth and $\mu$--strongly convex functions.} For instance, consider the regime where $f_i$ is $L$--smooth for each $i$, and $f$ is $\mu$--strongly convex:

\begin{assumption}[$L$--smoothness]
	Functions $f_i: \R^d \rightarrow \R$ are $L$--smooth for some $L > 0$:
	\begin{equation}
	\label{def:L-smoothness}
	f(y) \leq f(x) + \dotprod{\nabla f(x)}{y-x} +\tfrac{L}{2}\norm{y-x}^2, \qquad \forall x,y \in \R^d.
	\end{equation}
\end{assumption}

\begin{assumption}[$\mu$--strong convexity]
	Function $f: \R^d \rightarrow \R$ is $\mu$--strongly convex for $\mu > 0$:
	\begin{equation}
	\label{def:strong_convexity}
		f(y) \geq f(x) + \dotprod{\nabla f(x)}{y-x} +\tfrac{\mu}{2}\norm{y-x}^2, \qquad \forall x,y \in \R^d.
	\end{equation}
\end{assumption}

 In this regime, the iteration complexity of \texttt{SVRG} is $\cO\left(\left(n + \nicefrac{L}{\mu}\right)\log \nicefrac{1}{\epsilon}\right),$ which is a vast improvement on the linear rate of gradient descent (\texttt{GD}), which is $\cO\left( \nicefrac{ nL}{\mu} \log \nicefrac{1}{\epsilon} \right)$, and on the sublinear rate of \texttt{SGD}, which is $\cO(\nicefrac{L}{\mu} + \nicefrac{\sigma^2}{\mu^2 \epsilon})$, where $\sigma^2=\nicefrac{1}{n}\sum_i \|\nabla f_i(x^*)\|^2$ and $x^*$ is the (necessarily unique) minimizer of $f$. On the other hand, \texttt{Katyusha} enjoys the {\em accelerated} rate  $\cO((n + \sqrt{\nicefrac{nL}{\mu}} )\log \nicefrac{1}{\epsilon} ),$ which is  superior to that of \texttt{SVRG} in the ill-conditioned regime where $\nicefrac{L}{\mu}\geq n$.  This rate has been shown to be {\em optimal} in a certain precise sense \cite{nesterov2013introductory}.

In the past several years, an enormous effort of the machine learning and optimization communities was exerted into designing new efficient variance-reduced methods algorithms to tackle problem \eqref{eq:problem}. These developments have brought about a renaissance in the field. The historically first provably variance-reduced method, the stochastic average gradient (\texttt{SAG}) method of \cite{SAG, schmidt2017minimizing}, was awarded the Lagrange prize in continuous optimization in 2018. The \texttt{SAG} method was later modified to an unbiased variant called \texttt{SAGA}~\cite{SAGA}, achieving the same theoretical rates.  Alternative variance-reduced method  include  \texttt{MISO}~\cite{MISO}, \texttt{FINITO}~\cite{FINITO}, \texttt{SDCA}~\cite{shalev2016sdca}, \texttt{dfSDCA}~\cite{dfSDCA}, \texttt{AdaSDCA}~\cite{ADASDCA},  \texttt{QUARTZ}~\cite{Quartz}, \texttt{SBFGS}~\cite{SBFGS}, \texttt{SDNA}~\cite{SDNA}, \texttt{SARAH}~\cite{nguyen2017sarah} and  \texttt{S2GD}~\cite{S2GD}, \texttt{mS2GD}~\cite{mS2GD}, \texttt{RBCN}~\cite{RBCN}, \texttt{JacSketch}~\cite{JacSketch} and \texttt{SAGD}~\cite{SAGD}. Accelerated variance-reduced method were developed in \cite{shalev2014accelerated}, \cite{defazio2016simple}, \cite{zhou2018direct} and \cite{zhou2018simple}.

\section{Contributions}

As explained in the introduction, a trade-mark structural feature of \texttt{SVRG} and its accelerated variant, \texttt{Katyusha}, is the presence of the outer loop in which a full pass over the data is made. However, the presence of this outer loop is the source of several issues. First, the methods are harder to analyze. Second, one needs to decide at which point the inner loop is terminated and the outer loop entered. For \texttt{SVRG}, the theoretically optimal inner loop size depends on both $L$ and $\mu$. However, $\mu$ is not always known. Moreover, even when an estimate is available,  as is the case in regularized problems with an explicit strongly convex regularizer, the estimate can often be very loose. Because of these issues, one often chooses the inner loop size in a suboptimal way, such as by setting it to $n$ or $\cO(n)$. 

{\bf Two loopless methods.} In this paper we address the above issues by developing  {\em loopless} variants of both \texttt{SVRG} and \texttt{Katyusha}; we refer to them as \texttt{L-SVRG} and \texttt{L-Katyusha}, respectively. In these methods, we dispose of the outer loop and replace its role by a {\em biased coin-flip}, to be performed in every step of the methods, used to trigger the computation of the gradient $\nabla f(w^k)$ via a pass over the data. In particular, in each step, with (a small) probability $p>0$ we perform a full pass over data and update the reference gradient $\nabla f(w^k)$. With probability $1-p$ we keep the previous reference gradient. This procedure can alternatively be interpreted as {\em having an outer loop of a random length}. However, the resulting methods are easier to write down, comprehend and analyze.

{\bf Fast rates are preserved.} We show that  \texttt{L-SVRG} and \texttt{L-Katyusha} enjoy the same fast theoretical rates as their loopy forefathers. Our proofs are different and the complexity results more insightful. 

For  \texttt{L-SVRG} with fixed stepsize $\eta = \nicefrac{1}{6L}$ and probability $p = \nicefrac{1}{n}$,
we  show (see Theorem~\ref{thm:SVRG}) that for the Lyapunov function
\begin{equation}
\textstyle \Phi^k \eqdef \norm{x^{k} - x^*}^2	+  \tfrac{4\eta^2}{pn}\sum\limits_{i=1}^n\norm{\nabla f_i(w^k) - \nabla f_i(x^*)}^2. \label{eq:Lyap1}
\end{equation}
we get
$\Exp{\Phi^k} \leq \epsilon \Phi^0$
as long as $k = \cO\left(\left(n+ \nicefrac{L}{\mu} \right) \log \nicefrac{1}{\epsilon}\right).$ In contrast, the classical \texttt{SVRG} result shows convergence of the expected functional suboptimality $\Exp{f(x^k)-f(x^*)}$ to zero  at the same rate.  Note that the classical result follows from our theorem by utilizing the inequality
$f(x^k) -f(x^*) \leq \nicefrac{L}{2}\|x^k-x^*\|^2,$
which is a simple consequence of $L$--smoothness. However, our result provides a deeper insight into the behavior of the method. In particular, it follows that the gradients $\nabla f_i(w^k)$ at the reference points $w^k$ converge to the gradients at the optimum. This is a key intuition behind the workings of \texttt{SVRG}, one not revealed by the classical analysis. Hereby we close the gap in the theoretical understanding of the the \texttt{SVRG} convergence mechanism. Moreover, our theory predicts that as long as $p$ is chosen in the (possibly very large) interval
\begin{equation} \label{eq:intervalXX} \min \left\{\nicefrac{c}{n}, \nicefrac{c\mu}{L}\right\}  \leq p \leq \max\left\{\nicefrac{c}{n}, \nicefrac{c\mu}{L}\right\},\end{equation}
where $c = \Theta(1)$, \texttt{L-SVRG} will enjoy the optimal complexity $\cO\left(\left(n+ \nicefrac{L}{\mu}\right) \log \nicefrac{1}{\epsilon}\right)$. In the ill-conditioned regime $\nicefrac{L}{\mu}\gg n$, for instance, we roughly have $p\in [\nicefrac{\mu}{L}, \nicefrac{1}{n}]$. This is in contrast with the (loopy/standard) \texttt{SVRG} method the outer loop of which needs to be of the size $\approx \nicefrac{L}{\mu}$. To the best of our knowledge, \texttt{SVRG} does not enjoy this rate for an outer loop of size $n$ (or any value independent of $\mu$, which is often not known in practice), even though this is the setting most often used in practice. Several authors have tried to establish such a result, but without success. We thus answer an  open problem since 2013, the inception of \texttt{SVRG}.

For \texttt{L-Katyusha} with stepsize $\eta=\frac{\theta_2}{(1+\theta_2)\theta_1}$ we show convergence of the Lyapunov function 
\begin{equation}
\label{def:Lyap2}
	\Psi^k = \cZ^{k}
	+
	\cY^{k}
	+
	\cW^{k},
\end{equation}
where
$
\cZ^k = \tfrac{L(1+\eta\sigma)}{2\eta}\norm{z^{k} - x^*}^2$, $ \cY^k = \tfrac{1}{\theta_1}	(f(y^k) - f(x^*))$, and $\cW^k = 	\tfrac{\theta_2(1+\theta_1)}{p\theta_1}(f(w^k) - f(x^*))$,
and where $x^k, y^k$ and $w^k$ are iterates produced by the method, with the parameters defined by $\sigma=\nicefrac{\mu}{L}$, $\theta_1 = \min \{\sqrt{\nicefrac{2\sigma n}{3}}, \nicefrac{1}{2}\}$, $\theta_2 = \nicefrac{1}{2}$, $p = \nicefrac{1}{n}$. Our main result (Theorem~\ref{thm:2}) states  that
$\Exp{\Psi^k} \leq \epsilon \Psi^0$ as long as $k=\cO((n + \sqrt{\nicefrac{nL}{\mu}} )\log \nicefrac{1}{\epsilon} ).$

{\bf Simplified analysis.} Advantage of the loopless approach is that a {\em single iteration analysis is sufficient to establish  convergence}. In contrast, one needs to perform elaborate aggregation across the inner loop to prove the convergence of the original loopy methods.

{\bf Superior empirical behaviour.} We show through extensive numerical testing on both synthetic and real data that our loopless methods are superior to their loopy variants.  We show through experiments that \texttt{L-SVRG} is {\em very robust to the choice of $p$ from the  optimal interval} \eqref{eq:intervalXX} predicted by our theory.  Moreover, {\em even the worst case for \texttt{L-SVRG} outperforms the best case for \texttt{SVRG}.} This shows how further randomization can significantly speed up and stabilize the algorithm.

{\bf Notation.} Throughout the whole paper we use conditional expectation $\E{\cX \;|\; x^k, w^k}$ for \texttt{L-SVRG} and $\E{\cX \;|\; y^k, z^k, w^k}$ for \texttt{L-Katyusha}, but for simplicity we will denote these expectations as $\E{\cX}$. If $\E{\cX}$ refers to unconditional expectation, it is directly mentioned.

\section{\SM}

In this section we describe in detail the Loopless \texttt{SVRG} method (\texttt{L-SVRG}), and its convergence.

{\bf The algorithm.} The \texttt{L-SVRG} method, formalized as Algorithm~\ref{alg:1}, is inspired by the original \texttt{SVRG} \cite{johnson2013accelerating} method. We remove the outer loop present in \texttt{SVRG}  and instead use a probabilistic update of the full gradient.\footnote{This idea was independently explored in \cite{hofmann2015variance}; we have learned about this work after a first draft of our paper was finished.} This update can be also seen in a way that outer loop size is generated by geometric distribution similar to \cite{S2GD, SCSG}.

\begin{algorithm}[h]
	\caption{\SM}
	\label{alg:1}
	\begin{algorithmic}
		\STATE {\bfseries Parameters:} stepsize $\eta>0$, probability $p\in (0,1]$ 
		\STATE {\bf Initialization:} $x^0 = w^0 \in \R^d$
		\FOR{ $k=0,1,2,\ldots$ }
		\STATE{$g^k = \nabla f_i(x^k) - \nabla f_i(w^k) + \nabla f(w^k)$} \hfill ($i \in \{1,\ldots, n\}$ is sampled uniformly at random)
		\STATE{$x^{k+1} = x^k - \eta g^k$}
		\STATE{$w^{k+1} = \begin{cases}
			x^{k}& \text{with probability } p\\
			w^k& \text{with probability } 1-p
			\end{cases}$
		}
		\ENDFOR
	\end{algorithmic}
\end{algorithm}

Note that the reference point $w^k$ (at which a full gradient is computed) is updated in each iteration with probability $p$ to the current iterate $x^k$, and is left unchanged with probability $1-p$. Alternatively, the probability $p$ can be seen as a parameter that controls the expected time before next full pass over data. To be more precise, the expected time before next full pass over data is $\nicefrac{1}{p}$.  Intuitively, we wish to keep $p$ small so that full passes over data are computed rarely enough. As we shall see next, the simple choice $p=\nicefrac{1}{n}$ leads to complexity identical to that of original \texttt{SVRG}.

{\bf Convergence theory.} A key role in the analysis is played by the {\em gradient learning} quantity
\begin{equation}
\label{def:D^k_alg_1}
\textstyle	\cD^k \eqdef \tfrac{4\eta^2}{pn}\sum\limits_{i=1}^n\norm{\nabla f_i(w^k) - \nabla f_i(x^*)}^2 
\end{equation}
and the Lyapunov function 
$
	\Phi^k \eqdef \norm{x^{k} - x^*}^2 +  \cD^k. 
$ The analysis involves four lemmas, followed by the main theorem. We wish to mention the lemmas as they highlight the way in which the argument works. All lemmas combined, together with the main theorem, can be proved on a single page, which underlines the simplicity of our approach. 

Our first lemma upper bounds the expected squared distance of $x^{k+1}$ from $x^*$ in terms of the same distance but  for $x^k$, function suboptimality, and variance of $g^k$. 

\begin{lemma}\label{lem:x^k_alg_1}
	We have
	\begin{equation}
	\label{eq:x^k_alg_1}
	\begin{split}
		&\E{\norm{x^{k+1} - x^*}^2} \leq (1-\eta\mu) \norm{x^k - x^*}^2  - 2\eta( f(x^k) - f(x^*)) + \eta^2 \E{\norm{g^k}^2}.
	\end{split}
	\end{equation}
\end{lemma}

In our next lemma, we further bound the variance of $g^k$ in terms of function suboptimality and $\cD^k$.

\begin{lemma}\label{lem:g^k_alg_1}
	We have
	\begin{equation}
	\label{eq:g^k_alg_1}
		\begin{split}
			\E{\norm{g^k}^2} \leq 4L(f(x^k) - f(x^*)) + \tfrac{p}{2\eta^2}\cD^k.
		\end{split}
	\end{equation}
\end{lemma}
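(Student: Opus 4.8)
The plan is to bound the second moment of $g^k$ by splitting it into a ``gradient difference at the current iterate'' term, which $L$--smoothness controls through function suboptimality, and a remaining ``zero-mean'' term, which is exactly a variance and is therefore controlled by the gradient learning quantity $\cD^k$. First I would add and subtract $\nabla f_i(x^*)$ and regroup the search direction as
\[
g^k = \underbrace{\big(\nabla f_i(x^k) - \nabla f_i(x^*)\big)}_{=:a} + \underbrace{\big(\nabla f(w^k) - \nabla f_i(w^k) + \nabla f_i(x^*)\big)}_{=:b}.
\]
Applying the elementary inequality $\norm{a+b}^2 \leq 2\norm{a}^2 + 2\norm{b}^2$ and taking expectation over the random index $i$ yields $\E{\norm{g^k}^2} \leq 2\E{\norm{a}^2} + 2\E{\norm{b}^2}$.

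For the $a$--term I would invoke the standard consequence of $L$--smoothness together with convexity of each $f_i$, namely $\norm{\nabla f_i(x^k) - \nabla f_i(x^*)}^2 \leq 2L\big(f_i(x^k) - f_i(x^*) - \dotprod{\nabla f_i(x^*)}{x^k - x^*}\big)$. Averaging this over $i \in [n]$ and using the optimality condition $\nabla f(x^*) = 0$, the inner-product term cancels, giving $\E{\norm{a}^2} \leq 2L\big(f(x^k) - f(x^*)\big)$. After the factor of $2$ from the split, this is precisely the $4L\big(f(x^k) - f(x^*)\big)$ appearing in the claim.

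For the $b$--term the key observation is that it has mean zero. Writing $X := \nabla f_i(w^k) - \nabla f_i(x^*)$, the unbiasedness of the component gradients together with $\nabla f(x^*)=0$ gives $\E{X} = \nabla f(w^k)$, so that $b = \E{X} - X$. Hence $\E{\norm{b}^2} = \E{\norm{X - \E{X}}^2} = \E{\norm{X}^2} - \norm{\E{X}}^2 \leq \E{\norm{X}^2} = \tfrac{1}{n}\sum_{i=1}^n \norm{\nabla f_i(w^k) - \nabla f_i(x^*)}^2$. Recognizing this average as $\tfrac{p}{4\eta^2}\cD^k$ by the definition \eqref{def:D^k_alg_1}, the $b$--term contributes exactly $\tfrac{p}{2\eta^2}\cD^k$, and combining the two bounds completes the proof.

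The calculation is short, so the main obstacle is conceptual rather than computational: one must choose the right decomposition of $g^k$ so that $b$ becomes genuinely zero-mean, which is what lets us pass from a second moment to a variance and then discard the nonnegative term $\norm{\E{X}}^2$. It is also worth flagging that the $a$--term bound relies on convexity of each $f_i$ (not just the descent-lemma form of smoothness); the crude $2\norm{a}^2+2\norm{b}^2$ split loses a constant factor but is enough to land exactly on the stated constants.
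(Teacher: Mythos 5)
Your proposal is correct and follows essentially the same route as the paper's own proof: the same add-and-subtract of $\nabla f_i(x^*)$, the same $\norm{a+b}^2 \le 2\norm{a}^2+2\norm{b}^2$ split, the same variance argument for the zero-mean term, and the same co-coercivity bound for the first term. If anything, you are slightly more careful than the paper in noting that the bound $\E{\norm{\nabla f_i(x^k)-\nabla f_i(x^*)}^2}\le 2L(f(x^k)-f(x^*))$ uses convexity of each $f_i$ and not merely the descent-lemma form of $L$--smoothness.
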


Finally, we bound $\E{\cD^{k+1}}$ in terms of $\cD^k$ and function suboptimality.

\begin{lemma}\label{lem:D^k_alg_1}
	We have
	\begin{equation}
	\label{eq:D^k_alg_1}
		\begin{split}
			\E{\cD^{k+1}} \leq (1-p) \cD^{k} + 8L\eta^2(f(x^k) - f(x^*)).
		\end{split}
	\end{equation}
\end{lemma}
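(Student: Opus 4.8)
The plan is to condition on $x^k$ and $w^k$ and take expectation over the only source of randomness affecting $\cD^{k+1}$, namely the biased coin flip that determines $w^{k+1}$. Since $w^{k+1}$ equals $x^k$ with probability $p$ and equals $w^k$ with probability $1-p$, and since $\cD^{k+1}$ is a deterministic function of $w^{k+1}$, splitting the expectation over the two outcomes gives
\begin{equation}
\E{\cD^{k+1}} = p\cdot \tfrac{4\eta^2}{pn}\sum_{i=1}^n\norm{\nabla f_i(x^k) - \nabla f_i(x^*)}^2 + (1-p)\cdot \tfrac{4\eta^2}{pn}\sum_{i=1}^n\norm{\nabla f_i(w^k) - \nabla f_i(x^*)}^2.
\end{equation}
The second term is, by the definition \eqref{def:D^k_alg_1} of $\cD^k$, exactly $(1-p)\cD^k$, which is precisely the first term on the right-hand side of the claimed bound. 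In the first term, the factor $p$ cancels against the $\nicefrac{1}{p}$ built into $\cD^k$, leaving the residual $\tfrac{4\eta^2}{n}\sum_{i=1}^n \norm{\nabla f_i(x^k) - \nabla f_i(x^*)}^2$.

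It then remains only to bound this residual by $8L\eta^2(f(x^k)-f(x^*))$. For this I would invoke the standard consequence of $L$-smoothness (together with convexity) of each $f_i$, namely the co-coercivity-type inequality $\norm{\nabla f_i(x) - \nabla f_i(x^*)}^2 \leq 2L\left(f_i(x) - f_i(x^*) - \dotprod{\nabla f_i(x^*)}{x - x^*}\right)$. Averaging this over $i \in [n]$ and using $\nabla f(x^*) = \tfrac{1}{n}\sum_i \nabla f_i(x^*) = 0$, which holds because $x^*$ minimizes $f$, the linear terms collapse and one obtains $\tfrac{1}{n}\sum_{i=1}^n \norm{\nabla f_i(x^k) - \nabla f_i(x^*)}^2 \leq 2L(f(x^k) - f(x^*))$. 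Multiplying through by $4\eta^2$ produces exactly $8L\eta^2(f(x^k) - f(x^*))$, and adding the $(1-p)\cD^k$ term completes the argument.

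The proof is almost entirely bookkeeping: the coin-flip structure splits the expectation cleanly into a fresh-point part and a stale-point part, the $p$ factors cancel by the deliberate design of the normalization in $\cD^k$, and the only genuine estimate is the smoothness bound on the averaged squared gradient deviations. The one step I would regard as the crux — and the only place the structural assumptions are actually used — is the application of the per-component inequality together with the cancellation of the linear terms via $\nabla f(x^*)=0$; this is what lets the bound be controlled by the aggregate suboptimality $f(x^k)-f(x^*)$ rather than by the individual $f_i(x^k)$, and it is exactly the form needed to telescope cleanly against the other lemmas when the Lyapunov function $\Phi^k$ is assembled in the main theorem.
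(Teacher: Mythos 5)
Your proof is correct and follows essentially the same route as the paper: split $\E{\cD^{k+1}}$ over the coin flip to get $(1-p)\cD^k$ plus the fresh term with the $p$ cancelling, then bound $\tfrac{1}{n}\sum_i\norm{\nabla f_i(x^k)-\nabla f_i(x^*)}^2$ by $2L(f(x^k)-f(x^*))$ via the co-coercivity consequence of smoothness and $\nabla f(x^*)=0$. If anything, you are more explicit than the paper in noting that this last step also uses convexity of the individual $f_i$, which the paper leaves implicit under its citation of \eqref{def:L-smoothness}.
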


Putting the above three lemmas together naturally leads to the following result involving Lyapunov function \eqref{eq:Lyap1}.

\begin{lemma}\label{lem:conv_alg_1}
	Let the step size $\eta \leq \nicefrac{1}{6L}$. Then for all $k\geq 0$ the following inequality holds:
	\begin{equation}
	\label{eq:conv_alg_1}
			\E{\Phi^{k+1}} \leq 
			(1 - \eta\mu) \norm{x^k - x^*}^2
			+
			\left(1 - \nicefrac{p}{2}\right) \cD^k.
	\end{equation}
\end{lemma}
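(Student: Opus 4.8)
The plan is to prove Lemma~\ref{lem:conv_alg_1} by straightforwardly combining the three preceding lemmas, with the stepsize constraint $\eta \le \nicefrac{1}{6L}$ chosen precisely to make all the ``bad'' terms cancel. First I would write out the definition $\Phi^{k+1} = \norm{x^{k+1}-x^*}^2 + \cD^{k+1}$, take conditional expectation, and substitute the bounds from Lemma~\ref{lem:x^k_alg_1} and Lemma~\ref{lem:D^k_alg_1}:
\begin{equation}
\label{eq:plan-combine}
\E{\Phi^{k+1}} \leq (1-\eta\mu)\norm{x^k-x^*}^2 - 2\eta(f(x^k)-f(x^*)) + \eta^2\E{\norm{g^k}^2} + (1-p)\cD^k + 8L\eta^2(f(x^k)-f(x^*)).
\end{equation}

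Next I would eliminate the remaining $\E{\norm{g^k}^2}$ term using Lemma~\ref{lem:g^k_alg_1}, which contributes $4L\eta^2(f(x^k)-f(x^*))$ and $\tfrac{p}{2}\cD^k$ after multiplication by $\eta^2$. The $\cD^k$ terms then assemble as $(1-p)\cD^k + \tfrac{p}{2}\cD^k = (1-\nicefrac{p}{2})\cD^k$, which is already the desired coefficient on $\cD^k$. The distance term $(1-\eta\mu)\norm{x^k-x^*}^2$ passes through untouched, matching the target. So the only thing left to verify is that the aggregate coefficient on the function-suboptimality term $f(x^k)-f(x^*)$ is nonpositive, allowing us to discard it (since $f(x^k)-f(x^*)\ge 0$ by optimality of $x^*$). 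Collecting those coefficients gives $-2\eta + 4L\eta^2 + 8L\eta^2 = -2\eta + 12L\eta^2 = -2\eta(1 - 6L\eta)$.

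The one genuine (though routine) step is observing that the constraint $\eta \le \nicefrac{1}{6L}$ is exactly what forces $1 - 6L\eta \ge 0$, hence $-2\eta(1-6L\eta)\le 0$, so the function-suboptimality contribution can be dropped to yield the stated inequality~\eqref{eq:conv_alg_1}. I expect no real obstacle here: the lemma is a clean bookkeeping step, and the interesting work lives in the three component lemmas (particularly the variance bound of Lemma~\ref{lem:g^k_alg_1} and the gradient-learning recursion of Lemma~\ref{lem:D^k_alg_1}). The only thing to be careful about is tracking the $\eta^2$ factors correctly when substituting the variance bound and confirming that the $\tfrac{p}{2\eta^2}\cD^k$ term in Lemma~\ref{lem:g^k_alg_1}, once multiplied by $\eta^2$, produces precisely $\tfrac{p}{2}\cD^k$ and not a stepsize-dependent residual.
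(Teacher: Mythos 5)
Your proposal is correct and matches the paper's own proof essentially line for line: combine Lemmas~\ref{lem:x^k_alg_1} and \ref{lem:D^k_alg_1}, substitute the variance bound of Lemma~\ref{lem:g^k_alg_1}, collect the $\cD^k$ coefficients into $1-\nicefrac{p}{2}$, and use $\eta \le \nicefrac{1}{6L}$ to make the aggregate coefficient $-2\eta(1-6L\eta)$ on $f(x^k)-f(x^*)$ nonpositive so that term can be dropped. No gaps.
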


In order to obtain a recursion involving the Lyapunov function on the right-hand side  of \eqref{eq:conv_alg_1}			
\begin{theorem}\label{thm:SVRG}
	Let $\eta = \nicefrac{1}{6L}$, $p = \nicefrac{1}{n}$. Then $\E{\Phi^k} \leq \varepsilon\Phi^0$ as long as
$
		k \geq \cO\left( \left(n + \nicefrac{L}{\mu}\right) \log \nicefrac{1}{\varepsilon}\right) .
$
\end{theorem}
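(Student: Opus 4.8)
The plan is to combine the single-step contraction from Lemma~\ref{lem:conv_alg_1} with the chosen parameter values $\eta=\nicefrac{1}{6L}$ and $p=\nicefrac{1}{n}$ to produce a genuine recursion for the Lyapunov function $\Phi^k=\norm{x^k-x^*}^2+\cD^k$. The right-hand side of \eqref{eq:conv_alg_1} is a weighted combination of the two pieces of $\Phi^k$ with distinct contraction factors, namely $(1-\eta\mu)$ on $\norm{x^k-x^*}^2$ and $(1-\nicefrac{p}{2})$ on $\cD^k$. The key observation is that if we set
\begin{equation}
\label{eq:rate_def}
\rho \eqdef \max\left\{1-\eta\mu,\; 1-\tfrac{p}{2}\right\} = 1 - \min\left\{\eta\mu,\; \tfrac{p}{2}\right\},
\end{equation}
then each of the two coefficients on the right-hand side of \eqref{eq:conv_alg_1} is bounded above by $\rho$, so that $\E{\Phi^{k+1}}\le \rho\,\Phi^k$ (interpreting the expectation as conditional on the iterates at step $k$, per the Notation paragraph).

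First I would pass from the conditional bound to an unconditional one. Taking full expectation on both sides and using the tower property gives $\Exp{\Phi^{k+1}}\le \rho\,\Exp{\Phi^k}$, and iterating this from $k$ down to $0$ yields $\Exp{\Phi^k}\le \rho^k\Phi^0$ (note $\Phi^0$ is deterministic since $x^0=w^0$ is fixed). To guarantee $\Exp{\Phi^k}\le\varepsilon\Phi^0$ it therefore suffices to require $\rho^k\le\varepsilon$, i.e. $k\log(\nicefrac{1}{\rho})\ge\log(\nicefrac{1}{\varepsilon})$. Using the standard estimate $\log(\nicefrac{1}{\rho})=-\log(1-\min\{\eta\mu,\nicefrac{p}{2}\})\ge\min\{\eta\mu,\nicefrac{p}{2}\}$, it suffices that
\begin{equation}
\label{eq:k_bound}
k\ge \frac{1}{\min\{\eta\mu,\,\nicefrac{p}{2}\}}\log\tfrac{1}{\varepsilon}.
\end{equation}

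The final step is to substitute the parameter values and simplify $\nicefrac{1}{\min\{\eta\mu,\nicefrac{p}{2}\}}=\max\{\nicefrac{1}{\eta\mu},\nicefrac{2}{p}\}$. With $\eta=\nicefrac{1}{6L}$ we get $\nicefrac{1}{\eta\mu}=\nicefrac{6L}{\mu}$, and with $p=\nicefrac{1}{n}$ we get $\nicefrac{2}{p}=2n$, so the bound reads $k\ge\max\{\nicefrac{6L}{\mu},\,2n\}\log\nicefrac{1}{\varepsilon}$. Since $\max\{a,b\}\le a+b$, this is absorbed into $\cO((n+\nicefrac{L}{\mu})\log\nicefrac{1}{\varepsilon})$, giving the claim. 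I do not expect any serious obstacle here: all the analytical work is already encapsulated in Lemma~\ref{lem:conv_alg_1}, and the theorem is essentially a bookkeeping exercise. The only point requiring mild care is the logarithm estimate $-\log(1-t)\ge t$ and making sure the contraction factor $\rho$ is bounded strictly below $1$, which holds because both $\eta\mu>0$ and $p>0$; one should also verify that $\eta=\nicefrac{1}{6L}$ indeed satisfies the hypothesis $\eta\le\nicefrac{1}{6L}$ of Lemma~\ref{lem:conv_alg_1}, which it does with equality.
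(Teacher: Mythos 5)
Your proposal is correct and follows exactly the route the paper takes: apply Lemma~\ref{lem:conv_alg_1}, bound both coefficients by $\max\{1-\eta\mu,\,1-\nicefrac{p}{2}\}$ to get a one-step contraction of $\Phi^k$, unroll the recursion via the tower property, and convert the geometric decay into the stated iteration count. The paper states this more tersely (omitting the explicit $-\log(1-t)\ge t$ step), but the argument is the same.
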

\begin{proof}
As the corollary of Lemma~\ref{lem:conv_alg_1} we have $ \E{\Phi^{k}} \leq 
			\max\left\{1 - \eta\mu, 1 - \nicefrac{p}{2}\right\} \Phi^{k-1}.$
Setting  $\eta = \nicefrac{1}{6L}$, $p = \nicefrac{1}{n}$ and unrolling conditional probability one obtains $ \E{\Phi^{k}} \leq 
			\max\left\{1 - \nicefrac{\mu}{6L}, 1 - \nicefrac{1}{2n}\right\}^k \Phi^{0},$
which concludes the proof.
\end{proof}

Note that the step size does not depend on the strong convexity parameter $\mu$ and yet the resulting complexity adapts to it. 

{\bf Discussion.} Examining \eqref{eq:conv_alg_1}, we can see that contraction of the Lyapunov function is $\max\{1 - \eta\mu, 1-\nicefrac{p}{2} \}$. Due to the limitation of $\eta \leq \nicefrac{1}{6L}$, the first term is at least $1 - \nicefrac{\eta}{6\mu}$, thus the complexity cannot better than $\cO\left(  \nicefrac{L}{\mu} \log \nicefrac{1}{\varepsilon}\right)$. In terms of total complexity (number of stochastic gradient calls), \texttt{L-SVRG} calls the stochastic gradient oracle in expectation $\cO(1+pn)$ times times in each iteration. Combining these two complexities together, one gets the total complexity  $\cO\left(\left(\nicefrac{1}{p} + n +  \nicefrac{L}{\mu} + \nicefrac{Lpn}{\mu}\right) \log \nicefrac{1}{\varepsilon}\right).$ Note that any choice of 
$p \in  \left[ \min \left\{\nicefrac{c}{n}, \nicefrac{c\mu}{L}\right\}, \max\left\{\nicefrac{c}{n}, \nicefrac{c\mu}{L}\right\}\right],$
where $c = \Theta(1)$, leads to the optimal total complexity  $\cO\left(\left( n +  \nicefrac{L}{\mu}\right) \log \nicefrac{1}{\varepsilon}\right)$. This fills the gap in \texttt{SVRG} theory, where the outer loop length (in our case $\nicefrac{1}{p}$ in expectation) needs to be proportional to $\nicefrac{L}{\mu}$. Moreover, analysis for \texttt{L-SVRG} is much simpler and provides more insights.

\section{\ASM}

In this section we describe in detail the Loopless \texttt{Katyusha} method (\texttt{L-Katyusha}), and its convergence properties.

{\bf The algorithm.} The \texttt{L-Katyusha} method, formalized as Algorithm~\ref{alg:2}, is inspired by the original \texttt{Katyusha} \cite{allen2017katyusha} method. We use the same technique as for Algorithm~\ref{alg:1}, where we remove the outer loop present in \texttt{Katyusha}  and instead use a probabilistic update of the full gradient. 

\begin{algorithm}[h]
	\caption{ \ASM}
	\label{alg:2}
	\begin{algorithmic}
		\STATE {\bfseries Parameters:} $\theta_1, \theta_2$, probability $p\in (0,1]$ 
		\STATE {\bf Initialization:} Choose $y^0=w^0=z^0\in \R^d$, stepsize $\eta = \frac{\theta_2}{(1+\theta_2)\theta_1}$ and set $\sigma = \nicefrac{\mu}{L}$ 
		\FOR{ $k=0,1,2,\ldots$ }
		\STATE{$x^k = \theta_1 z^k + \theta_2 w^k + (1-\theta_1 - \theta_2) y^k$}
		\STATE{$g^k = \nabla f_i(x^k) - \nabla f_i(w^k) + \nabla f(w^k)$} \hfill ($i \in \{1,\ldots, n\}$ is sampled uniformly at random)
		\STATE{$z^{k+1} = \frac{1}{1+\eta \sigma} \left(\eta\sigma x^k + z^k - \frac{\eta}{L}g^k\right)$}
		\STATE{$y^{k+1} = x^k + \theta_1(z^{k+1} - z^k)$}
		\STATE{$w^{k+1} = \begin{cases}
				y^k& \text{with probability } p\\
				w^k& \text{with probability } 1-p
			\end{cases}$
		}
		\ENDFOR
	\end{algorithmic}
\end{algorithm}

The exact analogy applies to the reference point $w^k$ (at which a full gradient is computed) as for \texttt{\SMs}. Instead of updating this point in a deterministic way every $m$ iteration, we use the probabilistic update with parameter $p$, when we update $w^{k+1}$ to the current iterate $y^k$ with this probability and is left unchanged with probability $1-p$.  As we shall see next, the same choice $p=\nicefrac{1}{n}$ as for  \texttt{\SMs} leads to complexity identical to that of original \texttt{Katyusha}.

{\bf Convergence theory.} In comparison to \texttt{\SMs}, we don't use {\em gradient mapping} as the key component of our analysis. Instead, we prove convergence of functional values in $y^k, w^k$ and point-wise convergence of $z^k$. This is summarized in the following Lyapunov function:
\begin{equation}
	\Psi^k = \cZ^{k}
	+
	\cY^{k}
	+
	\cW^{k},
\end{equation}
where
$
\cZ^k = \tfrac{L(1+\eta\sigma)}{2\eta}\norm{z^{k} - x^*}^2$, $ \cY^k = \tfrac{1}{\theta_1}	(f(y^k) - f(x^*))$, $\cW^k = 	\tfrac{\theta_2(1+\theta_1)}{p\theta_1}(f(w^k) - f(x^*))$. Note that even if $x^k$ is not in this function, its point-wise convergence is directly implied by the convergence of $\Psi^k$ due to the definition of $x^k$ in Algorithm~\ref{alg:2} and $L$-smoothness of $f$.

The analysis involves five lemmas, followed by the convergence summarized in the main theorem. The lemmas highlight important steps of our analysis. The simplicity of our approach is still preserved:  all lemmas and the main theorem can be proved on not more than two pages.

Our first lemma upper bounds the variance of the gradient estimator $g^k$, which eventually goes to zero as our algorithm progresses.

\begin{figure*}[t]
	\centering
	
	\subfloat{\includegraphics[width=0.25\linewidth]{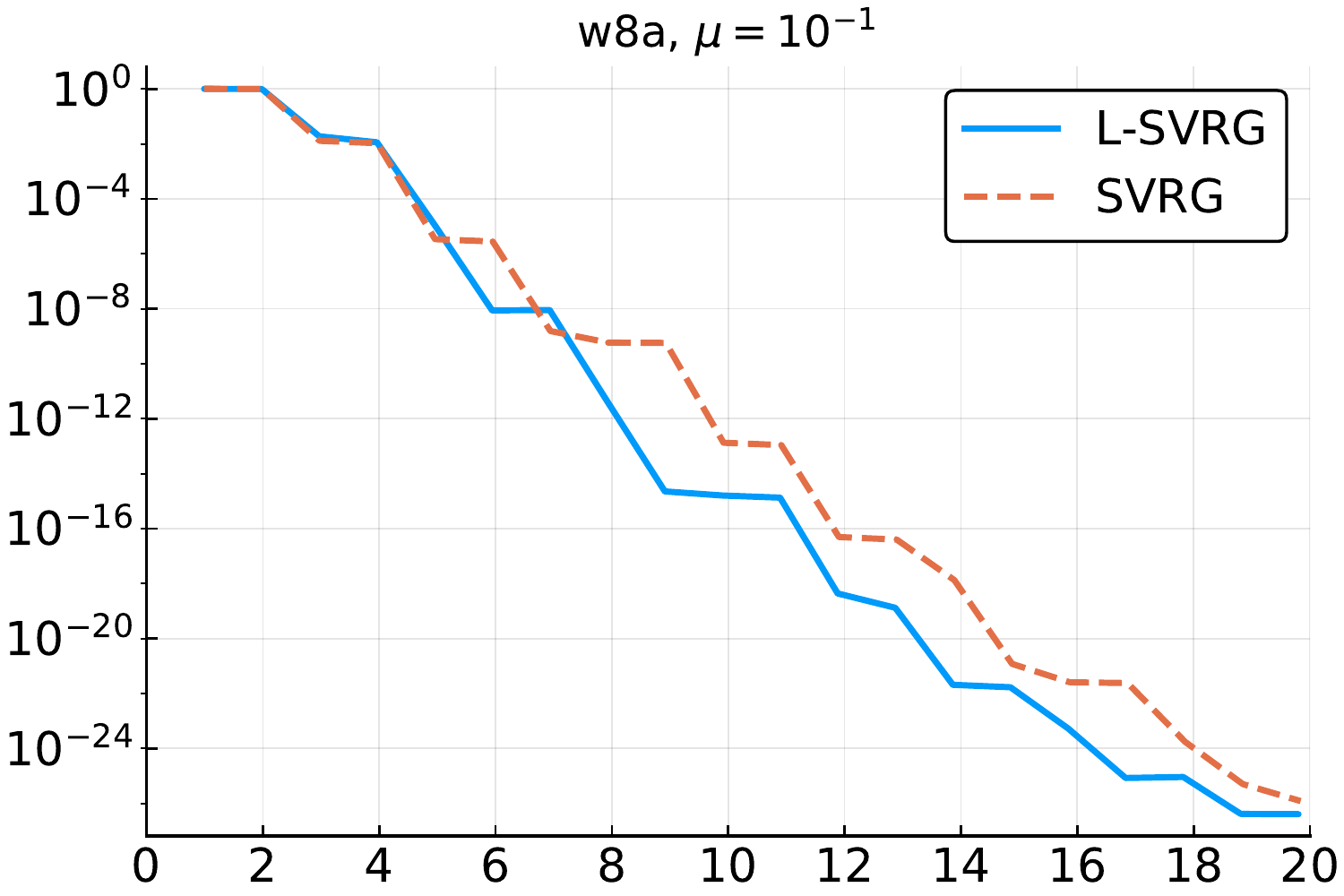}}
	\subfloat{\includegraphics[width=0.25\linewidth]{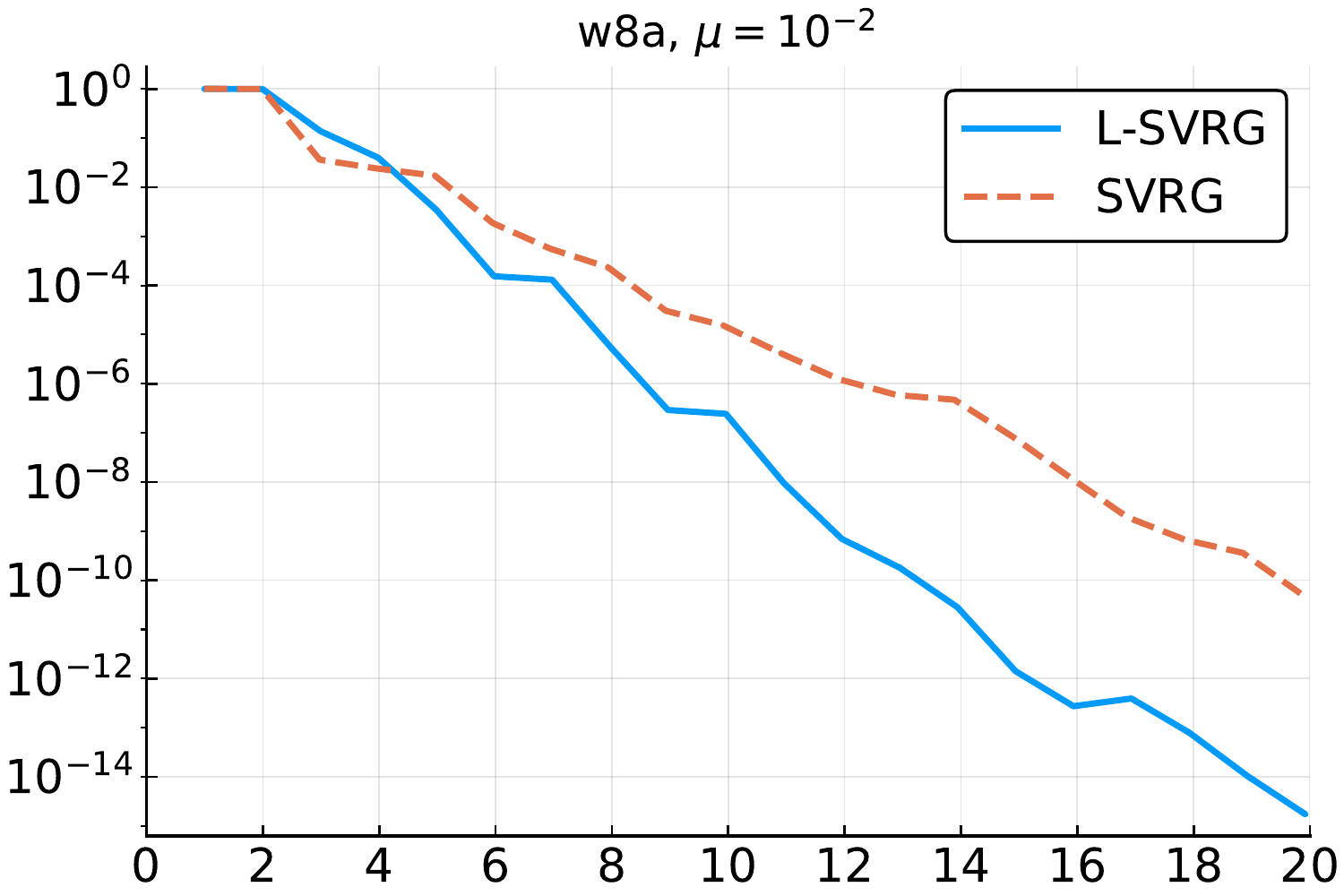}}
	\subfloat{\includegraphics[width=0.25\linewidth]{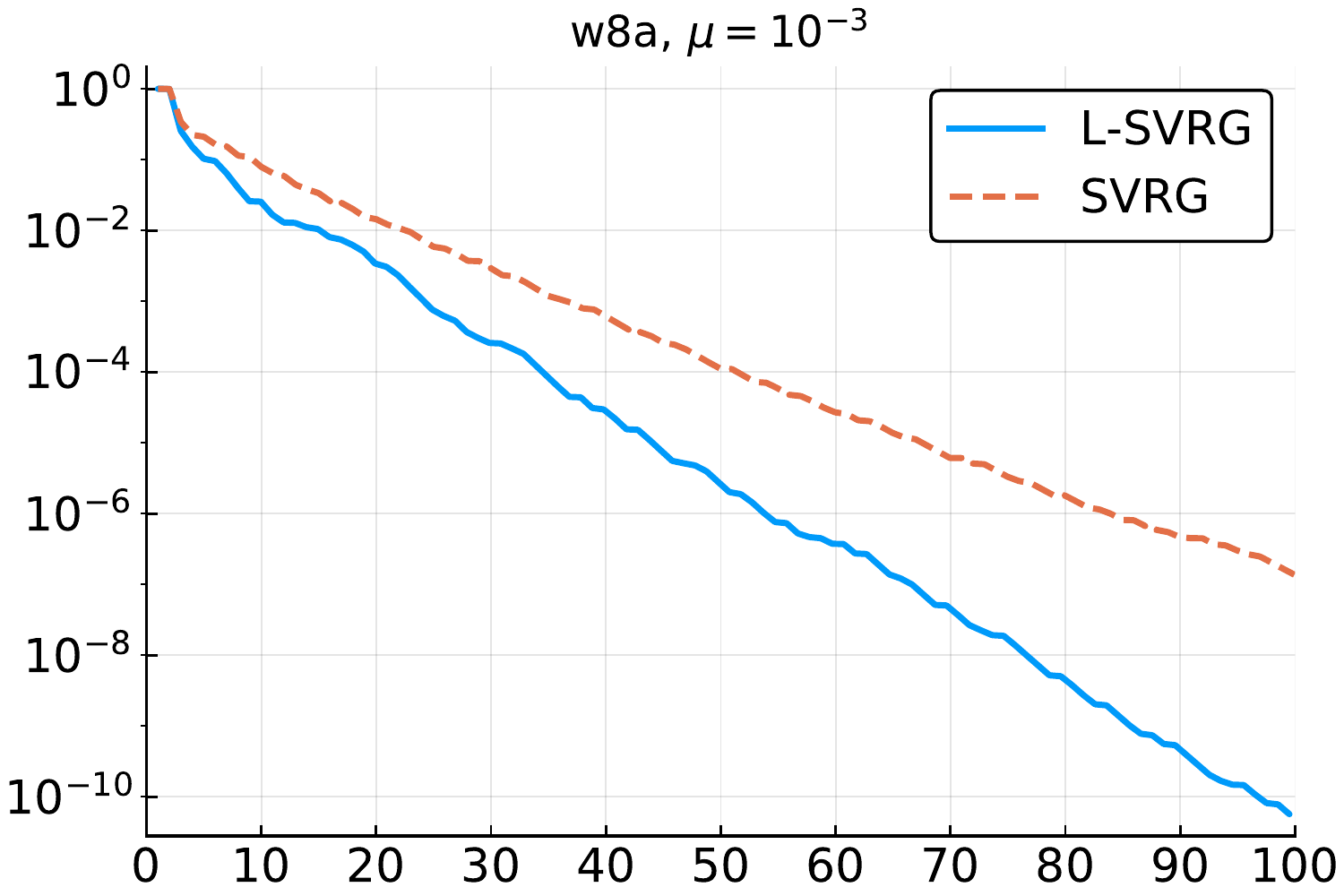}}
	\subfloat{\includegraphics[width=0.25\linewidth]{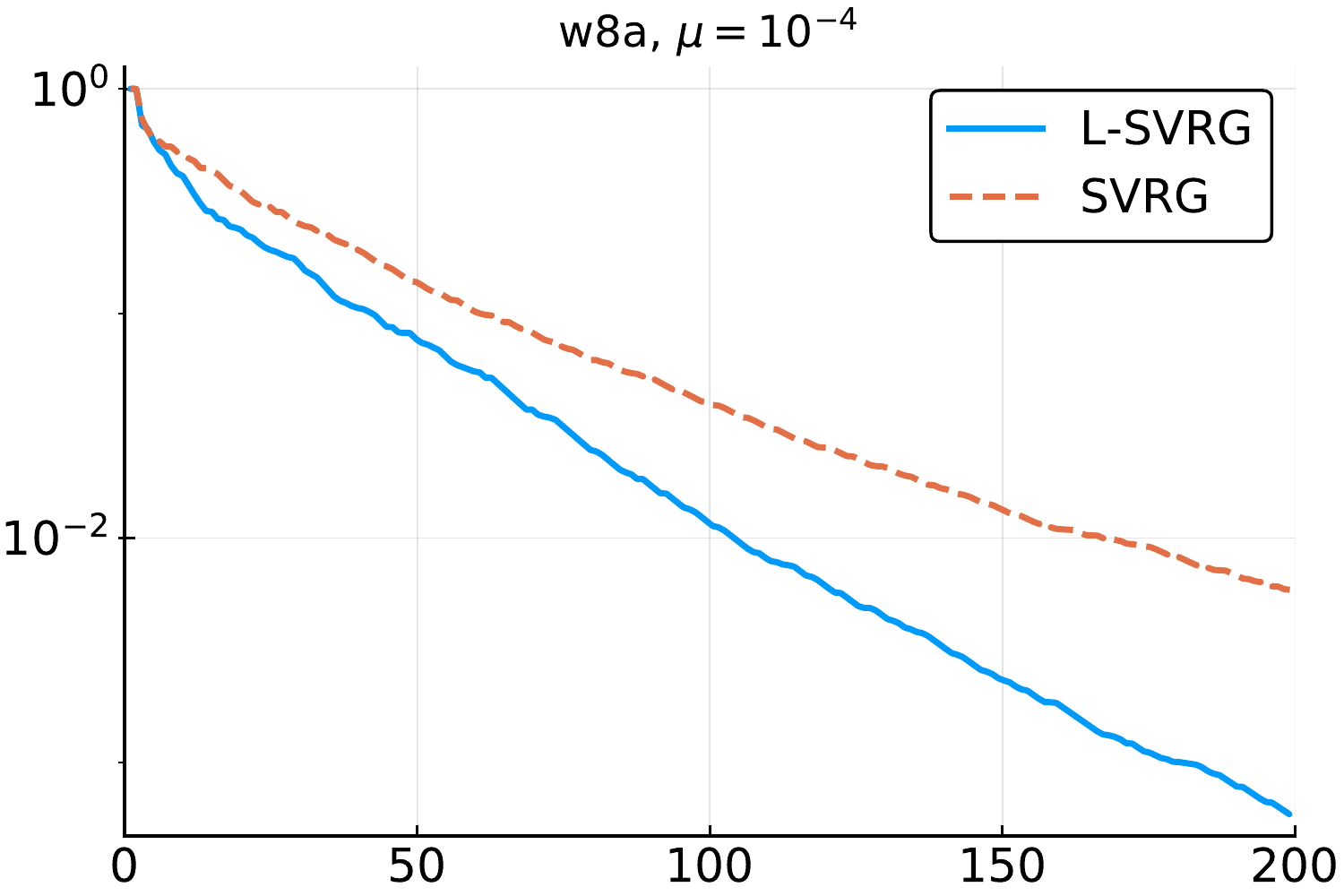}}
	%\subfloat{\includegraphics[width=0.25\linewidth]{}}
	%\subfloat{\includegraphics[width=0.25\linewidth]{}}
	
	\subfloat{\includegraphics[width=0.25\linewidth]{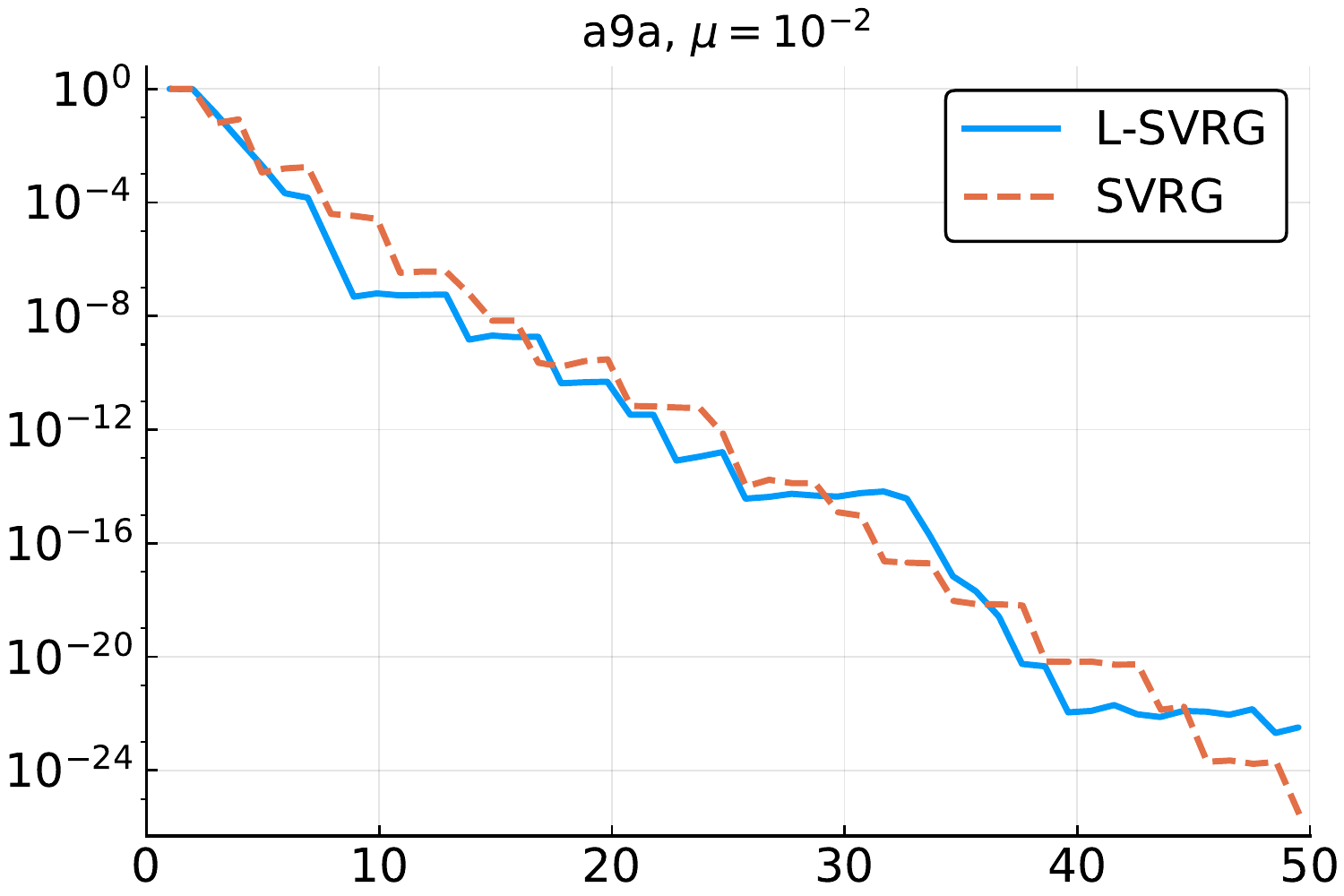}}
	\subfloat{\includegraphics[width=0.25\linewidth]{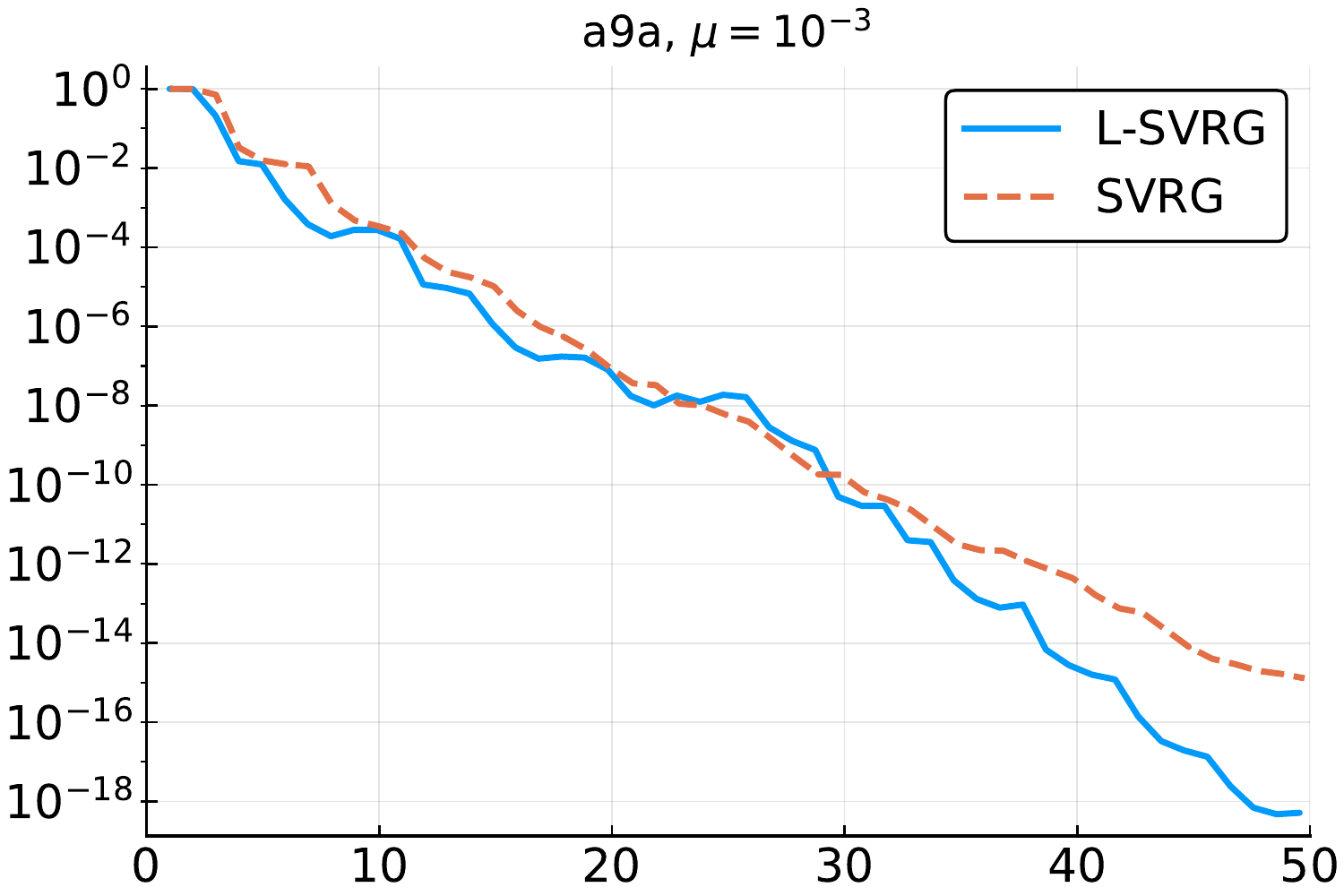}}
	\subfloat{\includegraphics[width=0.25\linewidth]{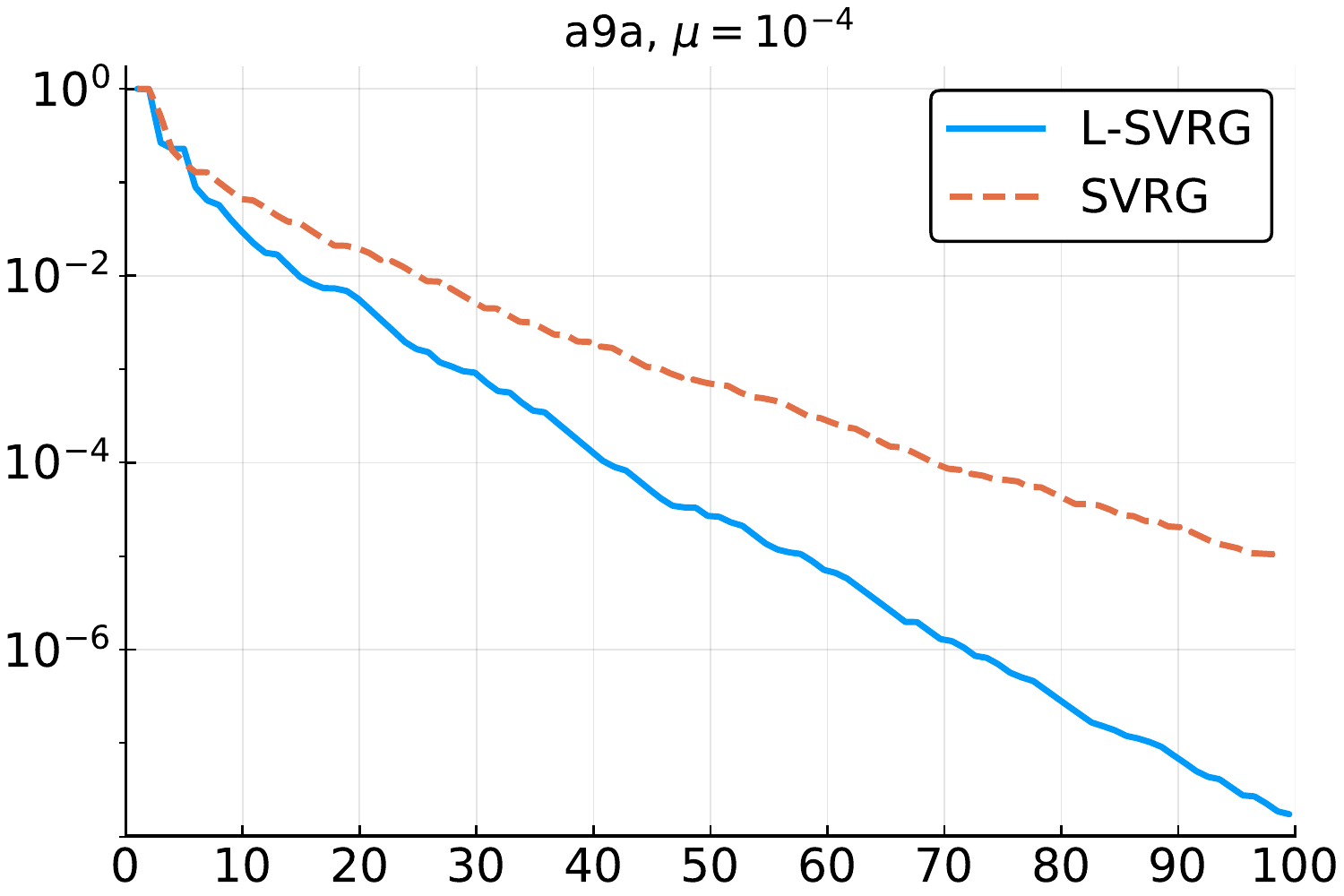}}
	\subfloat{\includegraphics[width=0.25\linewidth]{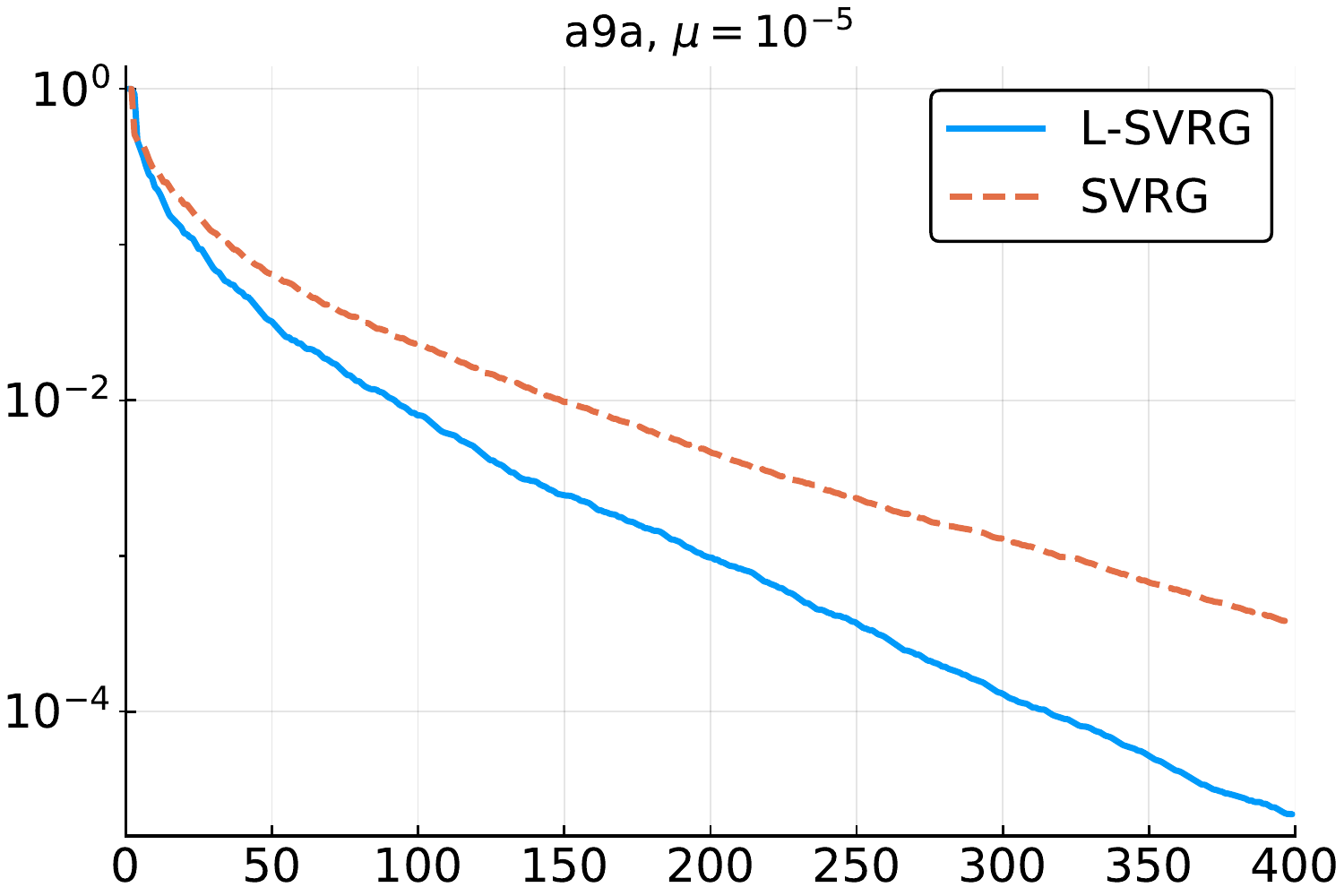}}
	%\subfloat{\includegraphics[width=0.25\linewidth]{}}
	%\subfloat{\includegraphics[width=0.25\linewidth]{}}

	\caption{Comparison of \texttt{SVRG} and  \texttt{L-SVRG} for different datasets and regularizer weights $\mu$.} 
	\label{fig:SVRG}	
\end{figure*}

\begin{lemma}\label{lem:g^k_alg_2}
	We have
	\begin{equation}
	\label{eq:g^k_alg_2}
	\begin{split}
		&\E{\norm{g^k - \nabla f(x^k)}^2} \leq
		2L \left(
			f(w^k) - f(x^k) - \dotprod{\nabla f(x^k)}{w^k - x^k}
		\right).
	\end{split}
	\end{equation}
\end{lemma}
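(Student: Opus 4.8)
The plan is to combine two facts: that $g^k$ is an unbiased estimator of $\nabla f(x^k)$, and that each $f_i$ is convex and $L$--smooth. The first fact lets me rewrite the quantity as a genuine variance and drop it to a second moment; the second lets me convert the resulting averaged squared gradient differences into the Bregman-type expression on the right-hand side.

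First I would set $\xi_i \eqdef \nabla f_i(x^k) - \nabla f_i(w^k)$, so that $g^k = \xi_i + \nabla f(w^k)$ and hence $g^k - \nabla f(x^k) = \xi_i - \E{\xi_i}$, using that $\E{\xi_i} = \nabla f(x^k) - \nabla f(w^k)$ under uniform sampling of $i$ (this is exactly the unbiasedness recorded in \eqref{eq:unbiased}). Thus $g^k - \nabla f(x^k)$ is a centered random vector, and the elementary identity $\E{\norm{Z - \E{Z}}^2} = \E{\norm{Z}^2} - \norm{\E{Z}}^2 \leq \E{\norm{Z}^2}$ applied to $Z = \xi_i$ yields
\[
\E{\norm{g^k - \nabla f(x^k)}^2} \;\leq\; \E{\norm{\xi_i}^2} \;=\; \tfrac{1}{n}\sum_{i=1}^n \norm{\nabla f_i(x^k) - \nabla f_i(w^k)}^2 .
\]
This reduces the problem to bounding the averaged squared gradient differences.

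Next I would invoke, for each individual $f_i$, the standard co-coercivity consequence of convexity and $L$--smoothness,
\[
\tfrac{1}{2L}\norm{\nabla f_i(w^k) - \nabla f_i(x^k)}^2 \;\leq\; f_i(w^k) - f_i(x^k) - \dotprod{\nabla f_i(x^k)}{w^k - x^k}.
\]
To derive this I would introduce $\phi_i(z) \eqdef f_i(z) - \dotprod{\nabla f_i(x^k)}{z}$, which is convex, $L$--smooth, and minimized at $z = x^k$ (since $\nabla \phi_i(x^k) = 0$); comparing $\phi_i(x^k)$ against the value of a single gradient step from $w^k$ with stepsize $\nicefrac{1}{L}$ and applying the descent lemma gives the bound after expanding $\nabla \phi_i(w^k) = \nabla f_i(w^k) - \nabla f_i(x^k)$. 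Summing over $i$, dividing by $n$, and using $f = \tfrac{1}{n}\sum_i f_i$ together with $\nabla f = \tfrac{1}{n}\sum_i \nabla f_i$ collapses the right-hand side to $f(w^k) - f(x^k) - \dotprod{\nabla f(x^k)}{w^k - x^k}$; multiplying by $2L$ and chaining with the previous display produces exactly \eqref{eq:g^k_alg_2}.

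The main point to be careful about is the per-function inequality: it genuinely requires each $f_i$ to be convex, not merely to satisfy the quadratic upper bound in the smoothness assumption, so I would flag convexity of the $f_i$ as the hypothesis being used at this step. The only other bookkeeping is the co-coercivity constant $\nicefrac{1}{2L}$, which upon multiplication by $2L$ reproduces the target factor $2L$ precisely; everything else is the two short reductions above, which is why the whole argument fits on a few lines.
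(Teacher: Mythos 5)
Your proof is correct and follows essentially the same route as the paper's: center $g^k - \nabla f(x^k)$ as a variance, drop to the second moment via \eqref{eq:variance}, and then apply per-function co-coercivity before averaging over $i$. The only difference is that you are more explicit than the paper in deriving the co-coercivity inequality and in flagging that it requires convexity of each $f_i$ (not just the quadratic upper bound in \eqref{def:L-smoothness}), which is a fair and worthwhile observation.
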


Next two lemmas are more technical, but essential for proving the convergence. 
\begin{lemma}\label{lem:wg_alg_2}
	We have
	\begin{equation}
	\label{eq:wg_alg_2}
		\begin{split}
			&\dotprod{g^k}{x^*-z^{k+1}}
			+
			\tfrac{\mu}{2}\norm{x^k - x^*}^2 \geq
			\tfrac{L}{2\eta}\norm{z^k - z^{k+1}}^2
			+
			\cZ^{k+1}
			-
			\tfrac{1}{1+\eta\sigma}\cZ^k.
		\end{split}
	\end{equation}
\end{lemma}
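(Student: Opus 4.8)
The plan is to start from the update rule for $z^{k+1}$ in Algorithm~\ref{alg:2}, namely $z^{k+1} = \tfrac{1}{1+\eta\sigma}\left(\eta\sigma x^k + z^k - \tfrac{\eta}{L}g^k\right)$, and rewrite it as a linear identity that isolates $g^k$. Multiplying through by $(1+\eta\sigma)$ gives $(1+\eta\sigma)z^{k+1} = \eta\sigma x^k + z^k - \tfrac{\eta}{L}g^k$, hence $\tfrac{\eta}{L}g^k = \eta\sigma x^k + z^k - (1+\eta\sigma)z^{k+1}$. Taking the inner product of $g^k$ with $x^* - z^{k+1}$ and using this substitution turns the left-hand side into an expression purely in terms of the points $x^k, z^k, z^{k+1}, x^*$, eliminating $g^k$ entirely. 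So the first step I would carry out is this algebraic elimination, reducing everything to a quadratic form in these vectors.

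After that substitution, I expect the natural tool to be the three-point (or cosine) identity $\dotprod{a-b}{a-c} = \tfrac{1}{2}\norm{a-b}^2 + \tfrac{1}{2}\norm{a-c}^2 - \tfrac{1}{2}\norm{b-c}^2$, applied to the terms involving $z^{k+1}$. The goal is to reassemble $\cZ^{k+1} = \tfrac{L(1+\eta\sigma)}{2\eta}\norm{z^{k+1}-x^*}^2$ and $\tfrac{1}{1+\eta\sigma}\cZ^k = \tfrac{L}{2\eta}\norm{z^k-x^*}^2$ with the correct coefficients, together with the $\tfrac{L}{2\eta}\norm{z^k-z^{k+1}}^2$ term on the right. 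The key is that the coefficient $\tfrac{L(1+\eta\sigma)}{2\eta}$ in $\cZ^{k+1}$ and the factor $\tfrac{1}{1+\eta\sigma}$ multiplying $\cZ^k$ are precisely chosen so that these squared-norm pieces telescope out of the three-point expansion; verifying that the coefficients line up is the bookkeeping core of the argument.

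The remaining terms after the expansion will involve $\eta\sigma x^k$ paired against the various vectors, and here the $\tfrac{\mu}{2}\norm{x^k-x^*}^2$ on the left of \eqref{eq:wg_alg_2} should combine with a $\dotprod{x^k - z^{k+1}}{\cdot}$ cross term (recalling $\sigma = \nicefrac{\mu}{L}$, so $\tfrac{L}{\eta}\cdot\eta\sigma = \mu$). I would expand the $\eta\sigma x^k$ contribution using another application of the three-point identity, now centered to produce a $\norm{x^k - x^*}^2$ term that cancels against the strong-convexity term I moved over, leaving only nonnegative squared-norm remainders that establish the inequality direction. The main obstacle I anticipate is not conceptual but algebraic: keeping the scaling factors $\tfrac{1}{1+\eta\sigma}$, $\eta\sigma$, and $\tfrac{L}{2\eta}$ consistent across all the three-point expansions so that the cross terms cancel exactly and no spurious sign-indefinite term survives. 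In other words, the difficulty is ensuring the identity is tight enough that the inequality holds with the stated coefficients, rather than with a lossy constant.

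I would organize the computation by first deriving the exact identity (an equality) for $\dotprod{g^k}{x^*-z^{k+1}}$ in terms of squared distances, and only at the very end invoke the one genuine inequality (convexity/strong convexity via the $\tfrac{\mu}{2}\norm{x^k-x^*}^2$ term) or simply discard a nonnegative leftover square to pass from the equality to the claimed lower bound. Because the $z$-update is essentially a proximal/mirror-descent step with a strongly convex prox term, I expect the lemma to hold as a clean consequence of completing the square, with the strong-convexity parameter entering only through $\sigma$ in the definition of the step. This is why a single-iteration argument suffices, matching the paper's stated philosophy that no inner-loop aggregation is needed.
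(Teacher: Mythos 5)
Your plan is correct and follows essentially the same route as the paper: isolate $g^k$ from the $z$-update as $\tfrac{\eta}{L}g^k = \eta\sigma(x^k - z^{k+1}) + (z^k - z^{k+1})$, expand the resulting inner products with $z^{k+1}-x^*$ via the three-point identity, match coefficients using $\sigma = \nicefrac{\mu}{L}$ so that $-\tfrac{\mu}{2}\norm{z^{k+1}-x^*}^2 - \tfrac{L}{2\eta}\norm{z^{k+1}-x^*}^2 = -\cZ^{k+1}$, and discard the nonnegative term $\tfrac{\mu}{2}\norm{x^k - z^{k+1}}^2$ as the single lossy step. Your closing remark is the accurate reading: no strong convexity is invoked here, only the dropped square.
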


\begin{lemma}\label{lem:wtheta_alg_2}
	We have
	\begin{equation}
	\label{eq:wtheta_alg_2}
		\begin{split}
			&\tfrac{1}{\theta_1}\left(f(y^{k+1}) - f(x^k)\right)
			-
			\tfrac{\theta_2}{2L\theta_1} \norm{g^k - \nabla f(x^k)}^2 \leq
			\tfrac{L}{2\eta} \norm{z^{k+1} - z^k}^2
			+
			\dotprod{g^k}{z^{k+1} - z^k}.
		\end{split}
	\end{equation}
\end{lemma}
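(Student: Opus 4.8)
The plan is to obtain this inequality directly from $L$-smoothness of $f$, taking advantage of the clean identity $y^{k+1}-x^k=\theta_1(z^{k+1}-z^k)$ supplied by the $y$-update in Algorithm~\ref{alg:2}. First I would apply the smoothness inequality \eqref{def:L-smoothness} (which holds for $f$) at the pair $x=x^k$, $y=y^{k+1}$, obtaining
\[
f(y^{k+1}) \leq f(x^k) + \dotprod{\nabla f(x^k)}{y^{k+1}-x^k} + \tfrac{L}{2}\norm{y^{k+1}-x^k}^2.
\]
Substituting $y^{k+1}-x^k=\theta_1(z^{k+1}-z^k)$ and dividing through by $\theta_1$ turns this into
\[
\tfrac{1}{\theta_1}\left(f(y^{k+1})-f(x^k)\right) \leq \dotprod{\nabla f(x^k)}{z^{k+1}-z^k} + \tfrac{L\theta_1}{2}\norm{z^{k+1}-z^k}^2.
\]

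Since the target statement features $g^k$ rather than $\nabla f(x^k)$ inside the inner product, the next step is to write $\nabla f(x^k)=g^k-(g^k-\nabla f(x^k))$ and split the inner product. The part $\dotprod{g^k}{z^{k+1}-z^k}$ is exactly the term appearing on the right-hand side of the claim, so the only remaining work is to bound the leftover $-\dotprod{g^k-\nabla f(x^k)}{z^{k+1}-z^k}$ from above. For this I would use Young's inequality with a free parameter $\beta>0$,
\[
-\dotprod{g^k-\nabla f(x^k)}{z^{k+1}-z^k} \leq \tfrac{1}{2\beta}\norm{g^k-\nabla f(x^k)}^2 + \tfrac{\beta}{2}\norm{z^{k+1}-z^k}^2,
\]
which introduces precisely the variance term $\norm{g^k-\nabla f(x^k)}^2$ required on the left of the statement.

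The crux is to pick $\beta$ so that the variance coefficient matches the prescribed $\tfrac{\theta_2}{2L\theta_1}$; equating $\tfrac{1}{2\beta}=\tfrac{\theta_2}{2L\theta_1}$ forces $\beta=\tfrac{L\theta_1}{\theta_2}$. With this choice the accumulated coefficient of $\norm{z^{k+1}-z^k}^2$ becomes $\tfrac{L\theta_1}{2}+\tfrac{\beta}{2}=\tfrac{L\theta_1}{2}\cdot\tfrac{1+\theta_2}{\theta_2}$, and here is where the specific stepsize $\eta=\tfrac{\theta_2}{(1+\theta_2)\theta_1}$ does its job: it makes this quantity equal to $\tfrac{L}{2\eta}$ exactly, closing the bound with no residual slack. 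I do not expect a genuine obstacle here — the argument is a short smoothness-plus-Young computation — and the only delicate point is confirming that the prescribed $\eta$ reconciles the two coefficients simultaneously, which is precisely the reason $\eta$ is defined the way it is.
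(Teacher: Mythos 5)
Your proposal is correct and is essentially the paper's own argument read in the opposite direction: the paper starts from the right-hand side, rescales via $y^{k+1}-x^k=\theta_1(z^{k+1}-z^k)$, splits $g^k$ into $\nabla f(x^k)$ plus the deviation, and applies smoothness and Young's inequality with the parameter tuned so that $\tfrac{\eta\theta_1}{1-\eta\theta_1}=\theta_2$, which is exactly the same coefficient identity you verify when matching $\tfrac{L\theta_1}{2}\cdot\tfrac{1+\theta_2}{\theta_2}$ to $\tfrac{L}{2\eta}$. The steps, the key identity from the $y$-update, and the role of the stepsize definition all coincide with the paper's proof.
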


Finally, we use the update of Algorithm~\ref{alg:2} to decompose $\cW^{k+1}$ in terms of $\cW^k$ and $\cY^k$, which is one of the main components that allow for simpler analysis than the one of original \texttt{Katyusha}.
\begin{lemma}\label{lem:W^k_alg_2}
	We have
	\begin{equation}
	\label{eq:W^k_alg_2}
	\begin{split}
	\E{\cW^{k+1}} = (1-p)\cW^k + \theta_2(1+\theta_1) \cY^k.
	\end{split}
	\end{equation}
\end{lemma}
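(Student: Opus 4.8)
The plan is to compute the conditional expectation $\E{\cW^{k+1}}$ directly from the definition $\cW^{k+1} = \tfrac{\theta_2(1+\theta_1)}{p\theta_1}\left(f(w^{k+1}) - f(x^*)\right)$ together with the probabilistic update rule for $w^{k+1}$ in Algorithm~\ref{alg:2}. Since the prefactor is deterministic, the only randomness to resolve is the coin flip that sets $w^{k+1}$, so this lemma is a pure bookkeeping step with no inequalities involved and equality can be expected throughout.

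First I would use the update $w^{k+1} = y^k$ with probability $p$ and $w^{k+1} = w^k$ with probability $1-p$. By linearity of the conditional expectation (taken over the coin flip only, following the notation convention), this gives
\[
\E{f(w^{k+1}) - f(x^*)} = p\left(f(y^k) - f(x^*)\right) + (1-p)\left(f(w^k) - f(x^*)\right).
\]
Multiplying both sides by $\tfrac{\theta_2(1+\theta_1)}{p\theta_1}$, the factor $p$ in the denominator cancels against the $p$ multiplying the $y^k$-term, turning it into $\theta_2(1+\theta_1)\cdot\tfrac{1}{\theta_1}\left(f(y^k)-f(x^*)\right) = \theta_2(1+\theta_1)\,\cY^k$; the $w^k$-term becomes $(1-p)\cdot\tfrac{\theta_2(1+\theta_1)}{p\theta_1}\left(f(w^k)-f(x^*)\right) = (1-p)\,\cW^k$. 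Adding the two recovers the claimed identity.

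There is no substantive obstacle here. The one point meriting mild care is ensuring that the expectation is the conditional one over the single coin flip, which is precisely the convention fixed in the notation paragraph; and observing that the factor $\tfrac{1}{p}$ appearing in the definition of $\cW$ was chosen exactly so that this cancellation produces the clean coefficients $(1-p)$ and $\theta_2(1+\theta_1)$ in the recursion. This decomposition is what later allows the $\cW$-term to absorb the $y^k$ contribution and close the single-iteration Lyapunov recursion for \ASMs{}.
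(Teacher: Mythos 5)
Your proof is correct and is essentially identical to the paper's: both compute $\E{f(w^{k+1})} = (1-p)f(w^k) + pf(y^k)$ from the coin-flip update and then multiply by the deterministic prefactor $\tfrac{\theta_2(1+\theta_1)}{p\theta_1}$, with the $p$ cancelling to yield $\theta_2(1+\theta_1)\cY^k$. Your write-up just spells out the cancellation that the paper leaves as ``the rest follows from the definition of $\cW^k$.''
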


Putting all lemmas together, we obtain the following contraction of the Lyapunov function  \eqref{def:Lyap2}.

\begin{lemma}\label{lem:conv_alg_2}
	Let $\theta_1, \theta_2 > 0$, $\theta_1 + \theta_2 \leq 1$, $\sigma = \frac{\mu}{L}$ and $\eta = \frac{\theta_2}{(1+\theta_2)\theta_1}$, then we have 
	\begin{equation}
	\label{eq:conv_alg_2}
	\begin{split}
	&\E{
		\cZ^{k+1}
		+
		\cY^{k+1}
		+
		\cW^{k+1}
	}\leq
	\tfrac{1}{1+\eta\sigma}\cZ^k
	+
	(1-\theta_1(1-\theta_2))\cY^k  +
	\left(1 - \tfrac{p\theta_1}{1+\theta_1}\right)\cW^k.
	\end{split}
	\end{equation}
\end{lemma}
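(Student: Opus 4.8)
The plan is to assemble the five preceding lemmas additively, take conditional expectation, exploit the unbiasedness $\E{g^k}=\nabla f(x^k)$, and then let a chain of exact cancellations collapse everything onto the claimed contraction. First I would add Lemma~\ref{lem:wtheta_alg_2} to the rearranged Lemma~\ref{lem:wg_alg_2} (both read as upper bounds, one on $\tfrac{1}{\theta_1}(f(y^{k+1})-f(x^k))$ and one on $\cZ^{k+1}$). The decisive feature is that the two $\tfrac{L}{2\eta}\norm{z^{k+1}-z^k}^2$ terms appear with opposite signs and cancel identically, while the surviving inner products telescope via $\dotprod{g^k}{z^{k+1}-z^k}+\dotprod{g^k}{x^*-z^{k+1}}=\dotprod{g^k}{x^*-z^k}$. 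This collapse to $\dotprod{g^k}{x^*-z^k}$ is essential because $z^k$ is deterministic under the conditioning, so after taking expectation the inner product becomes $\dotprod{\nabla f(x^k)}{x^*-z^k}$; simultaneously I would bound the variance term $\tfrac{\theta_2}{2L\theta_1}\E{\norm{g^k-\nabla f(x^k)}^2}$ using Lemma~\ref{lem:g^k_alg_2}, turning it into $\tfrac{\theta_2}{\theta_1}\left(f(w^k)-f(x^k)-\dotprod{\nabla f(x^k)}{w^k-x^k}\right)$.

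Next I would expose the Lyapunov pieces. Writing $\tfrac{1}{\theta_1}(f(y^{k+1})-f(x^k))=\cY^{k+1}-\tfrac{1}{\theta_1}(f(x^k)-f(x^*))$ produces $\cY^{k+1}$ on the left, and adding Lemma~\ref{lem:W^k_alg_2} brings in $\E{\cW^{k+1}}$ on the left and $(1-p)\cW^k+\theta_2(1+\theta_1)\cY^k$ on the right. The heart of the argument is to expand $\dotprod{\nabla f(x^k)}{x^*-z^k}$ through the definition $x^k=\theta_1 z^k+\theta_2 w^k+(1-\theta_1-\theta_2)y^k$, which rewrites $\theta_1(x^*-z^k)$ as $\theta_1(x^*-x^k)+\theta_2(w^k-x^k)+(1-\theta_1-\theta_2)(y^k-x^k)$. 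The $\theta_2(w^k-x^k)$ piece then cancels the gradient inner product left over from the variance bound, eliminating all dependence on $\dotprod{\nabla f(x^k)}{w^k-x^k}$.

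To fix the $\cW^k$ coefficient I would use the identity $\tfrac{p}{1+\theta_1}\cW^k=\tfrac{\theta_2}{\theta_1}(f(w^k)-f(x^*))$, which converts the $(1-p)\cW^k$ from Lemma~\ref{lem:W^k_alg_2} together with the leftover $f(w^k)$ contribution into the target coefficient $1-\tfrac{p\theta_1}{1+\theta_1}$, cancelling $f(w^k)$ entirely. Finally I would close with convexity and strong convexity: applying $\dotprod{\nabla f(x^k)}{x^*-x^k}\le f(x^*)-f(x^k)-\tfrac{\mu}{2}\norm{x^*-x^k}^2$ cancels the stray $\tfrac{\mu}{2}\norm{x^k-x^*}^2$ inherited from Lemma~\ref{lem:wg_alg_2}, and $\dotprod{\nabla f(x^k)}{y^k-x^k}\le f(y^k)-f(x^k)$ (applied with the nonnegative weight $1-\theta_1-\theta_2$) supplies the $f(y^k)$ dependence. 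Collecting the $f(x^k)-f(x^*)$ and $f(y^k)-f(x^k)$ contributions then telescopes to exactly $(1-\theta_1-\theta_2)(f(y^k)-f(x^*))$, matching the residual $\cY^k$-coefficient demanded by $1-\theta_1(1-\theta_2)-\theta_2(1+\theta_1)=1-\theta_1-\theta_2$.

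I expect the main obstacle to be organizational rather than conceptual: one must track the several $f(x^k)$, $f(w^k)$, $f(y^k)$ and inner-product terms so that each cancellation lands exactly — the $\norm{z^{k+1}-z^k}^2$ pair, the $w^k$ inner product, and the $\tfrac{\mu}{2}\norm{x^k-x^*}^2$ term — with the parameter choices $\sigma=\mu/L$ and $\eta=\tfrac{\theta_2}{(1+\theta_2)\theta_1}$ entering only through the cited lemmas. In the final assembly the sole structural requirement is $\theta_1+\theta_2\le1$, needed so that the convexity inequality in $\dotprod{\nabla f(x^k)}{y^k-x^k}$ is applied with a nonnegative coefficient.
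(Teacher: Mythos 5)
Your proposal is correct and follows essentially the same route as the paper's proof: the same five lemmas are combined with the identical cancellations (the $\tfrac{L}{2\eta}\norm{z^{k+1}-z^k}^2$ pair, the telescoping of the inner products to $\dotprod{g^k}{x^*-z^k}$, the elimination of $\dotprod{\nabla f(x^k)}{w^k-x^k}$ via Lemma~\ref{lem:g^k_alg_2}, the $\tfrac{\mu}{2}\norm{x^k-x^*}^2$ cancellation via strong convexity, and the identification $\tfrac{p}{1+\theta_1}\cW^k=\tfrac{\theta_2}{\theta_1}(f(w^k)-f(x^*))$). The only difference is presentational — the paper starts the chain from the strong convexity inequality and substitutes the lemmas downward, whereas you add the lemmas first and then substitute — and your coefficient bookkeeping ($1-\theta_1(1-\theta_2)-\theta_2(1+\theta_1)=1-\theta_1-\theta_2$) checks out.
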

In order to obtain a recursion involving the Lyapunov function on the right-hand side  of \eqref{eq:conv_alg_2}		
\begin{theorem}\label{thm:2}
	Let $\theta_1 = \min \{\sqrt{\nicefrac{2\sigma n}{3}}, \nicefrac{1}{2} \}$, $\theta_2 = \nicefrac{1}{2}$, $p = \nicefrac{1}{n}$.
	Then $\E{\Psi^{k}} \leq   \varepsilon\Psi^0$ after the following number of iterations:
$
		k = \cO( (n + \sqrt{\nicefrac{nL}{\mu}}) \log \nicefrac{1}{\varepsilon}) .
$
\end{theorem}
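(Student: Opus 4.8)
The plan is to collapse the three separate contraction factors appearing in Lemma~\ref{lem:conv_alg_2} into a single geometric rate. First I would check that the stated parameters satisfy that lemma's hypotheses: with $\theta_2 = \nicefrac12$ and $\theta_1 = \min\{\sqrt{\nicefrac{2\sigma n}{3}}, \nicefrac12\} \le \nicefrac12$ we have $\theta_1,\theta_2 > 0$, $\theta_1 + \theta_2 \le 1$, and $\eta = \frac{\theta_2}{(1+\theta_2)\theta_1} = \frac{1}{3\theta_1}$, so \eqref{eq:conv_alg_2} applies. Writing $1 - \frac{1}{1+\eta\sigma} = \frac{\eta\sigma}{1+\eta\sigma}$, I would bound each of the three coefficients on the right-hand side of \eqref{eq:conv_alg_2} by a common factor $1-\rho$, where
\begin{equation*}
\rho \eqdef \min\left\{\tfrac{\eta\sigma}{1+\eta\sigma},\; \theta_1(1-\theta_2),\; \tfrac{p\theta_1}{1+\theta_1}\right\}.
\end{equation*}
Since $\Psi^k = \cZ^k + \cY^k + \cW^k$ with all three summands nonnegative, this gives the clean one-step contraction $\E{\Psi^{k+1}} \le (1-\rho)\Psi^k$.

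Next I would take unconditional expectations and unroll the recursion via the tower property to obtain $\E{\Psi^k} \le (1-\rho)^k \Psi^0$. Using the elementary inequality $(1-\rho)^k \le e^{-\rho k}$, the target accuracy $\E{\Psi^k} \le \varepsilon \Psi^0$ is reached as soon as $k \ge \frac{1}{\rho}\log\frac{1}{\varepsilon}$, so the entire theorem reduces to showing that $\frac{1}{\rho} = \cO(n + \sqrt{\nicefrac{nL}{\mu}})$.

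The heart of the argument, and the step I expect to be most delicate, is estimating $\rho$ under the stated parameter values, which forces a case split driven by the $\min$ in the definition of $\theta_1$. Substituting $\eta = \frac{1}{3\theta_1}$, $\theta_2 = \nicefrac12$, $p = \nicefrac1n$, the three quantities become $\frac{\eta\sigma}{1+\eta\sigma}$ (with $\eta\sigma = \frac{\sigma}{3\theta_1}$), $\frac{\theta_1}{2}$, and $\frac{\theta_1}{n(1+\theta_1)}$. In the ill-conditioned regime $\theta_1 = \sqrt{\nicefrac{2\sigma n}{3}}$ a short computation gives $\eta\sigma = \sqrt{\nicefrac{\sigma}{6n}}$; since $\eta\sigma \le \nicefrac12$ one has $\frac{\eta\sigma}{1+\eta\sigma} = \Theta(\sqrt{\sigma/n})$, while $\frac{\theta_1}{n(1+\theta_1)} = \Theta(\theta_1/n) = \Theta(\sqrt{\sigma/n})$ and $\frac{\theta_1}{2} = \Theta(\sqrt{\sigma n})$ is strictly larger, so $\rho = \Theta(\sqrt{\sigma/n})$ and $\frac1\rho = \cO(\sqrt{n/\sigma}) = \cO(\sqrt{\nicefrac{nL}{\mu}})$. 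In the well-conditioned regime $\theta_1 = \nicefrac12$ (which forces $\sqrt{\nicefrac{2\sigma n}{3}} > \nicefrac12$, i.e. $\nicefrac{L}{\mu} = \cO(n)$) we get $\eta = \nicefrac23$, $\eta\sigma = \frac{2\mu}{3L}$, hence $\frac{\eta\sigma}{1+\eta\sigma} = \Theta(\nicefrac{\mu}{L})$, $\frac{\theta_1}{2} = \nicefrac14$, and $\frac{\theta_1}{n(1+\theta_1)} = \frac{1}{3n}$, so $\rho = \Theta(\min\{\nicefrac{\mu}{L}, \nicefrac1n\})$ and $\frac1\rho = \cO(\max\{\nicefrac{L}{\mu}, n\}) = \cO(n)$ on this regime. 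Combining the two cases yields $\frac1\rho = \cO(n + \sqrt{\nicefrac{nL}{\mu}})$ and hence the claimed iteration complexity. The only genuine care required is in handling the factor $\frac{\eta\sigma}{1+\eta\sigma}$: one must keep the denominator $1+\eta\sigma$ as a harmless $\Theta(1)$ quantity rather than letting it inflate the rate, and track the absolute constants through the $\min$ so that the dominant term is correctly identified in each regime.
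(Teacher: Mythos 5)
Your proposal is correct and follows essentially the same route as the paper: collapse the three coefficients in Lemma~\ref{lem:conv_alg_2} into a single contraction factor (the paper writes it as $1-\theta$ with $\theta = \min\{\nicefrac{\sigma}{6\theta_1}, \nicefrac{\theta_1}{2n}\}$, which matches your $\rho$ after bounding $1+\eta\sigma$ and $1+\theta_1$ by constants), unroll via the tower property, and estimate $\nicefrac{1}{\rho}$ by a case split on the $\min$ defining $\theta_1$. Your write-up is in fact more explicit than the paper's about the case analysis and the verification that $\eta\sigma$ stays bounded, but the argument is the same.
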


\begin{proof}
From Lemma~\ref{lem:conv_alg_2} we get 
$
\E{\Psi^{k+1}} \leq \max\left\{\nicefrac{1}{(1+\eta\sigma)}, 1-\theta_1(1-\theta_2),1 - \nicefrac{p\theta_1}{(1+\theta_1)}\right\} \Psi^{k}.
$
Setting  $p = \nicefrac{1}{n}$, $\theta_1 = \min\{\sqrt{\nicefrac{2\sigma n}{3}}, \nicefrac{1}{2}\}$, $\theta_2 = \nicefrac{1}{2}$, and unrolling conditional probability one obtains 
$\E{\Psi^{k+1}} \leq (1-\theta) \E{\Psi^{k}}$, where $\theta = \min\left\{   \nicefrac{\sigma}{6 \theta_1},  \nicefrac{\theta_1}{2n}\right\}
 .$
Choosing $\sigma = \nicefrac{\mu}{L}$ concludes the proof.
\end{proof}

{\bf Discussion.} One can show by analyzing  \eqref{eq:conv_alg_2} that for ill-conditioned problems ($n < \nicefrac{L}{ \mu}$), the  iteration complexity is $\cO(\sqrt{\nicefrac{L}{\mu p}} \log \nicefrac{1}{\varepsilon} )$.
Algorithm~\ref{alg:2} calls stochastic gradient oracle $\cO(1 + pn)$ times per iteration in expectation.
Thus, the total complexity is
$\cO ( (1+pn)\sqrt{\nicefrac{L}{\mu p}} \log \nicefrac{1}{\varepsilon} )$.
One can see that $p = \Theta\left(\nicefrac{1}{n}\right)$ leads to optimal rate.

\section{Numerical Experiments}

\begin{figure*}[!t]	
	\centering

	\subfloat{\includegraphics[width=0.25\linewidth]{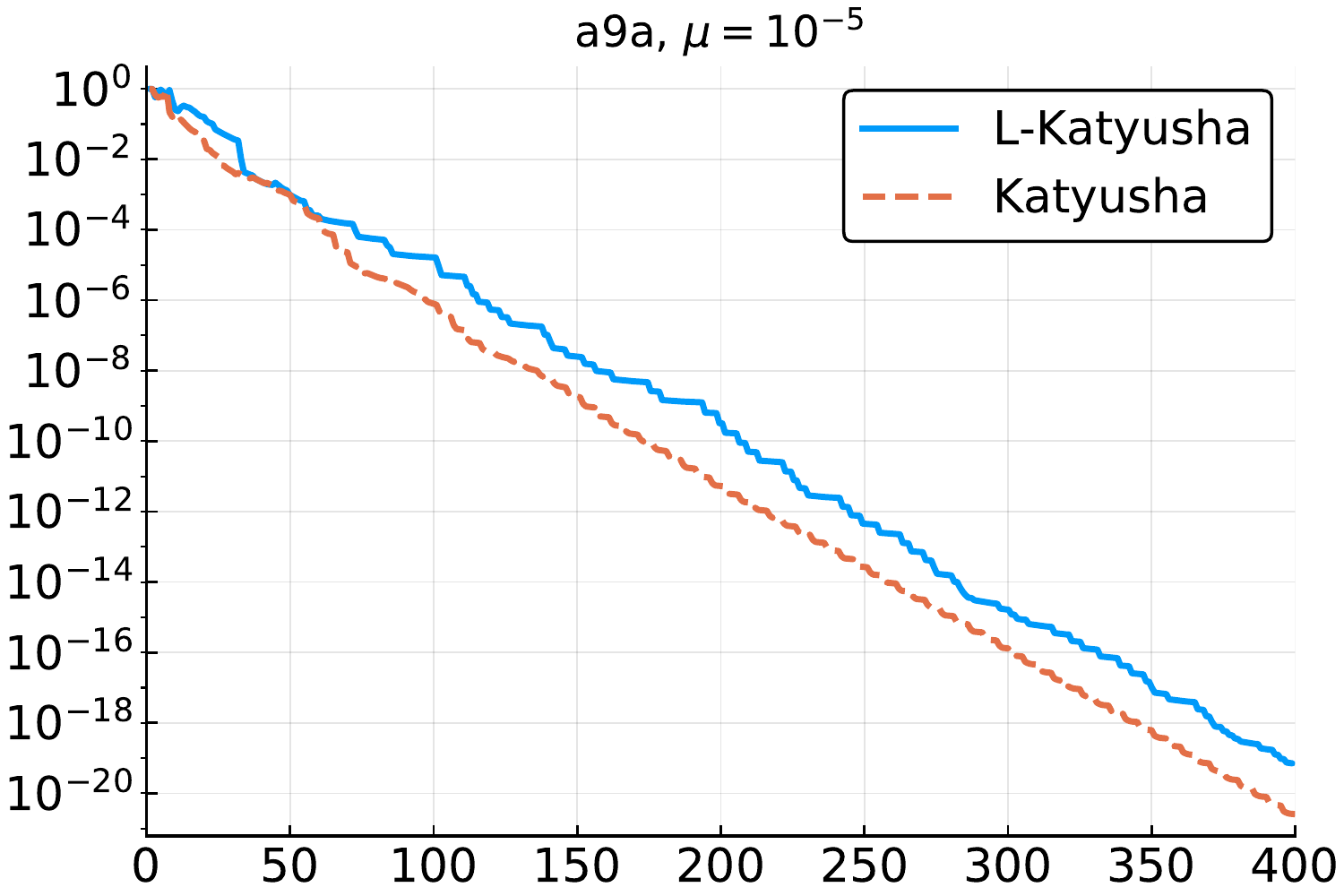}}
	\subfloat{\includegraphics[width=0.25\linewidth]{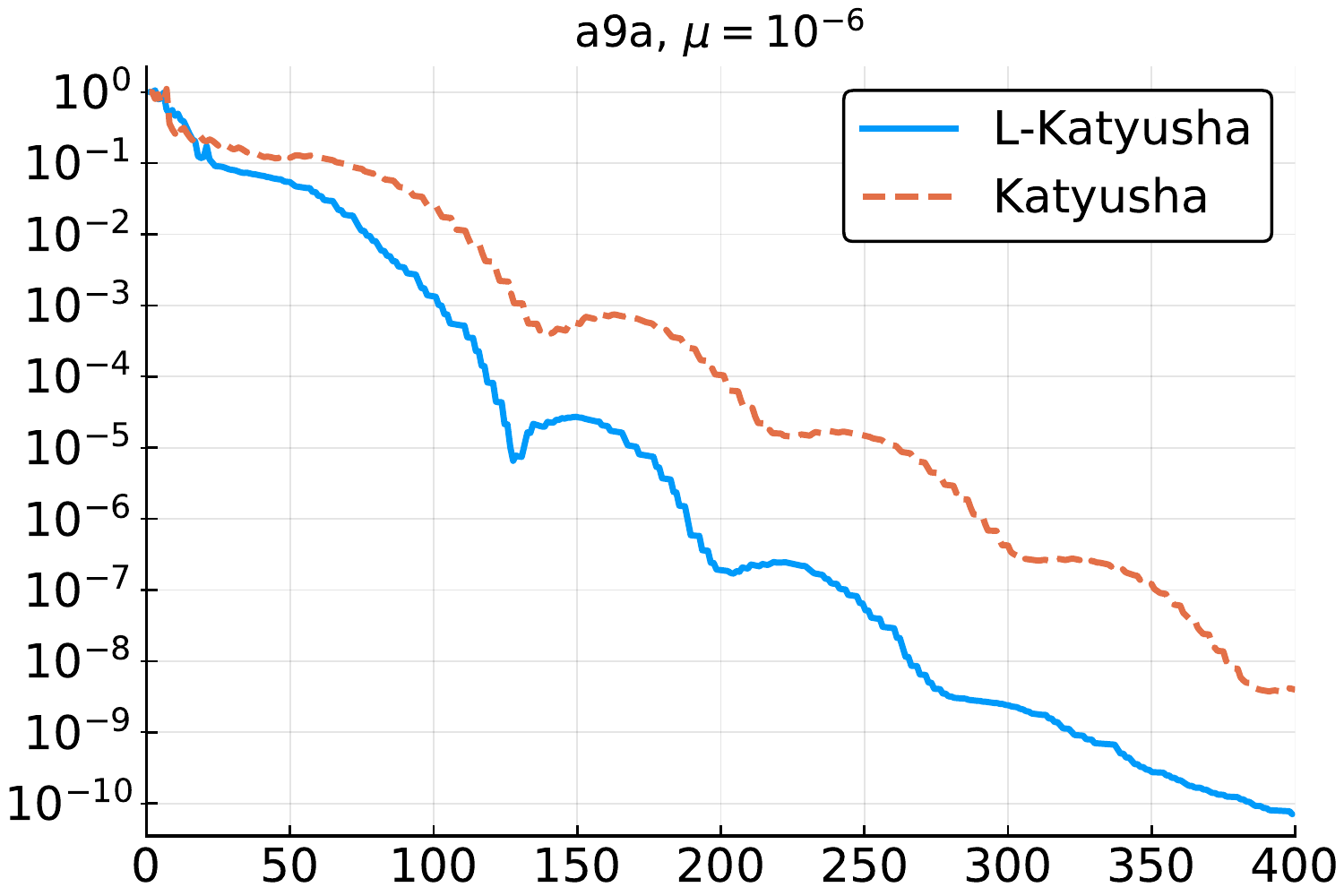}}
	\subfloat{\includegraphics[width=0.25\linewidth]{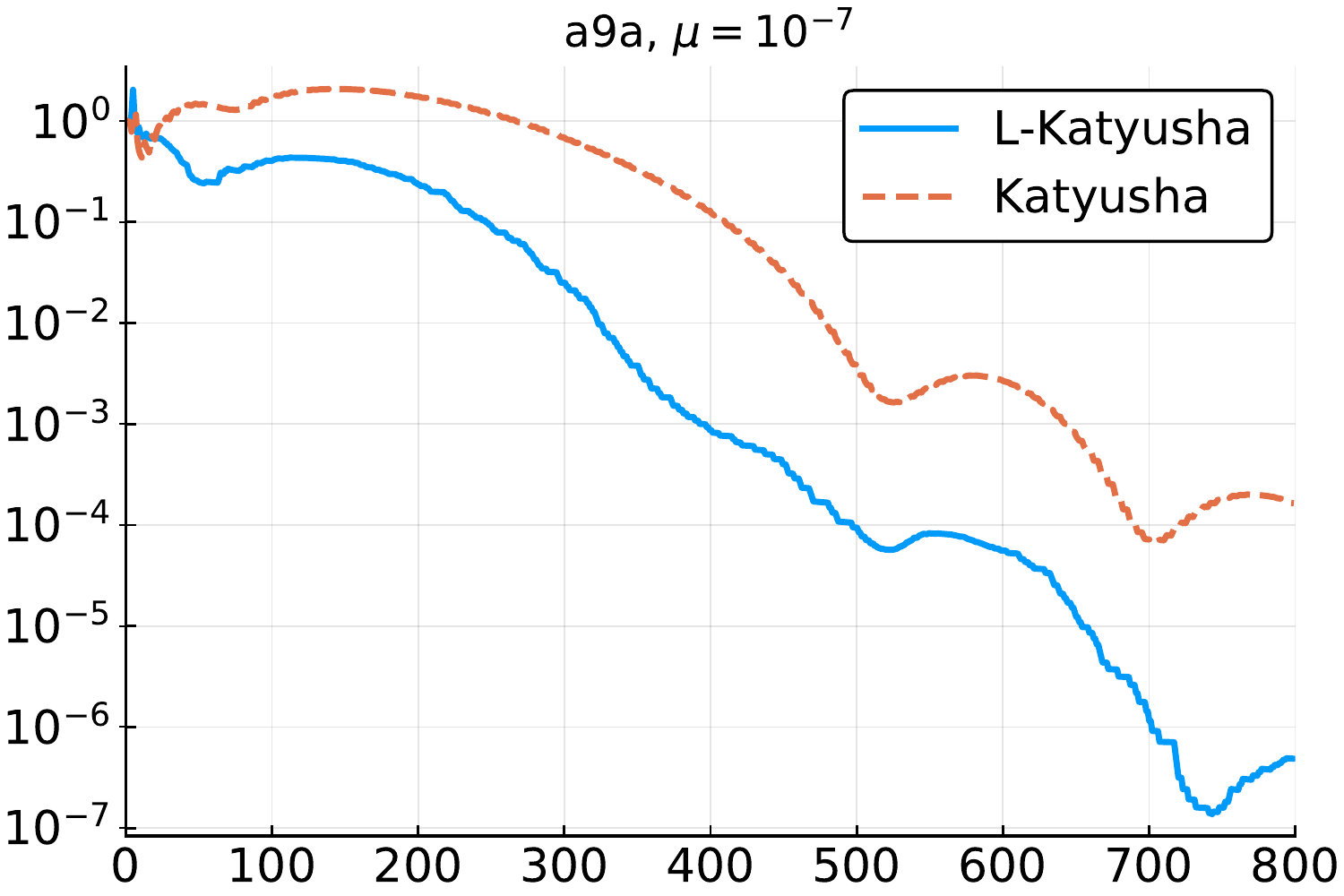}}
	
	\subfloat{\includegraphics[width=0.25\linewidth]{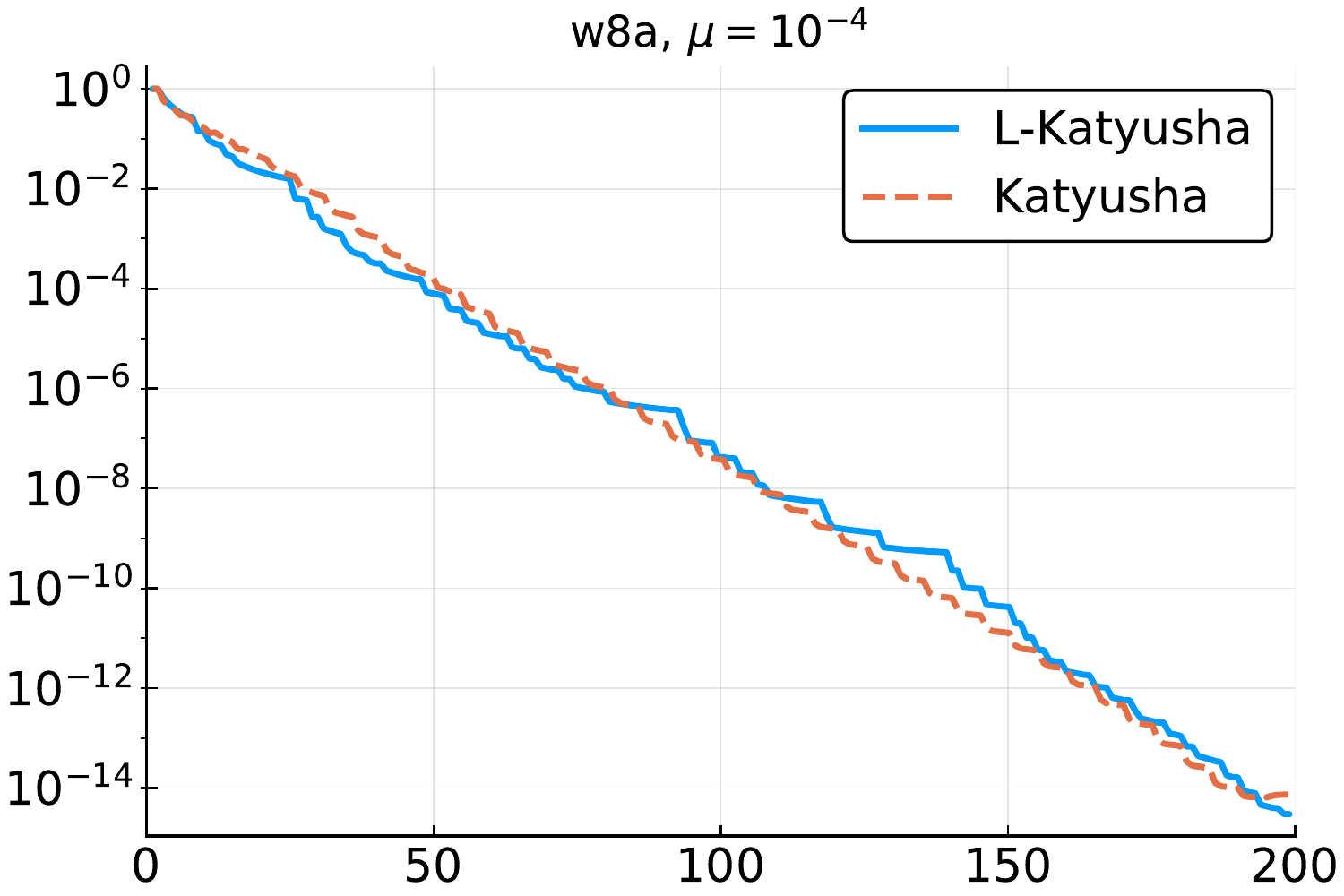}}
	\subfloat{\includegraphics[width=0.25\linewidth]{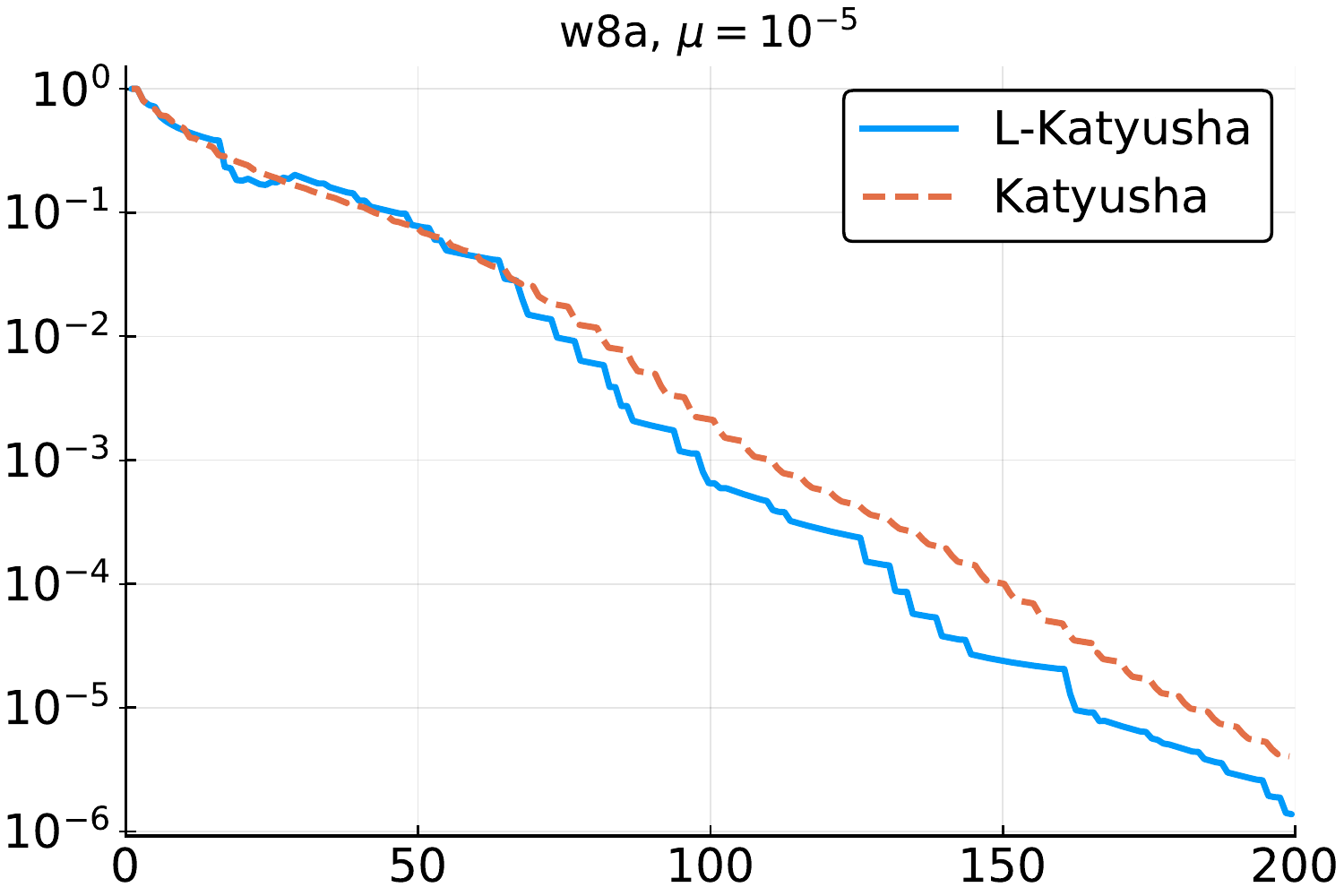}}
	\subfloat{\includegraphics[width=0.25\linewidth]{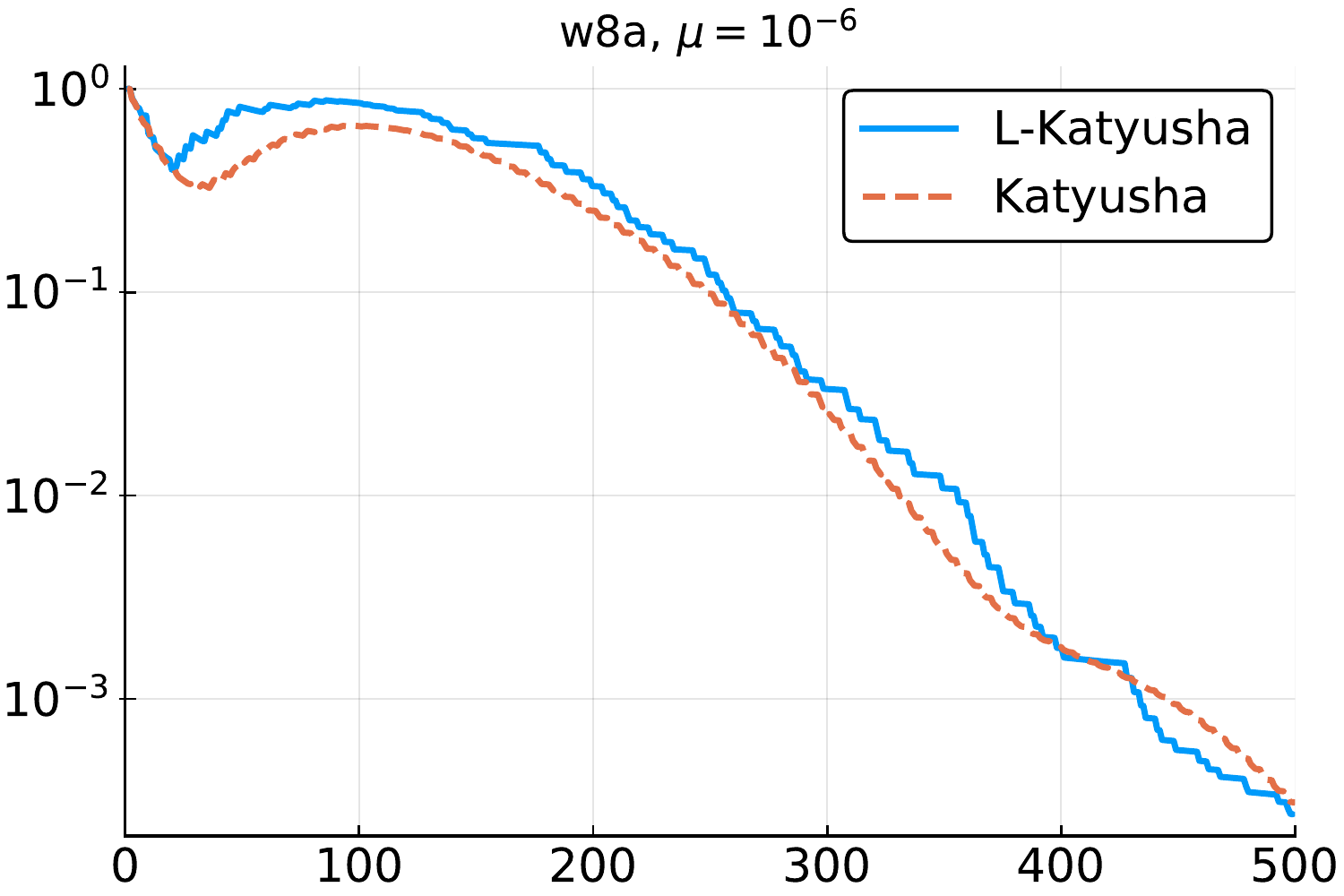}}
	
	\caption{Comparison of \texttt{Katyusha} \&  \texttt{L-Katyusha} for different datasets and regularizer weights $\mu$.} 
	\label{fig:katyusha}
\end{figure*}

\begin{figure*}[t]
	\centering

	\subfloat{\includegraphics[width=0.25\linewidth]{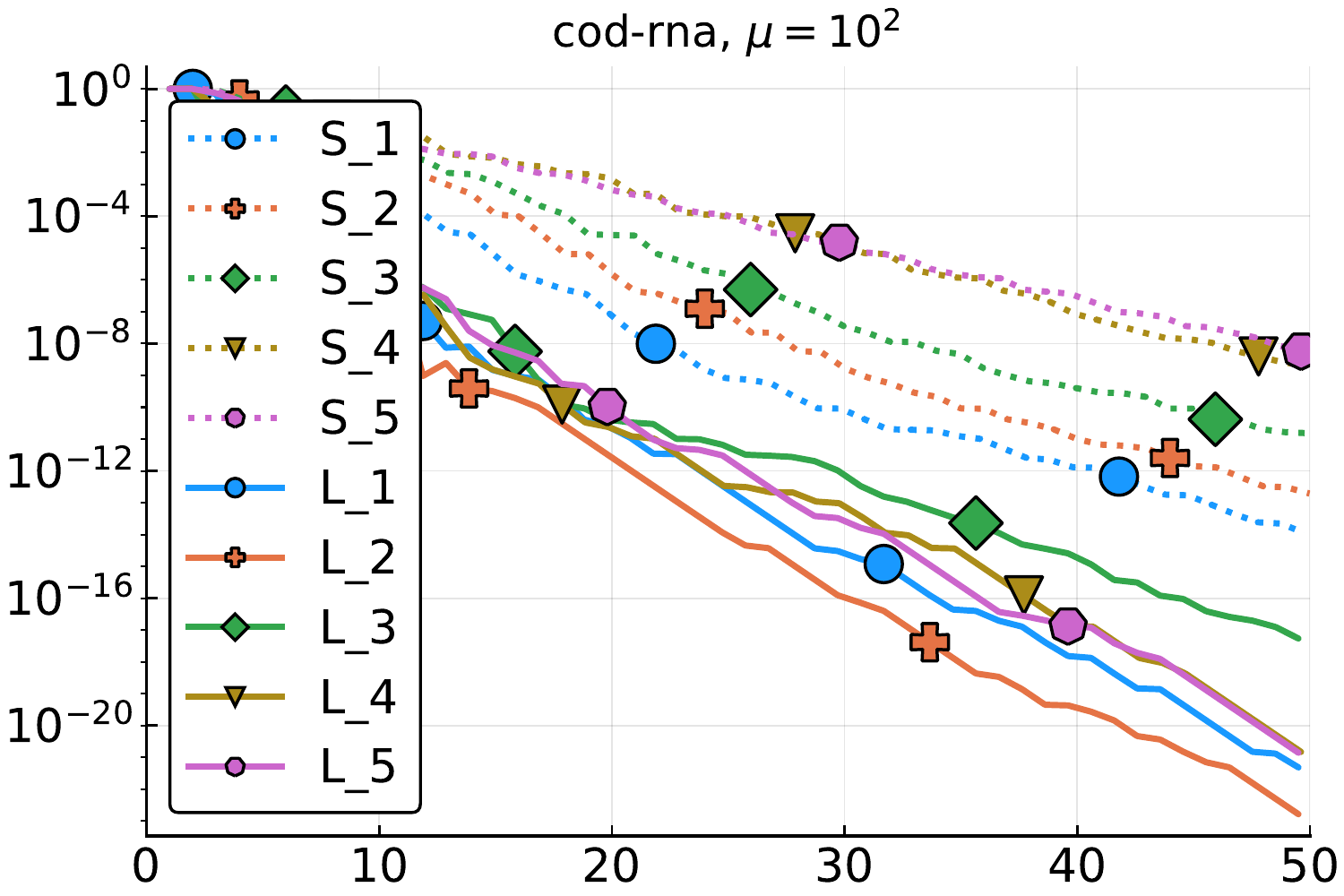}}
	\subfloat{\includegraphics[width=0.25\linewidth]{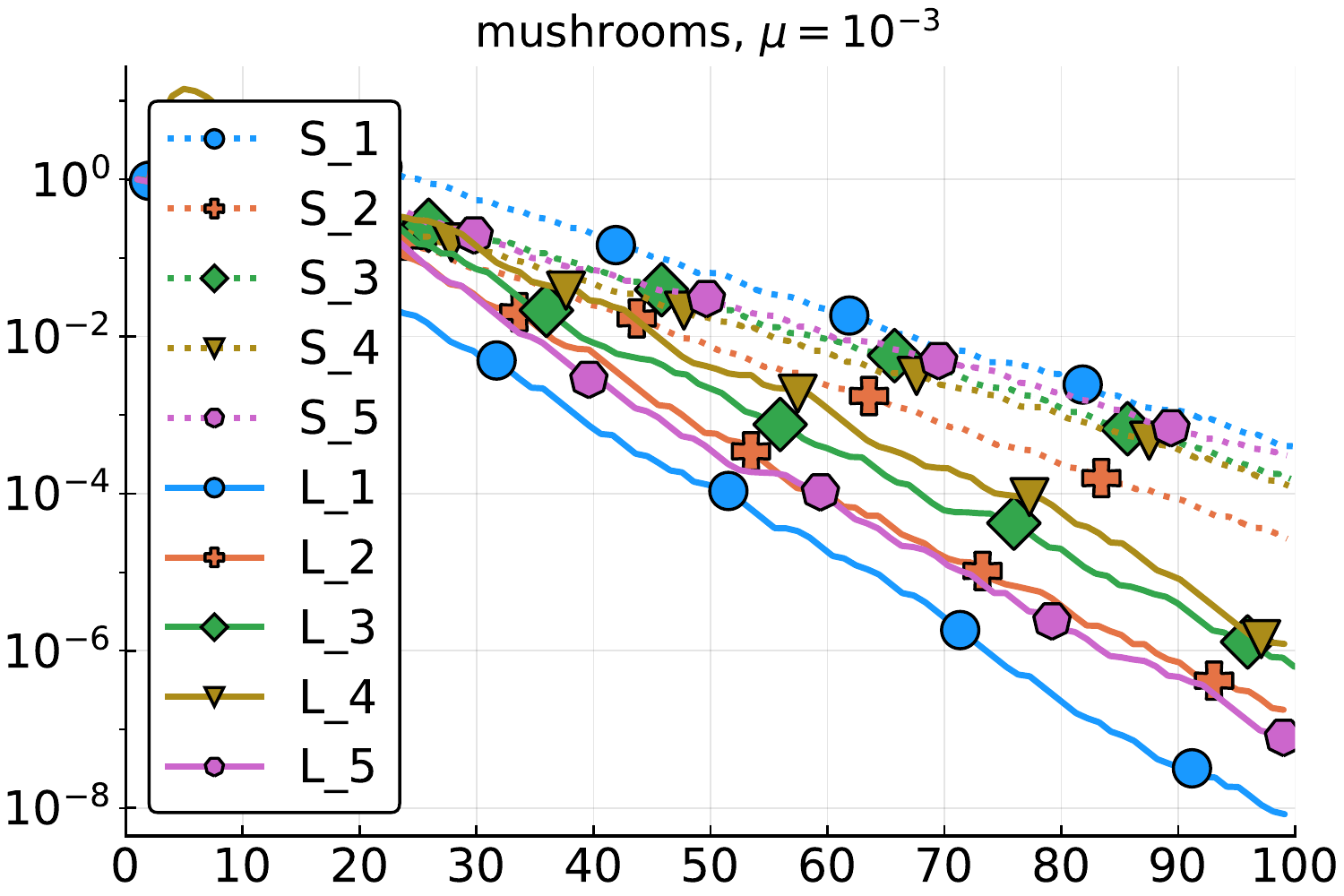}}
	\subfloat{\includegraphics[width=0.25\linewidth]{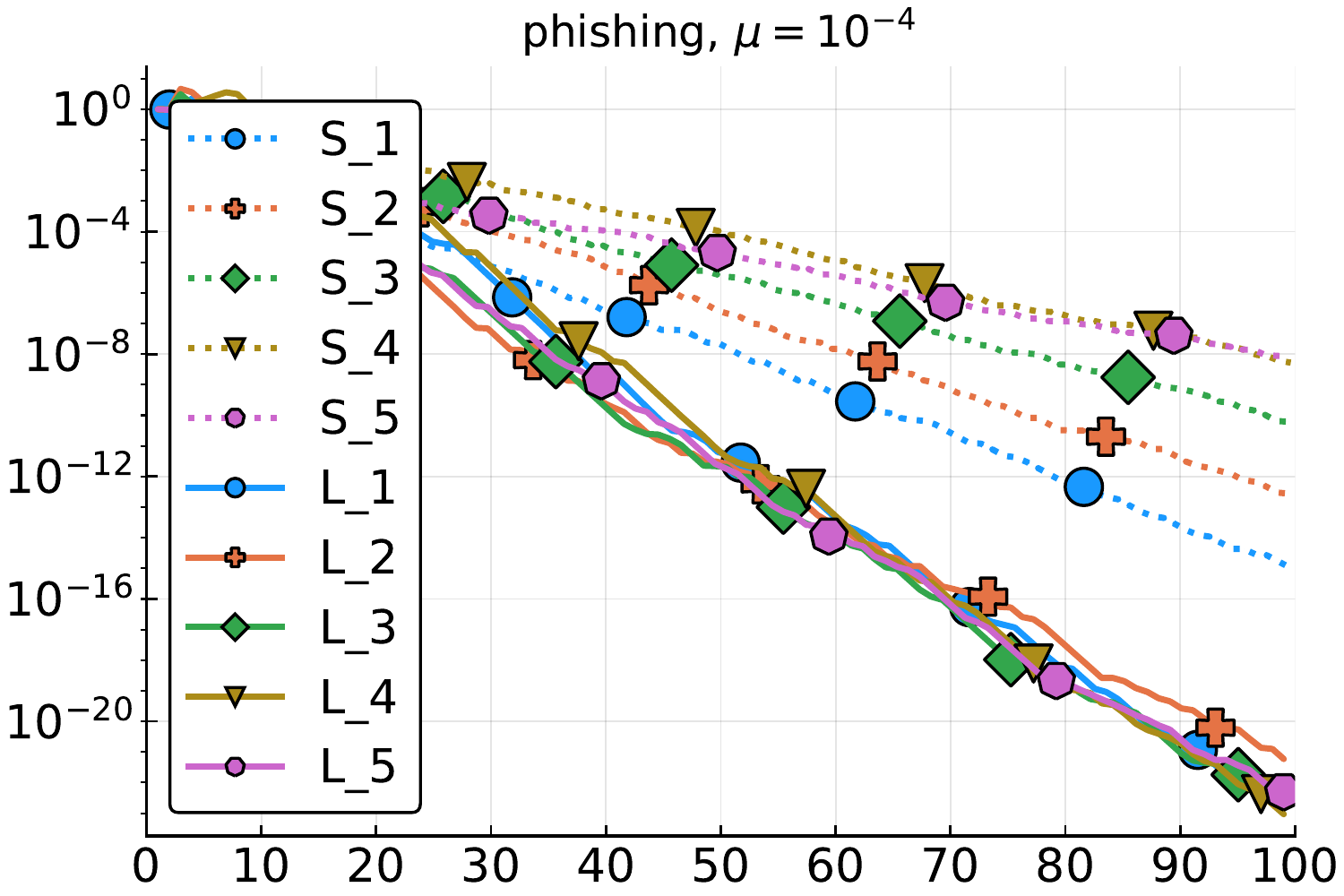}}
		
	\caption{Comparison of \texttt{SVRG}  (S) and  \texttt{L-SVRG} (L) for several choices of expected outer loop length (\texttt{L-SVRG})  or deterministic outer loop length (\texttt{SVRG}). Numbers $1$--$5$ correspondent to loop-lengths $n$,$\sqrt[4]{\kappa n^3}$, $\sqrt{\kappa n}$, $\sqrt[4]{\kappa^3n}$, $\kappa$, respectively, where $\kappa = \nicefrac{L}{\mu}$.}
\label{fig:diff_p}
\end{figure*}

In this section, we perform   experiments with logistic regression for binary classification with $L_2$ regularizer, where our loss function has the form
$
f_i(x) = \log (1 + \exp(- b_i a_i^\top x) ) + \tfrac{\mu}{2} \norm{x}^2,
$
where $a_i \in \R^d$, $b_i \in \{-1,+1\}$, $i \in [n]$. Hence, $f$  is smooth and $\mu$-strongly convex.  We use four  LIBSVM library\footnote{The LIBSVM dataset collection is available at \url{https://www.csie.ntu.edu.tw/~cjlin/libsvmtools/datasets/}}: \textit{a9a, w8a, mushrooms, phishing, cod-rna}.

We compare our methods \texttt{L-SVRG} and  \texttt{L-Katyusha} with their original version. It is well-known that whenever practical, \texttt{SAGA} is a bit faster than \texttt{SVRG}. While a comparison to \texttt{SAGA} seems natural as it also does not have a double loop structure, we position our loopless methods for applications where the high memory requirements of  \texttt{SAGA} prevent it to be applied. Thus, we do not compare to \texttt{SAGA}.

 Plots are constructed in such a way that the $y$-axis displays $\norm{x^k-x^\star}^2$ for \texttt{\SMs} and $\norm{y^k-x^\star}^2$ for \texttt{\ASMs}, where $x^\star$ were obtained by running gradient descent for a large number of epochs. The $x$-axis displays the number of epochs (full gradient evaluations). That is, $n$ computations of $\nabla f_i(x)$ equals one epoch.

{\bf Superior practical behaviour of the loopless approach.}
\label{sec:exp_superior}
Here we show that  \texttt{L-SVRG} and \texttt{L-Katyusha} perform better in  experiments than their loopy variants. In terms of theoretical iteration complexity,  both the loopy and the loopless methods are the same. However, as we can see from Figure~\ref{fig:SVRG}, the improvement of the loopless approach  can be significant. One can see that for these datasets, \texttt{L-SVRG}  {\em is always better than \texttt{SVRG}, and can be faster by several orders of magnitude!} Looking at Figure~\ref{fig:katyusha}, we see that  {\em the performance of \texttt{L-Katyusha} is at least as good as that of \texttt{Katyusha}, and  can be significantly faster in some cases.} All  parameters  of the methods were chosen as suggested by theory. For \texttt{L-SVRG} and \texttt{L-Katyusha} they are chosen  based on Theorems~\ref{thm:SVRG} and \ref{thm:2}, respectively. For \texttt{SVRG} and \texttt{Katyusha} we also choose the parameters based on the theory, as described in the original papers. The initial point $x^0$ is chosen to be the origin.

\begin{figure*}[t!]
	\centering

	\subfloat{\includegraphics[width=0.25\linewidth]{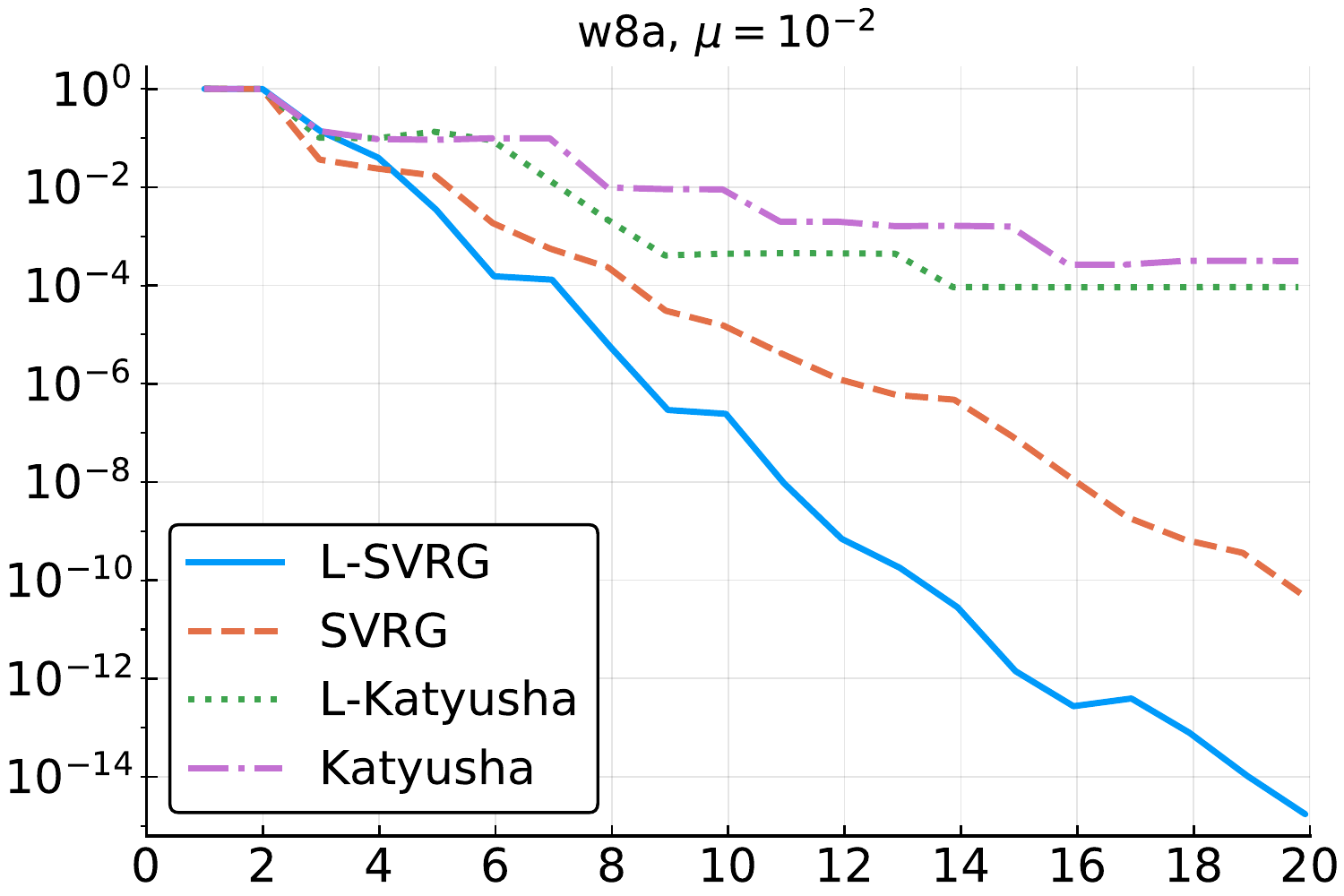}}
	\subfloat{\includegraphics[width=0.25\linewidth]{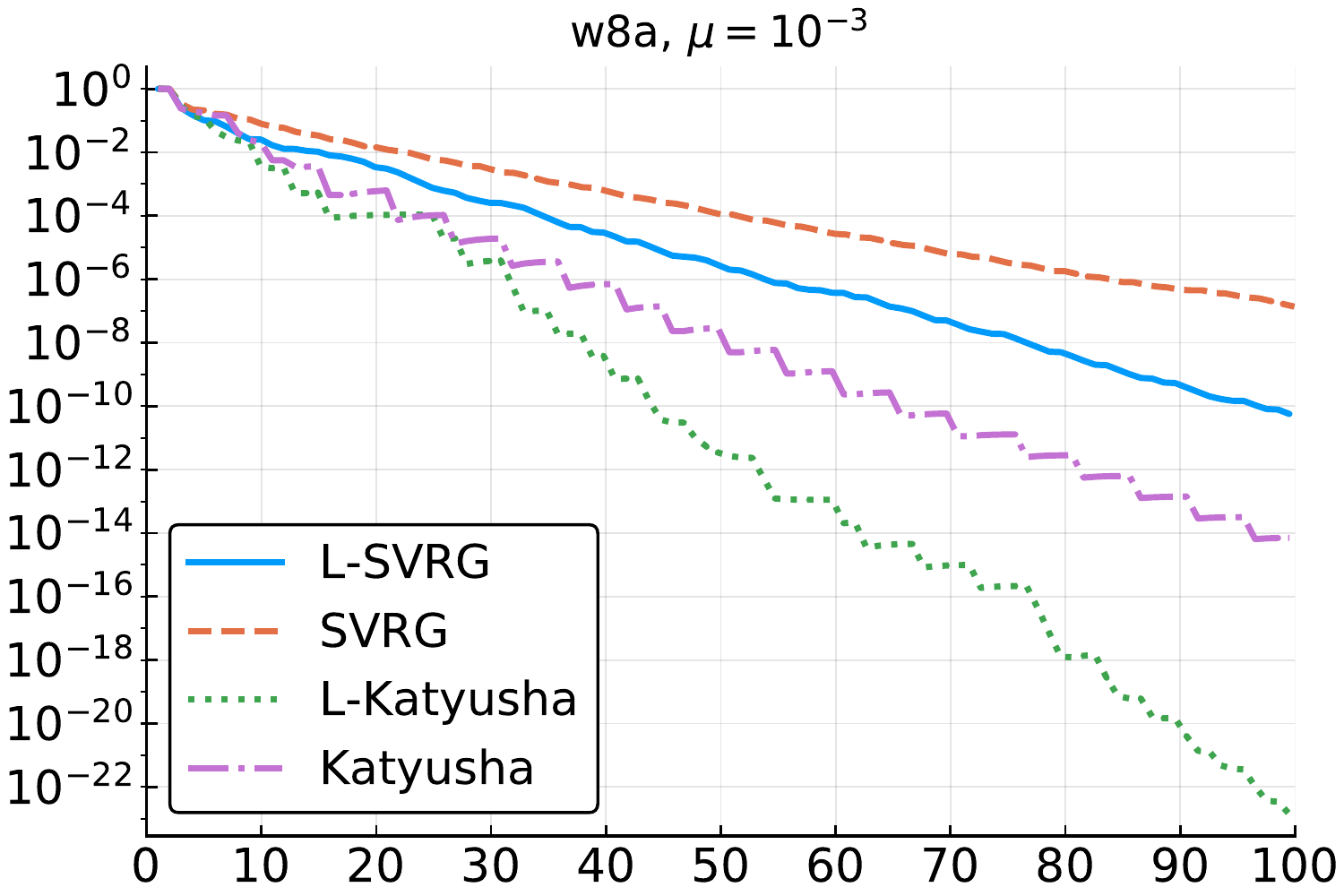}}
	\subfloat{\includegraphics[width=0.25\linewidth]{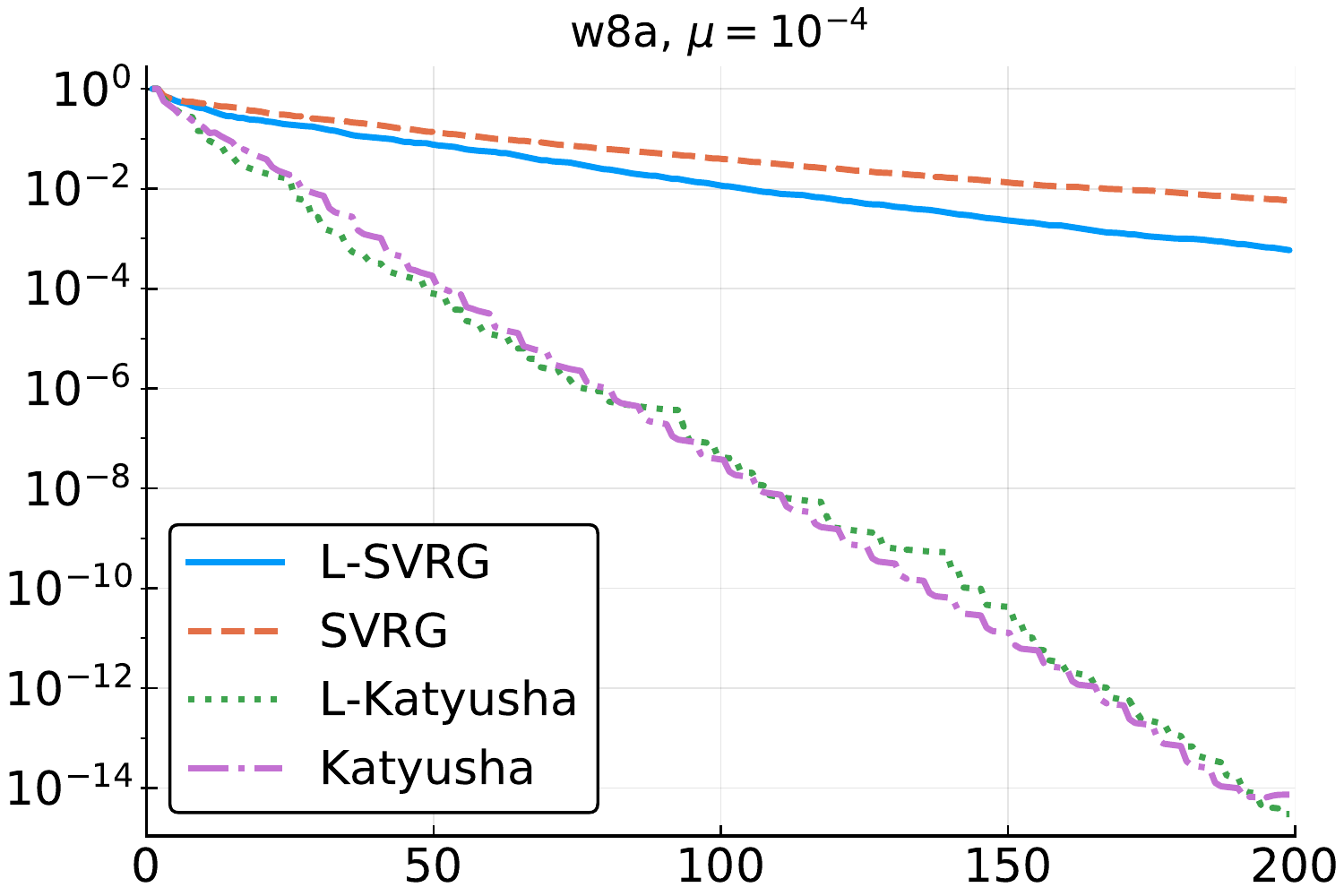}}
	\subfloat{\includegraphics[width=0.25\linewidth]{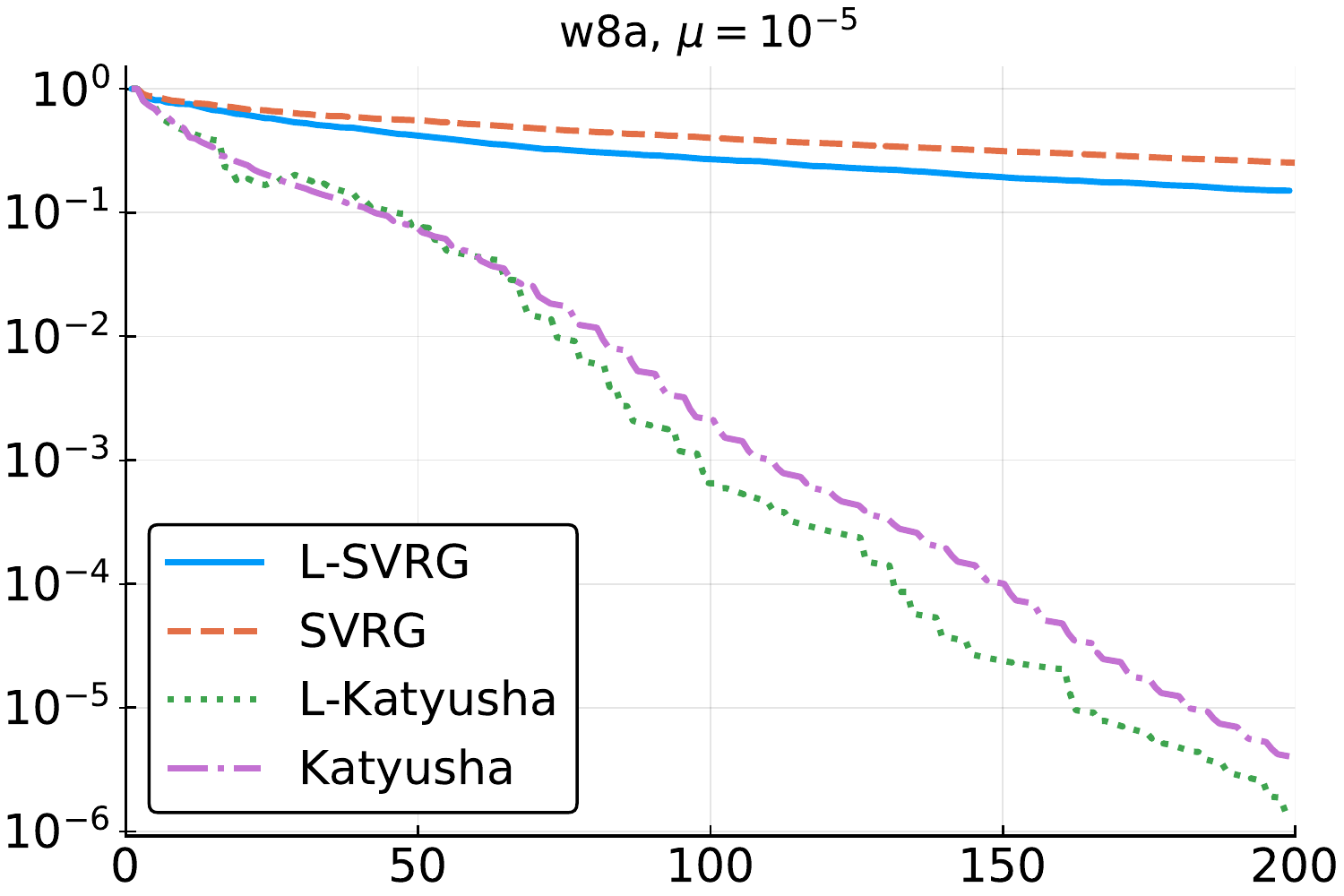}}

	\subfloat{\includegraphics[width=0.25\linewidth]{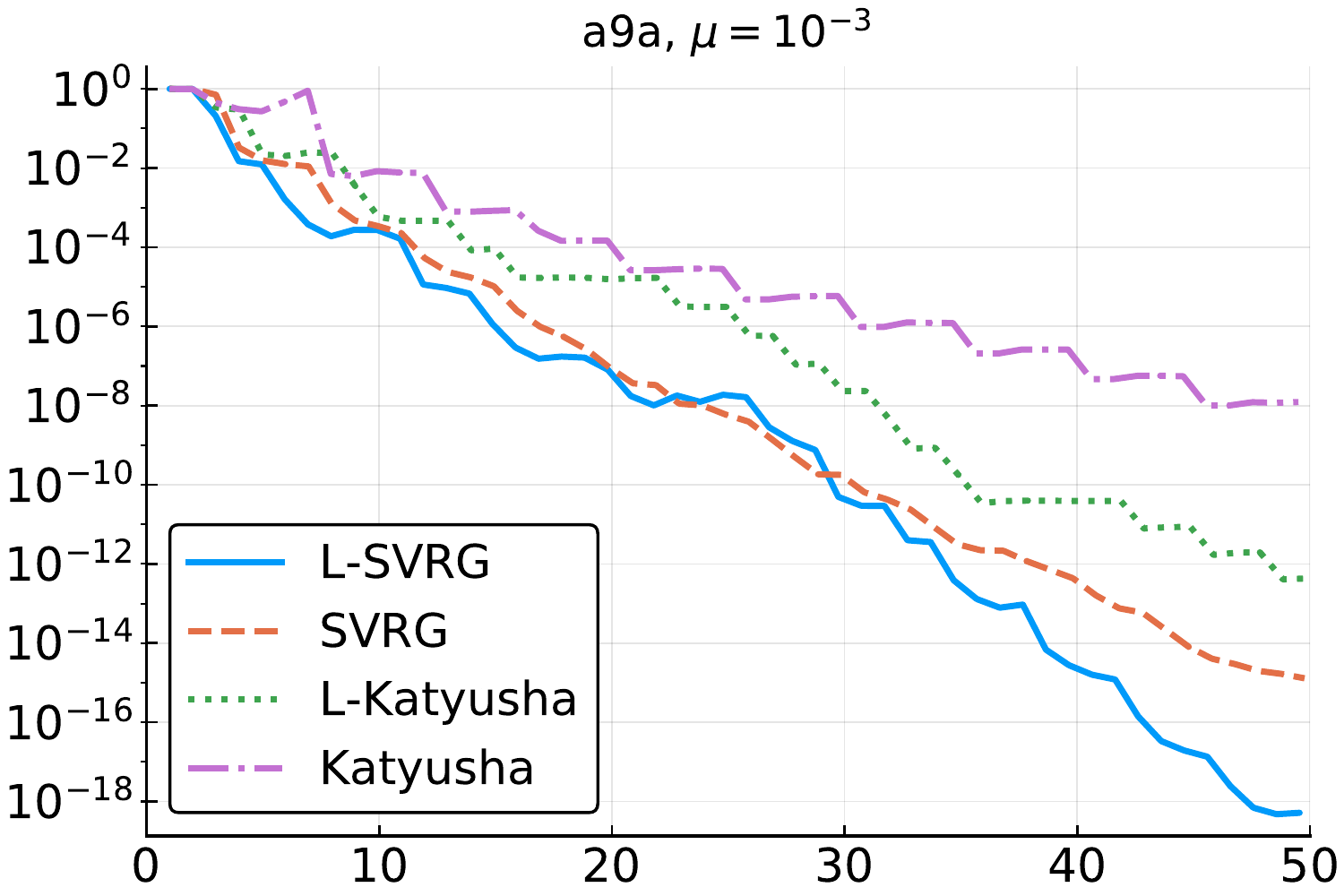}}
	\subfloat{\includegraphics[width=0.25\linewidth]{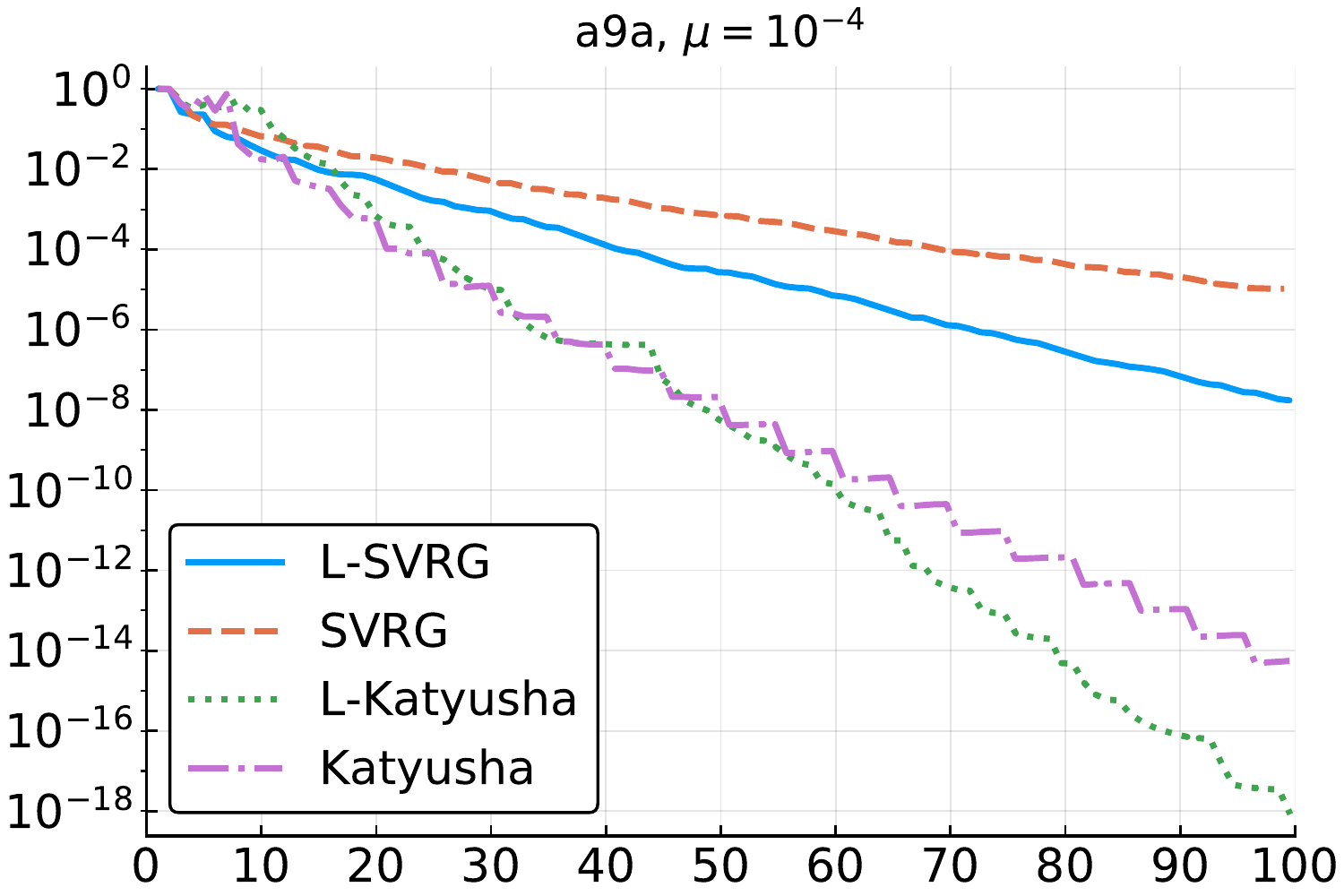}}
	\subfloat{\includegraphics[width=0.25\linewidth]{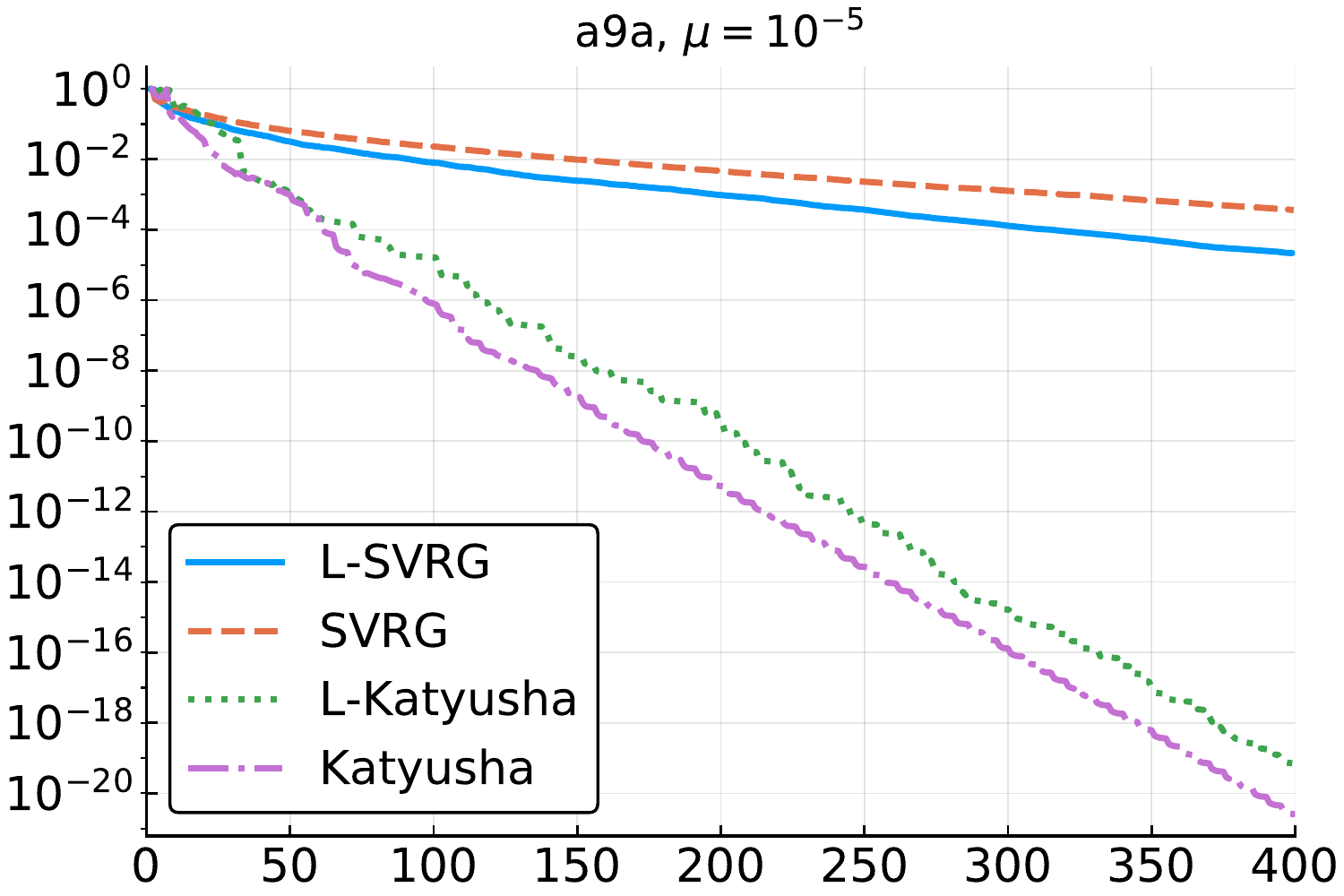}}
	\subfloat{\includegraphics[width=0.25\linewidth]{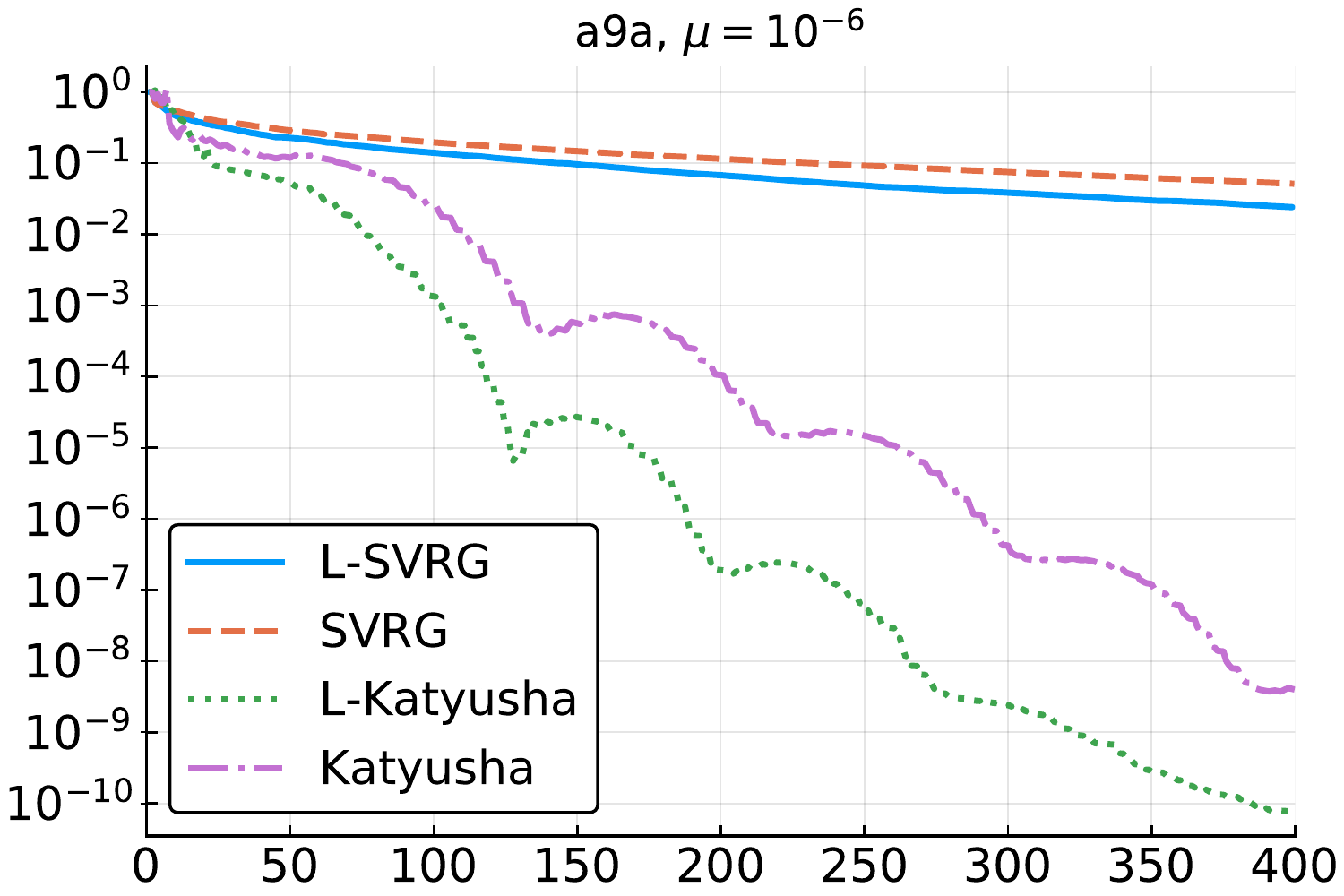}}
	
	\subfloat{\includegraphics[width=0.25\linewidth]{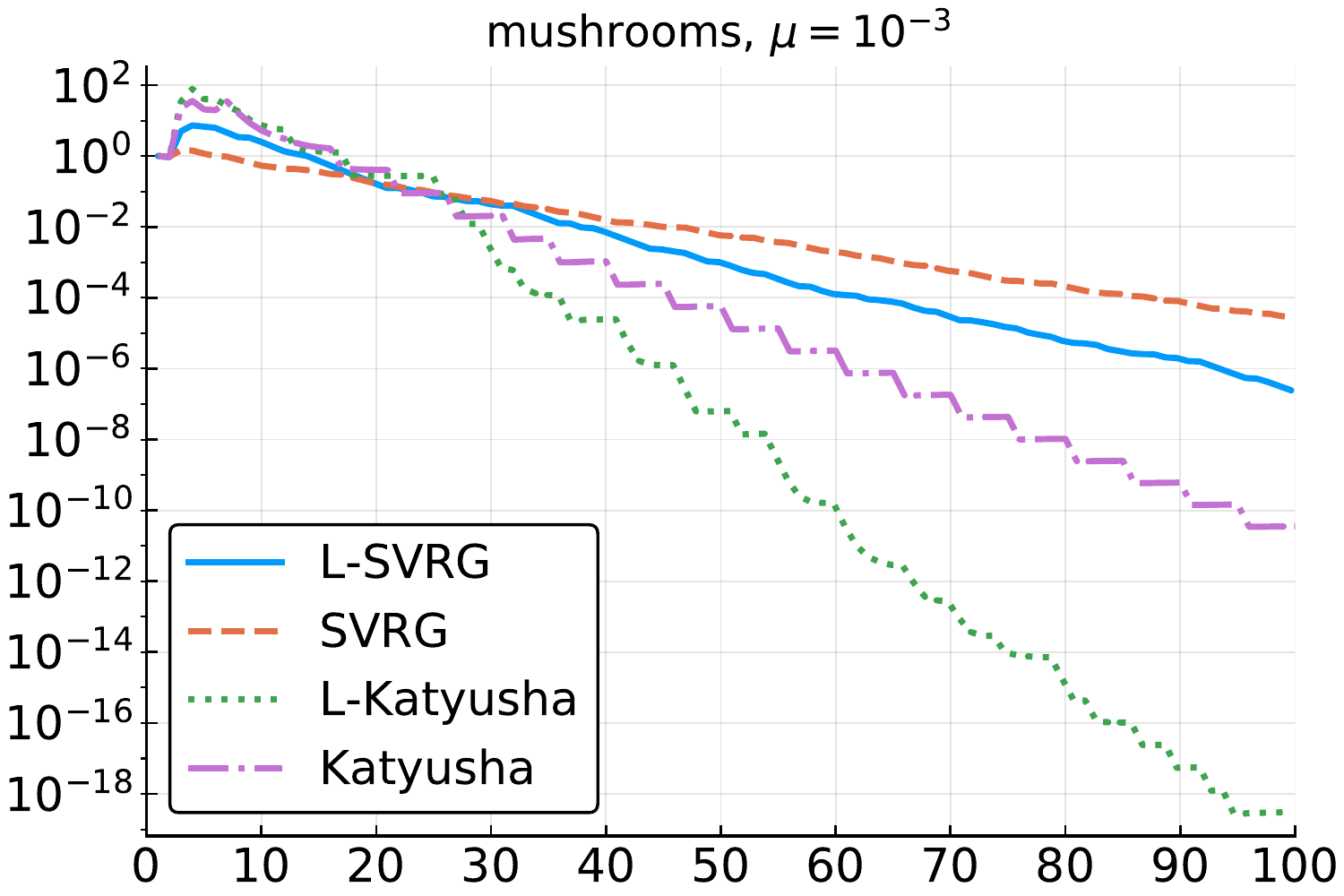}}
	\subfloat{\includegraphics[width=0.25\linewidth]{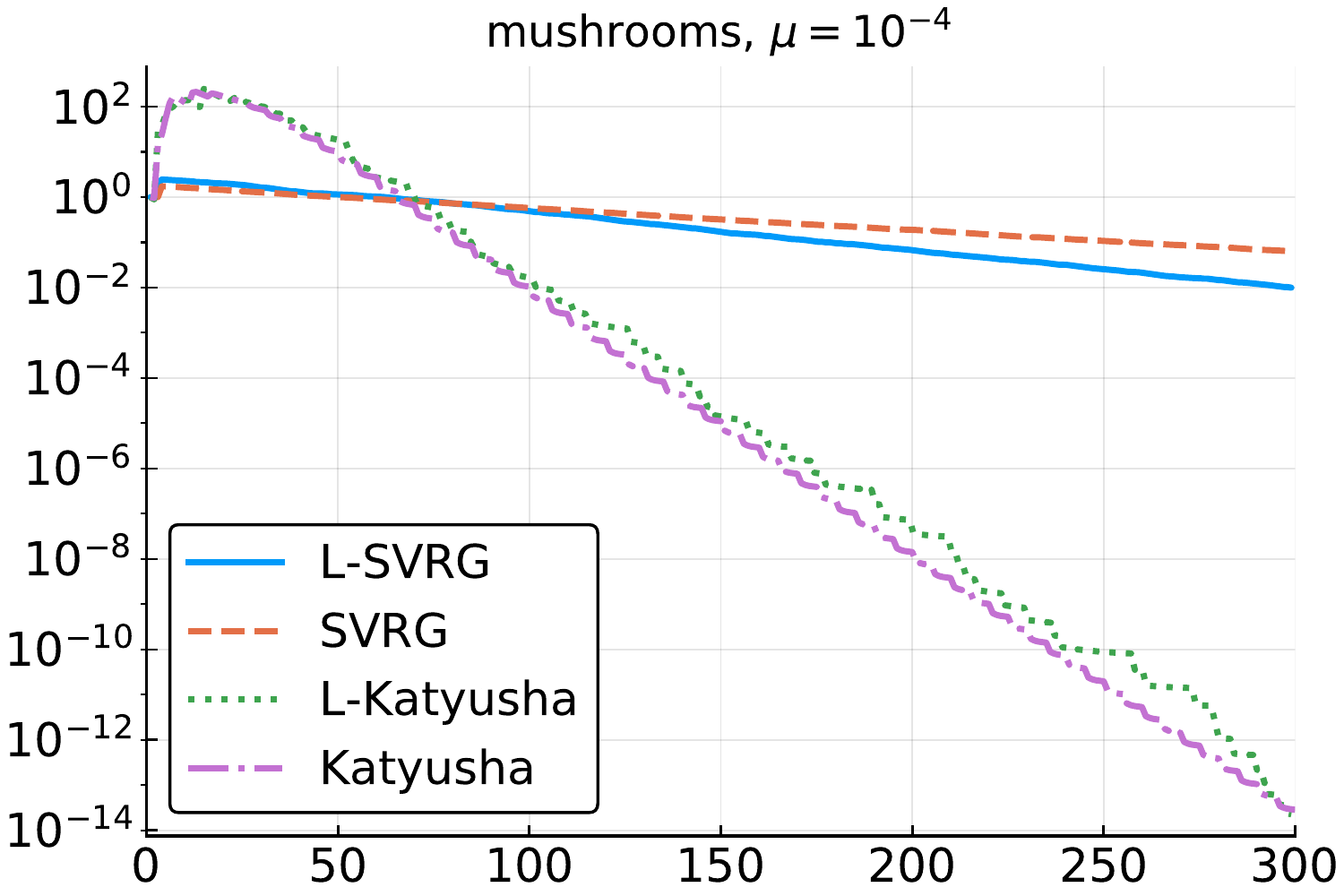}}
	\subfloat{\includegraphics[width=0.25\linewidth]{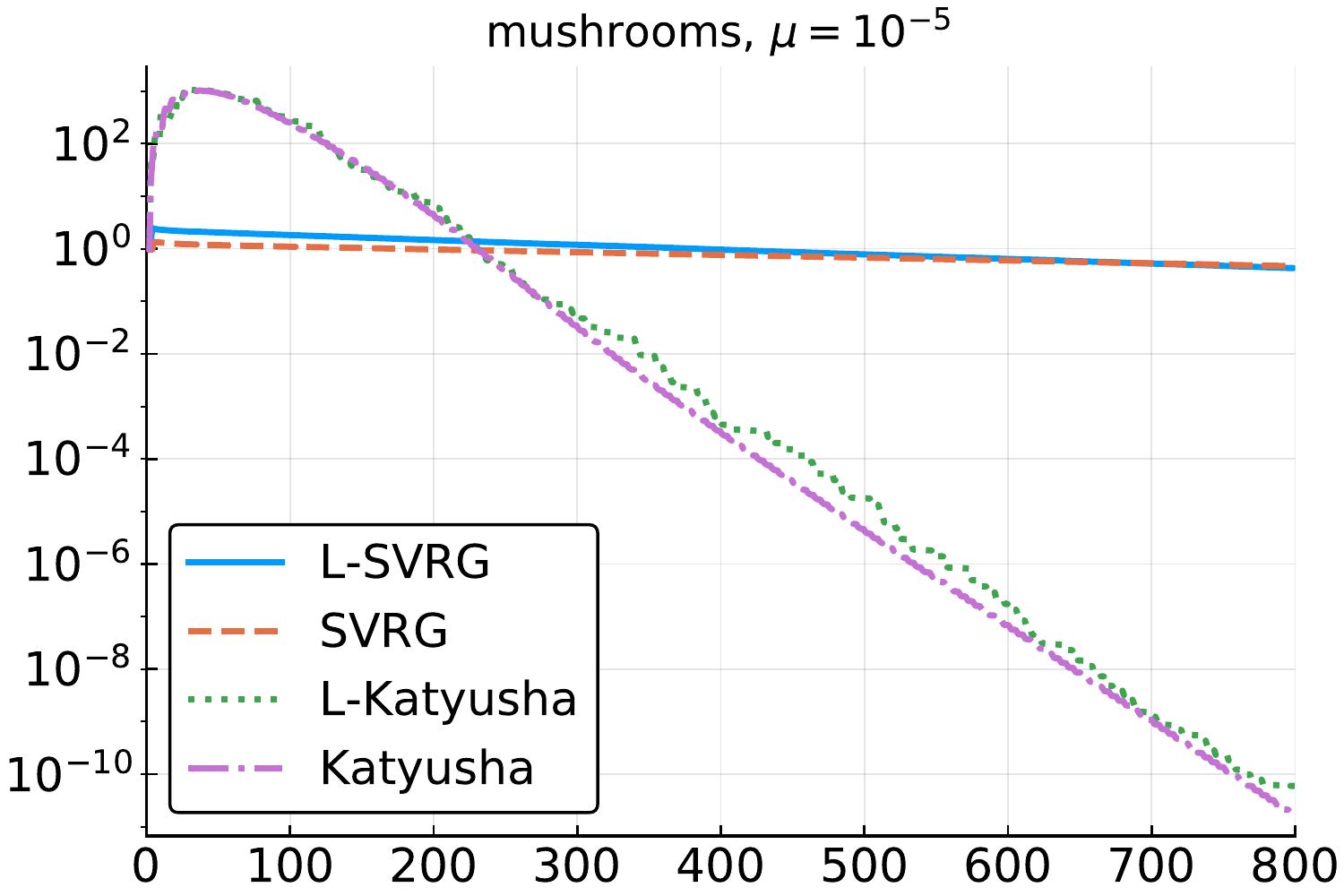}}
	\subfloat{\includegraphics[width=0.25\linewidth]{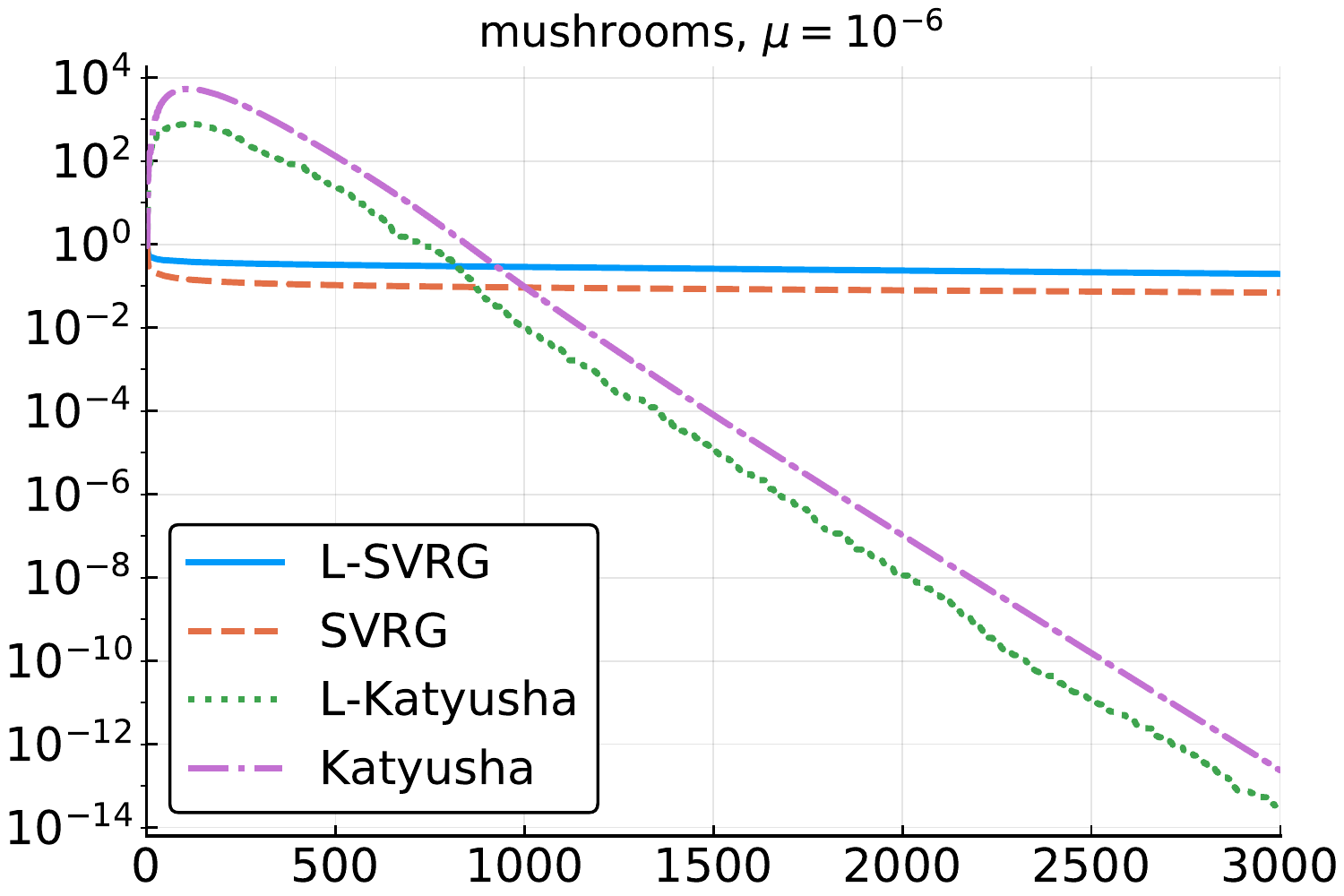}}
	
	\subfloat{\includegraphics[width=0.25\linewidth]{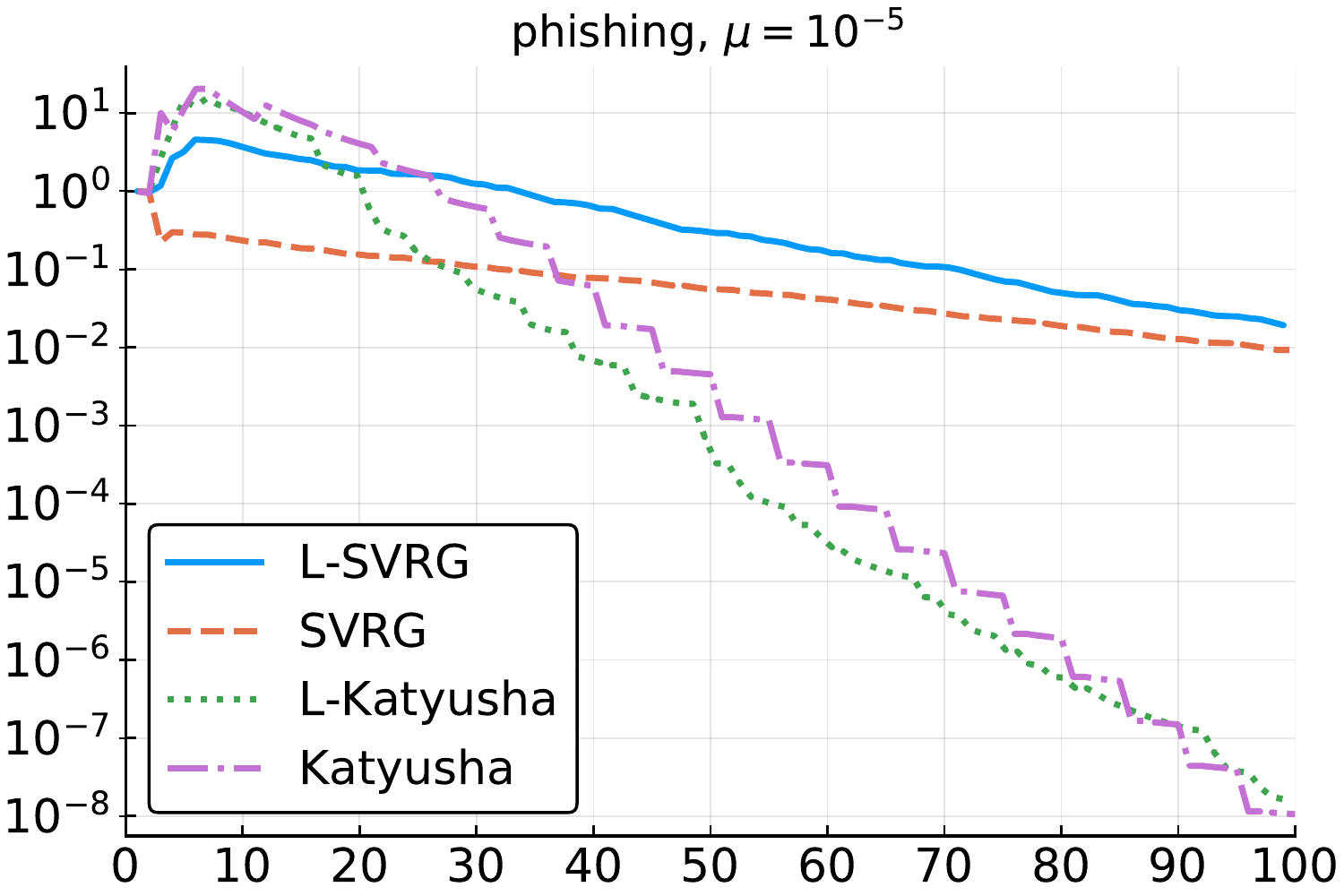}}
	\subfloat{\includegraphics[width=0.25\linewidth]{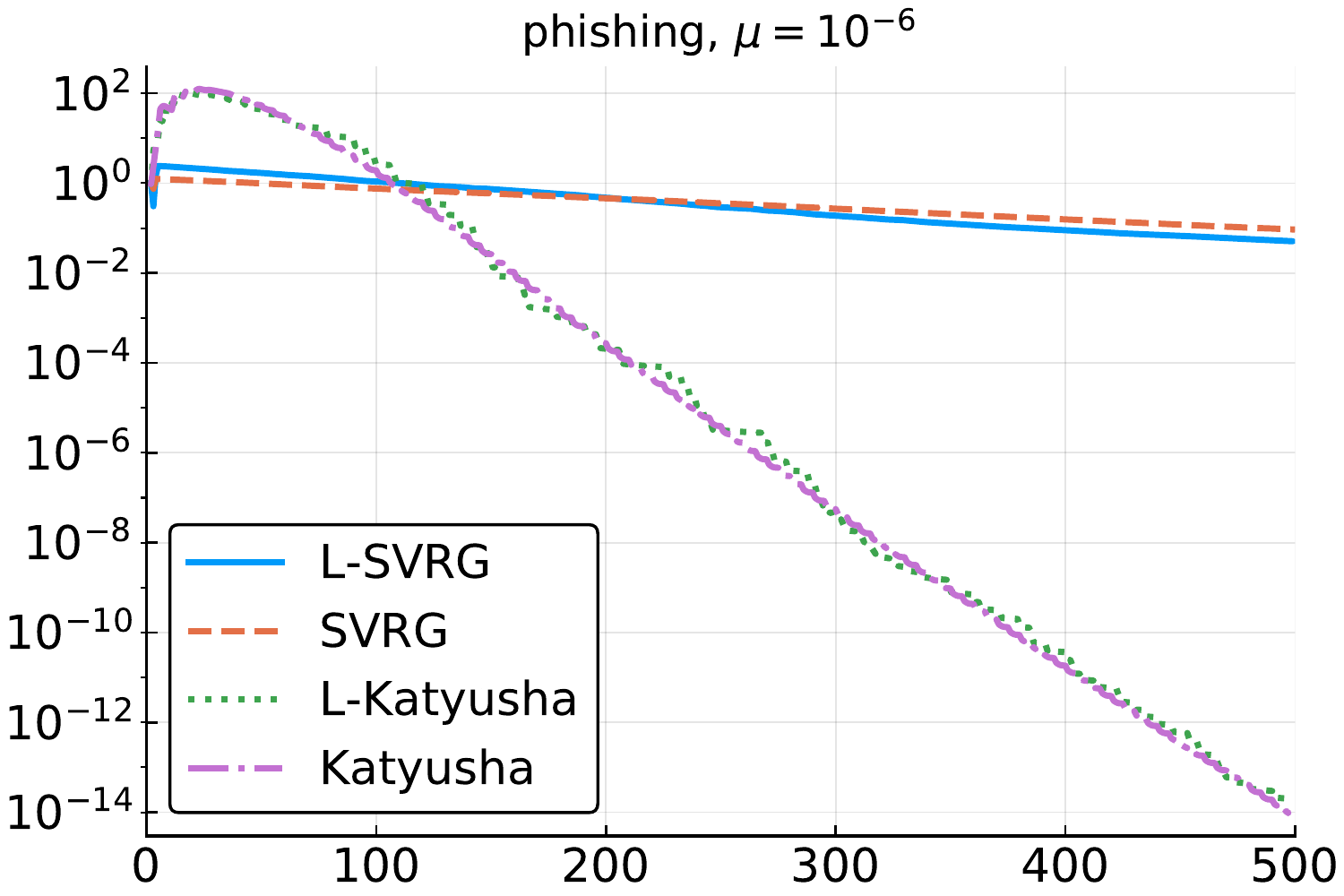}}
	\subfloat{\includegraphics[width=0.25\linewidth]{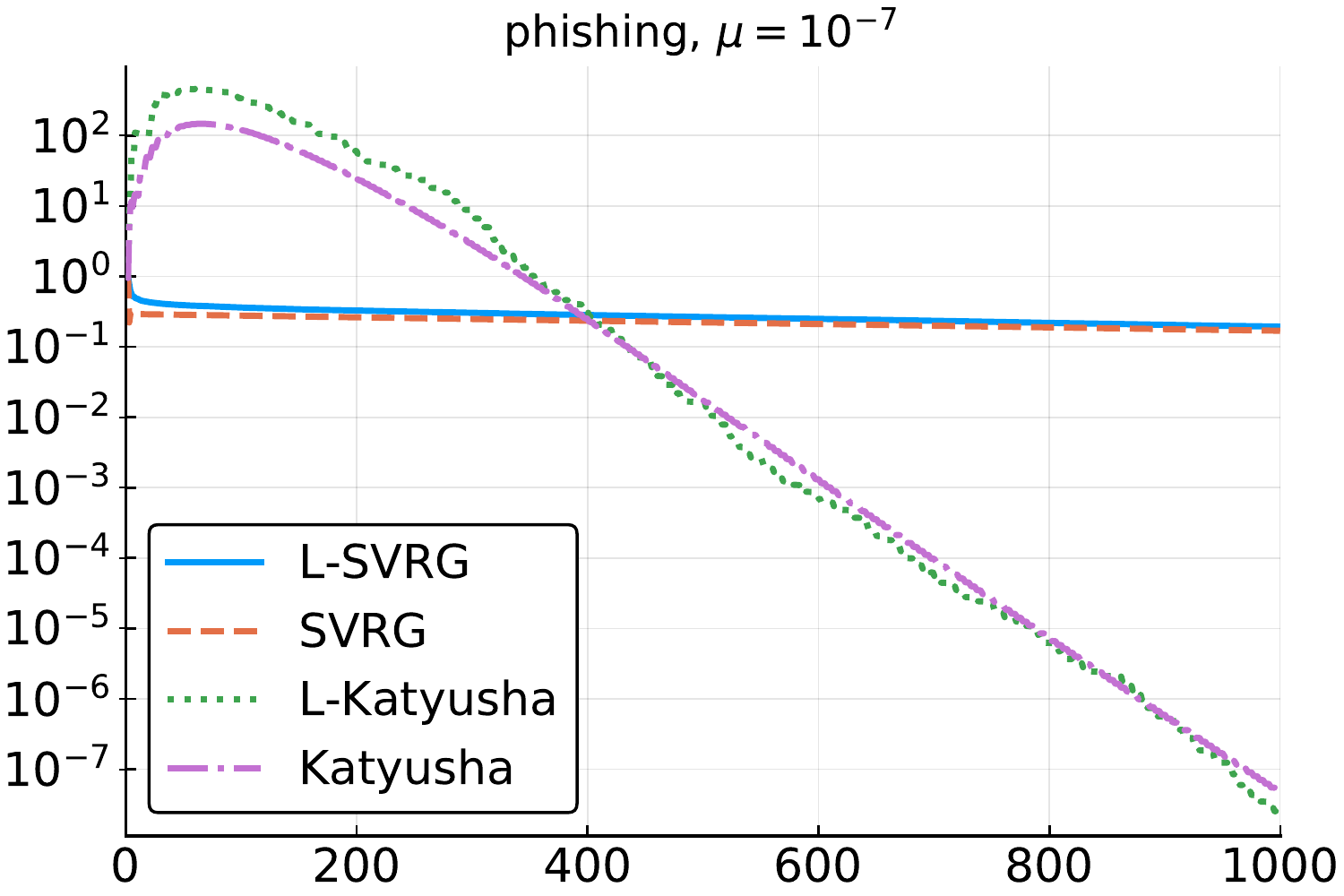}}

	\subfloat{\includegraphics[width=0.25\linewidth]{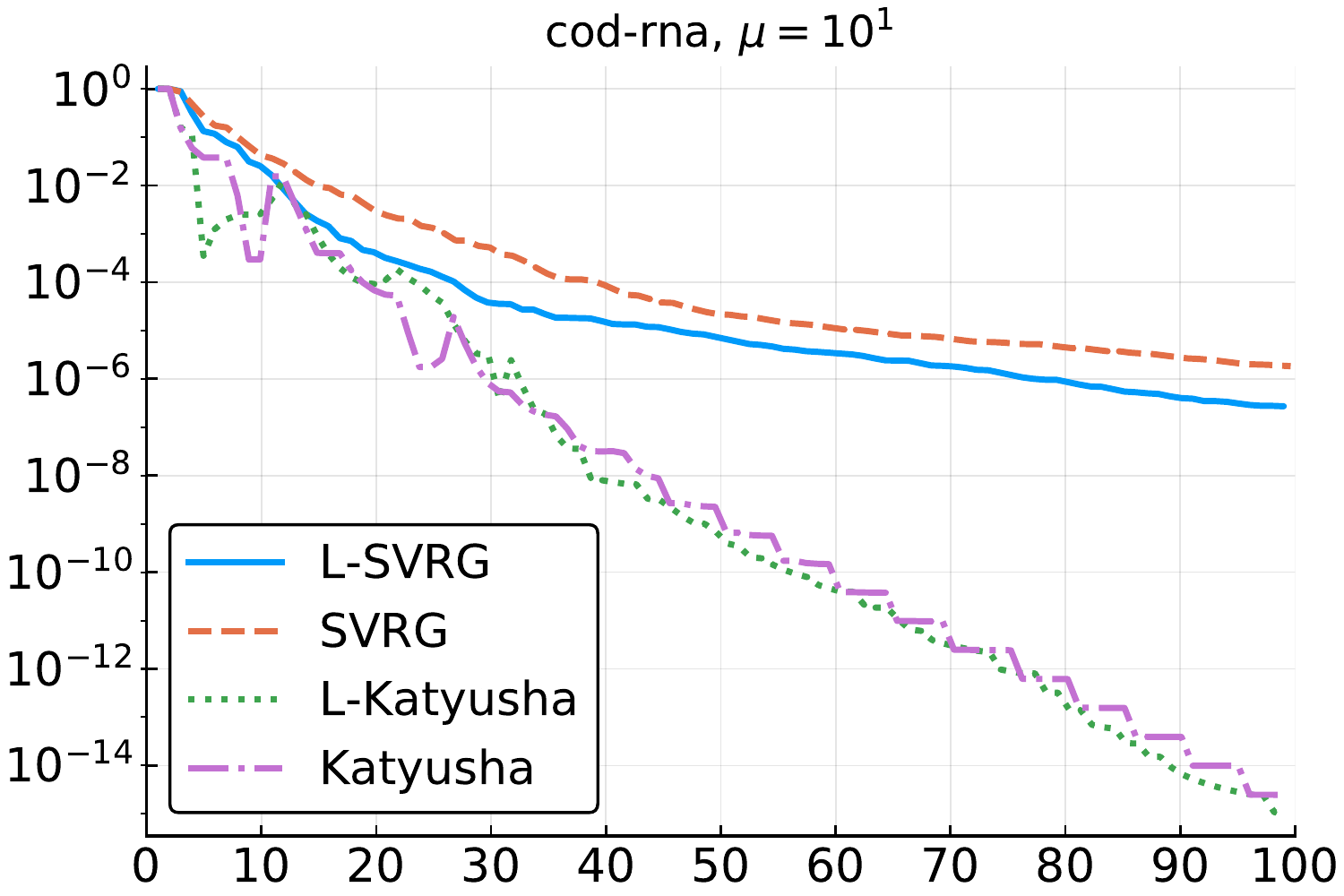}}
	\subfloat{\includegraphics[width=0.25\linewidth]{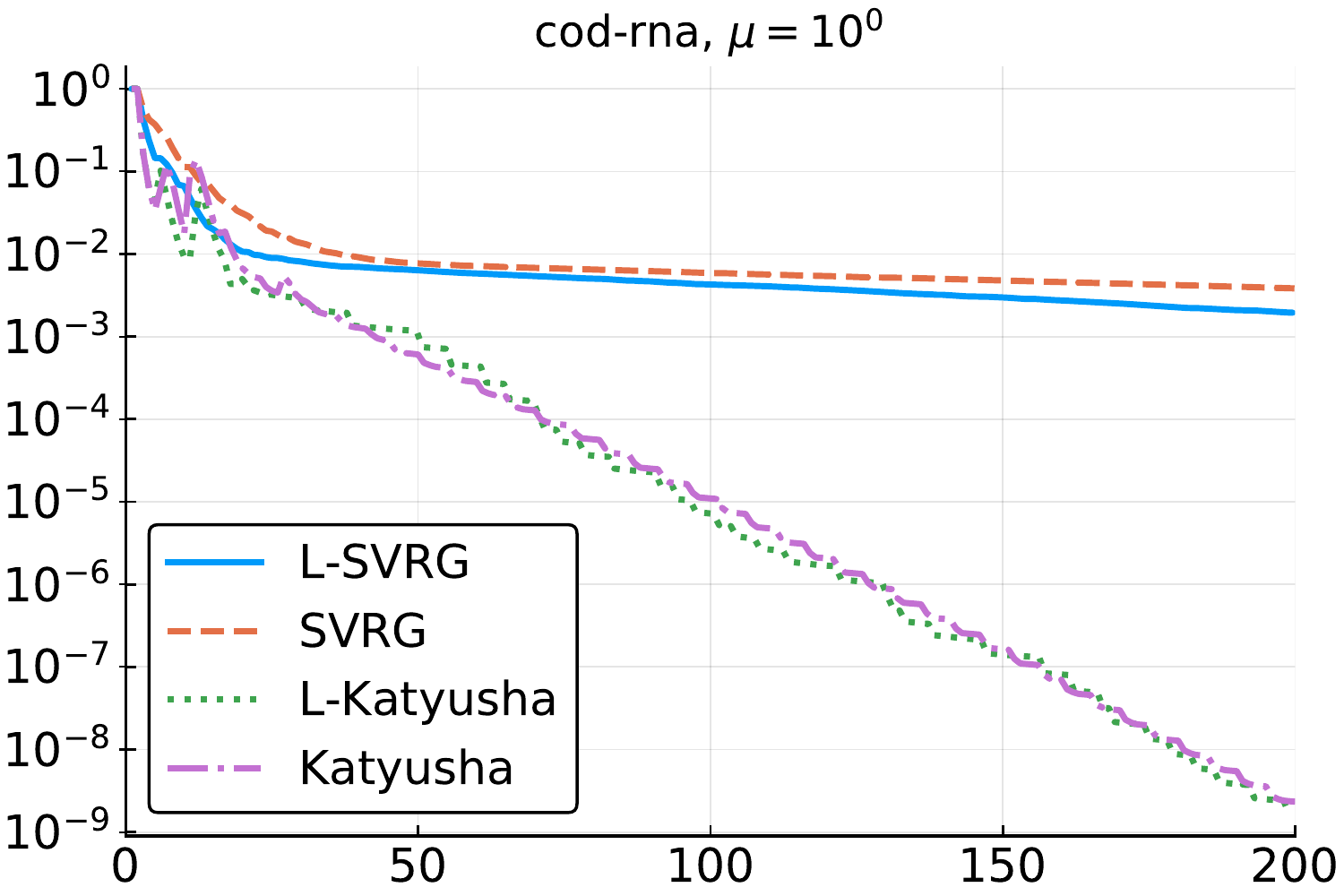}}
	\subfloat{\includegraphics[width=0.25\linewidth]{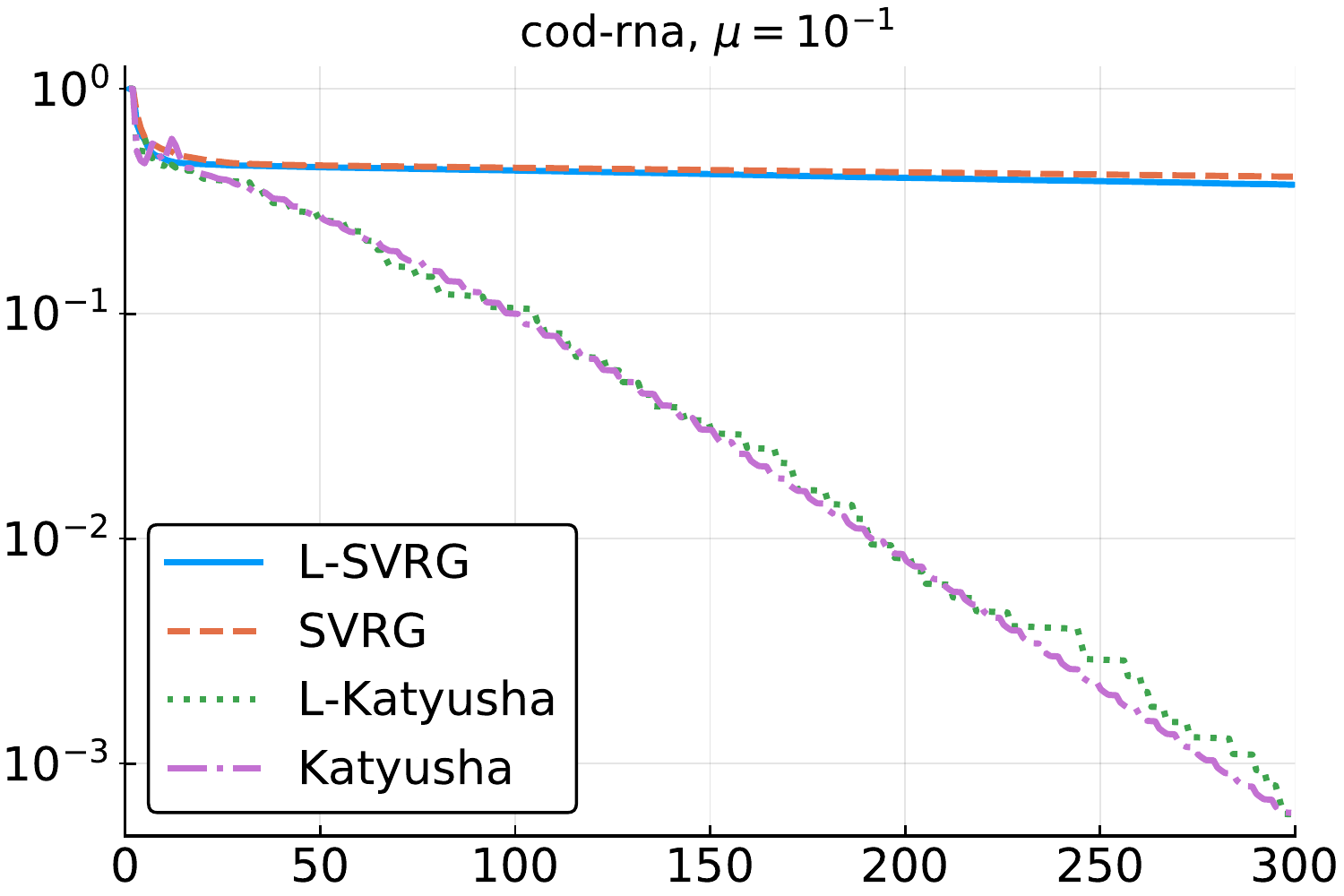}}

	\caption{All methods together for different datasets and different regularizer weights.}
	\label{fig:all} 
\end{figure*}

{\bf Different choices of probability/ outer loop size.}
We now compare several choices of the probability $p$ of updating the full gradient for \texttt{SVRG} and several outer loop sizes $m$ for \texttt{SVRG}. Since our analysis guarantees the optimal rate for any choice of $p$ between $\nicefrac{1}{n}$ and  $\nicefrac{\mu}{L}$ for well condition problems, we decided to perform experiments for $p$ within this range. More precisely, we choose $5$ values of $p$, uniformly distributed in  logarithmic scale across this interval, and thus our choices are $n$,$\sqrt[4]{\kappa n^3}$, $\sqrt{\kappa n}$, $\sqrt[4]{\kappa^3n}$,  and $\kappa$, where $\kappa = \nicefrac{L}{\mu}$, denoted in the figures by $1,2,3,4,5$, respectively. Since the expected ``outer loop'' length (length for which reference point stays the same) is $ \nicefrac{1}{p}$, for \texttt{SVRG} we choose $m = \nicefrac{1}{p}$. Looking at Figure~\ref{fig:diff_p}, one can see that \texttt{L-SVRG} is very {\em robust} to the choice of $p$ from the  ``optimal interval'' predicted by our theory.  Moreover, {\em even the worst case for \texttt{L-SVRG} outperforms the best case for \texttt{SVRG}.} 

{\bf All methods together.} Finally, we provide all algorithms together in one plot for different datasets with different regularizer weight, thus  with different condition numbers, displayed in Figure~\ref{fig:all}. As for the previous experiments, loopless methods are not worse and sometimes significantly better.

% In the unusual situation where you want a paper to appear in the
% references without citing it in the main text, use \nocite
%\nocite{langley00}

\bibliography{reference}
\bibliographystyle{plain}

%%%%%%%%%%%%%%%%%%%%%%%%%%%%%%%%%%%%%%%%%%%%%%%%%%%%%%%%%%%%%%%%%%%%%%%%%%%%%%%
%%%%%%%%%%%%%%%%%%%%%%%%%%%%%%%%%%%%%%%%%%%%%%%%%%%%%%%%%%%%%%%%%%%%%%%%%%%%%%%
% DELETE THIS PART. DO NOT PLACE CONTENT AFTER THE REFERENCES!
%%%%%%%%%%%%%%%%%%%%%%%%%%%%%%%%%%%%%%%%%%%%%%%%%%%%%%%%%%%%%%%%%%%%%%%%%%%%%%%
%%%%%%%%%%%%%%%%%%%%%%%%%%%%%%%%%%%%%%%%%%%%%%%%%%%%%%%%%%%%%%%%%%%%%%%%%%%%%%%

\appendix

\onecolumn

%%%%%%%%%%%%%%%%%%%%%%%%%%%%%%%%%%%%%%%%%%%%%%%%%%%%%%%%%%%%%%%%%%%%%%%%%%%%%%%
%%%%%%%%%%%%%%%%%%%%%%%%%%%%%%%%%%%%%%%%%%%%%%%%%%%%%%%%%%%%%%%%%%%%%%%%%%%%%%%

\part*{Supplementary Material: \\ \Large SVRG and Katyusha are Better Without the Outer Loop}

\section{Auxiliary Lemmas}

\begin{lemma}
For random vector $x\in \R^d$ and any $y \in \R^d$, the variance of $y$ can be decomposed as
\begin{equation}
\label{eq:variance}
\E{\norm{x-\E{x}}^2} = \E{\norm{x-y}^2} - \E{\norm{\E{x}-y}^2}\,.
\end{equation}
\end{lemma}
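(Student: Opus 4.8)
The plan is to reduce this identity to the elementary expansion of a squared norm around its mean. First I would introduce the deterministic shift and write $x - y = (x - \E{x}) + (\E{x} - y)$, where the second summand $\E{x} - y$ is a fixed (non-random) vector once $y$ and the law of $x$ are given. Expanding the squared norm then yields
\[
\norm{x - y}^2 = \norm{x - \E{x}}^2 + 2\dotprod{x - \E{x}}{\E{x} - y} + \norm{\E{x} - y}^2.
\]

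Next I would take the expectation of both sides. The crucial observation is that $\E{x} - y$ is constant, so it can be pulled out of the expectation in both the cross term and the last term. For the cross term this gives $2\dotprod{\E{x - \E{x}}}{\E{x} - y} = 2\dotprod{\zeros}{\E{x} - y} = 0$, since $\E{x - \E{x}} = \zeros$ by linearity of expectation. For the last term, $\E{\norm{\E{x} - y}^2} = \norm{\E{x} - y}^2$ because its argument is deterministic. Combining these, $\E{\norm{x - y}^2} = \E{\norm{x - \E{x}}^2} + \norm{\E{x} - y}^2$, and rearranging — writing the final deterministic term back as $\E{\norm{\E{x} - y}^2}$ — delivers the claimed decomposition \eqref{eq:variance}.

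There is essentially no serious obstacle here, as the lemma is the standard bias–variance type identity. The only point that requires care is the bookkeeping of which quantities are random and which are deterministic: recognizing that $\E{x} - y$ is a constant vector is precisely what makes the cross term vanish and what lets the final squared norm leave the expectation unchanged. Everything else is a one-line expansion followed by a rearrangement.
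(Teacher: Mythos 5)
Your proof is correct and is exactly the standard argument the paper implicitly relies on (the lemma is stated without proof in the appendix): expand $\norm{x-y}^2$ around $\E{x}$, note the cross term vanishes because $\E{x}-y$ is deterministic and $\E{x-\E{x}}=\zeros$, and rearrange. Nothing is missing.
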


The next lemma is a consequence of  Jensen's inequality applied to $x\mapsto \|x\|^2$.
\begin{lemma}
For any vectors $a_1, a_2, \dots, a_k \in \R^d$, the following inequality holds:
\begin{equation}
\left\|\sum_{i=1}^k a_i\right\|^2 \leq k \sum_{i=1}^k \norm{a_i}^2  \,. \label{eq:sum}
\end{equation}
\end{lemma}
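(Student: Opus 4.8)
The plan is to invoke convexity of the squared Euclidean norm, exactly as the hint preceding the statement suggests. First I would recall that the map $\phi(x) \eqdef \norm{x}^2$ is convex on $\R^d$, and apply Jensen's inequality to the uniform convex combination $\tfrac{1}{k}\sum_{i=1}^k a_i$ (with weights $\tfrac{1}{k}$ each). This yields $\norm{\tfrac{1}{k}\sum_{i=1}^k a_i}^2 \leq \tfrac{1}{k}\sum_{i=1}^k \norm{a_i}^2$. Pulling the scalar factor out of the norm on the left contributes a $\tfrac{1}{k^2}$, so multiplying both sides by $k^2$ gives precisely the claimed bound $\norm{\sum_{i=1}^k a_i}^2 \leq k \sum_{i=1}^k \norm{a_i}^2$.

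As an alternative that avoids quoting Jensen, I could argue by direct expansion. Writing the left-hand side as a double sum $\norm{\sum_{i=1}^k a_i}^2 = \sum_{i=1}^k\sum_{j=1}^k \dotprod{a_i}{a_j}$ and bounding each cross term by Young's inequality, $\dotprod{a_i}{a_j} \leq \tfrac{1}{2}\left(\norm{a_i}^2 + \norm{a_j}^2\right)$, one sums over all $k^2$ index pairs; by symmetry each $\norm{a_i}^2$ appears with total weight $k$, recovering $k\sum_{i=1}^k \norm{a_i}^2$ exactly. A third route is Cauchy--Schwarz applied to the pairing of the block vector $(a_1,\dots,a_k)$ with the all-ones vector $(\ones,\dots,\ones)$ of $k$ blocks, which produces the same $\sqrt{k}$ factor before squaring.

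There is essentially no substantive obstacle. The only points requiring minimal care are the index bookkeeping when passing from the normalized average back to the unnormalized sum (tracking the factor $k^2$ correctly), and stating the argument for arbitrary $k$ rather than a fixed value. I would present the Jensen argument as the cleanest and shortest, since it is a single application of a standard inequality to an explicitly convex function and matches the remark already made in the text.
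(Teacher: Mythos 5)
Your proposal is correct and matches the paper's own argument: the paper proves this lemma precisely by Jensen's inequality applied to the convex map $x \mapsto \norm{x}^2$ with uniform weights $\nicefrac{1}{k}$, which is your primary route, and your bookkeeping of the $k^2$ factor is right. The alternative expansions you sketch (Young's inequality on cross terms, Cauchy--Schwarz) are also valid but unnecessary here.
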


\section{Proofs for Algorithm~1 (\texttt{L-SVRG})}

In all proofs below, we will for simplicity write $f^*\eqdef f(x^*)$.

\subsection{Proof of Lemma~\ref{lem:x^k_alg_1}}

Definition of $x^{k+1}$ and unbiasness of $g^k$ guarantee that
	\begin{eqnarray*}
		\E{\norm{x^{k+1} - x^*}^2} &=& \E{\norm{x^k - x^* - \eta g^k}}^2 \\
		&\overset{\text{Alg.}~\ref{alg:1}}{=}&
		\norm{x^k - x^*}^2 + \E{2 \eta\dotprod{g^k}{x^* - x^k}} + \eta^2\E{\norm{g^k}^2}\\
		&\overset{\eqref{eq:unbiased}}{=}& \norm{x^k - x^*}^2 + 2 \eta\dotprod{\nabla f(x^k)}{x^* - x^k} + \eta^2\E{\norm{g^k}^2}\\
		&\overset{\eqref{def:strong_convexity}}{\leq}& \norm{x^k - x^*}^2 +  2 \eta\left(f^* - f(x^k) - \frac{\mu}{2}\norm{x^k - x^*}\right)+ \eta^2\E{\norm{g^k}^2}\\
		&=& \norm{x^k - x^*}^2(1 - \eta\mu) +  2 \eta\left(f^* - f(x^k)\right)+ \eta^2\E{\norm{g^k}^2}.
	\end{eqnarray*}

\subsection{Proof of Lemma~\ref{lem:g^k_alg_1}}

Using definition of $g^k$
	\begin{eqnarray*}
		\E{\norm{g^k}^2} &\overset{\text{Alg.}~\ref{alg:1}}{=}&
		\E{\norm{
			\nabla f_i(x^k) - \nabla f_i(x^*) + \nabla f_i(x^*) - \nabla f_i(w^k) + \nabla f(w^k)
		}^2}\\
		&\overset{\eqref{eq:sum}}{\leq}&
		2\E{\norm{\nabla f_i(x^k) - \nabla f_i(x^*)}^2}
		+ 
		2\E{\norm{\nabla f_i(x^*) - \nabla f_i(w^k) - \E{\nabla f_i(x^*) - \nabla f_i(w^k)}}^2}\\
		&\overset{\eqref{def:L-smoothness}+\eqref{eq:variance}}{\leq}&
		4L(f(x^k) - f^*) + 2 \E{\norm{\nabla f_i(w^k) - \nabla f_i(x^*)}^2} \\
		&\overset{\eqref{def:D^k_alg_1}}{=}&
		4L(f(x^k) - f^*) + \frac{p}{2\eta^2} \cD^k.
	\end{eqnarray*}

\subsection{Proof of Lemma~\ref{lem:D^k_alg_1}}
	\begin{eqnarray*}
		\E{\cD^{k+1}} &\overset{\text{Alg.}~\ref{alg:1}}{=}& (1-p) \cD^k + p \frac{4\eta^2}{pn} \sum\limits_{i=1}^{n} \norm{\nabla f(x^k) - \nabla f(x^*)}^2\\
		&\overset{\eqref{def:L-smoothness}}{\leq}& (1-p) \cD^k + 8L\eta^2 (f(x^k) - f^*).
	\end{eqnarray*}

\subsection{Proof of Lemma~\ref{lem:conv_alg_1}}
Combining Lemmas~\ref{lem:x^k_alg_1} and \ref{lem:D^k_alg_1} we obtain
	\begin{eqnarray*}
		\E{\norm{x^{k+1} - x^*}^2 + \cD^{k+1}} &\overset{\eqref{eq:x^k_alg_1}+\eqref{eq:D^k_alg_1}}{\leq}&
		(1-\mu\eta) \norm{x^k - x^*}^2 + 2\eta(f^* - f(x^k)) + \eta^2 \E{\norm{g^k}^2}\\
		&& \quad +
		(1-p) \cD^k + 8L\eta^2 (f(x^k) - f^*)\\
		&\overset{\eqref{eq:g^k_alg_1}}{\leq}&
		(1-\mu\eta) \norm{x^k - x^*}^2 + (1-p) \cD^k +  (2\eta - 8L\eta^2)(f^* - f(x^k))\\
		&& \quad +
		\eta^2
		\left(
			4L(f(x^k) - f^*) + \frac{p}{2\eta^2} \cD^k
		\right)\\
		&= &
		(1-\mu\eta) \norm{x^k - x^*}^2 +\left(1 - \frac{p}{2}\right)  \cD^k +  (2\eta - 12L\eta^2)(f^* - f(x^k)).
	\end{eqnarray*}
	Now we use the fact that $\eta \leq \frac{1}{6L}$ and obtain the desired inequality:
	\begin{eqnarray*}
		\E{\norm{x^{k+1} - x^*}^2 + \cD^{k+1}} 
		& \leq &
		(1-\mu\eta) \norm{x^k - x^*}^2 + \left(1 - \frac{p}{2}\right)\cD^k .
	\end{eqnarray*}

\section{Proofs for Algorithm~\ref{alg:2} (\texttt{L-Katyusha})}

\subsection{Proof of Lemma~\ref{lem:g^k_alg_2}}

To upper bound the variance of $g^k$ we first uses its definition
	\begin{eqnarray*}	 
		\E{\norm{g^k - \nabla f (x^k)}^2}
		&\overset{\text{Alg.}~\ref{alg:2}}{=}& \E{\norm{\nabla f_{i}(x^k) - \nabla f_{i}(w^k) - \E{\nabla f_{i}(x^k) - \nabla f_{i}(w^k)}}^2}\\
		&\overset{\eqref{eq:variance}}{\leq}&
		\E{\norm{\nabla f_{i}(x^k) - \nabla f_{i}(w^k)}^2} \\
		&\overset{\eqref{def:L-smoothness}}{\leq}&
		2L
		\left(
		f(w^k) - f(x^k) -\dotprod{\nabla f(x^k)}{w^k - x^k}
		\right).
	\end{eqnarray*}

\subsection{Proof of Lemma~\ref{lem:wg_alg_2}}
We start with the definition of $z^{k+1}$ 
	\begin{equation*}
	z^{k+1} \overset{\text{Alg.}~\ref{alg:2}}{=} \frac{1}{1+\eta \sigma} \left(\eta\sigma x^k + z^k - \frac{\eta}{L}g^k\right),
	\end{equation*}
	which implies
$
		\frac{\eta}{L}g^k = \eta\sigma(x^k - z^{k+1}) + (z^k - z^{k+1}),
$
	which further implies that 
	\begin{eqnarray*}
		\dotprod{g^k}{z^{k+1} - x^*} &= &
		\mu\dotprod{x^k - z^{k+1}}{z^{k+1} - x^*}
		+
		\frac{L}{\eta}\dotprod{z^k - z^{k+1}}{z^{k+1} - x^*}\\
		&
		= &
		\frac{\mu}{2}\left(
			\norm{x^k - x^*}^2 - \norm{x^k - z^{k+1}}^2 - \norm{z^{k+1} - x^*}^2
		\right)
		\\
		&& \quad +
		\frac{L}{2\eta}\left(
			\norm{z^k - x^*}^2 - \norm{z^k - z^{k+1}}^2 - \norm{z^{k+1} - x^*}^2
		\right)\\
		&		\leq &
		\frac{\mu}{2}\norm{x^k - x^*}^2
		+
		\frac{L}{2\eta}
		\left(
			\norm{z^k - x^*}^2
			-
			(1+\eta\sigma)\norm{z^{k+1} - x^*}^2
		\right)
		-
		\frac{L}{2\eta}
		\norm{z^k - z^{k+1}}^2.
	\end{eqnarray*}

\subsection{Proof of Lemma~\ref{lem:wtheta_alg_2}}
	\begin{eqnarray*}
		&& \frac{L}{2\eta} \norm{z^{k+1} - z^k}^2
		+
		\dotprod{g^k}{z^{k+1} - z^k} \\
		&	= &
		\frac{1}{\theta_1}
		\left(
			\frac{L}{2\eta\theta_1}\norm{\theta_1(z^{k+1} - z^k)}^2
			+
			\dotprod{g^k}{\theta_1(z^{k+1} - z^k)}
		\right)\\
		&\overset{\text{Alg.}~\ref{alg:2}}{=}&
		\frac{1}{\theta_1}
		\left(
			\frac{L}{2\eta\theta_1}\norm{y^{k+1} - x^k}^2
			+
			\dotprod{g^k}{y^{k+1} - x^k}
		\right)\\
		&=&
		\frac{1}{\theta_1}
		\left(
			\frac{L}{2\eta\theta_1}\norm{y^{k+1} - x^k}^2
			+
			\dotprod{\nabla f(x^k)}{y^{k+1} - x^k}
			+
			\dotprod{g^k-\nabla f(x^k)}{y^{k+1} - x^k}
		\right)\\
		&=&
		\frac{1}{\theta_1}
		\left(
			\frac{L}{2}\norm{y^{k+1} - x^k}^2
			+
			\dotprod{\nabla f(x^k)}{y^{k+1} - x^k}
			+
			\frac{L}{2} \left(
				\frac{1}{\eta\theta_1} - 1
			\right)
			\norm{y^{k+1} - x^k}^2
			+
			\dotprod{g^k-\nabla f(x^k)}{y^{k+1} - x^k}
		\right)\\
		&\overset{\eqref{def:L-smoothness}}{\geq} &
		\frac{1}{\theta_1}
		\left(
			f(y^{k+1}) - f(x^k)
			+
			\frac{L}{2} \left(
				\frac{1}{\eta\theta_1} - 1
			\right)
			\norm{y^{k+1} - x^k}^2
			+
			\dotprod{g^k-\nabla f(x^k)}{y^{k+1} - x^k}
		\right)\\
		&\geq &
		\frac{1}{\theta_1}
		\left(
			f(y^{k+1}) - f(x^k)
			-
			\frac{\eta\theta_1}{2L(1-\eta\theta_1)}
			\norm{g^k - \nabla f(x^k)}^2
		\right) \\
		&= &
		\frac{1}{\theta_1}
		\left(
			f(y^{k+1}) - f(x^k)
			-
			\frac{\theta_2}{2L}
			\norm{g^k - \nabla f(x^k)}^2
		\right),
	\end{eqnarray*}
	where the last inequality uses the Young's inequality in the form of $\dotprod{a}{b}\geq -\frac{\norm{a}^2}{2\beta}  -\frac{\beta\norm{b}^2}{2}$ for $\beta = \frac{\eta\theta_1}{L(1-\eta\theta_1)}$, which concludes the proof.

\subsection{Proof of Lemma~\ref{lem:W^k_alg_2}}
	From the definition of $w^{k+1}$ in Algorithm~\ref{alg:2} we have
	\begin{equation}
		\E{f(w^{k+1})} \overset{\text{Alg.}~\ref{alg:2}}{=} (1-p)f(w^k) + p f(y^k).
	\end{equation}
	 The rest of proof follows from the definition of $\cW^k$ \eqref{eq:W^k_alg_2}.

\clearpage

\subsection{Proof of Lemma~\ref{lem:conv_alg_2}}
	Combining all the previous lemmas together, we obtain
	\begin{eqnarray*}
		f^* &\overset{\eqref{def:strong_convexity}}{\geq}&
		f(x^k) + \dotprod{\nabla f(x^k)}{x^* - x^k} + \frac{\mu}{2}\norm{x^k - x^*}^2\\
		&=&
		f(x^k) + \frac{\mu}{2}\norm{x^k - x^*}^2 +  \dotprod{\nabla f(x^k)}{x^* -z^k + z^k - x^k}\\
		&\overset{\text{Alg.}~\ref{alg:2}}{=}&
		f(x^k) + \frac{\mu}{2}\norm{x^k - x^*}^2 +  \dotprod{\nabla f(x^k)}{x^* -z^k}
		+
		\frac{\theta_2}{\theta_1}\dotprod{\nabla f(x^k)}{x^k-w^k} + \frac{(1-\theta_1 - \theta_2)}{\theta_1}\dotprod{\nabla f(x^k)}{x^k - y^k}\\
		&\overset{ \eqref{eq:unbiased}}{\geq}&
		f(x^k)
		+
		\frac{\theta_2}{\theta_1}\dotprod{\nabla f(x^k)}{x^k-w^k} + \frac{(1-\theta_1 - \theta_2)}{\theta_1}(f(x^k) - f(y^k))\\
		&& \quad +
		\E{
			\frac{\mu}{2}\norm{x^k - x^*}^2 +  \dotprod{g^k}{x^* -z^{k+1}} + \dotprod{g^k}{z^{k+1} - z^k}
		}\\
		&\overset{\eqref{eq:wg_alg_2}}{\geq}&
		f(x^k)
		+
		\frac{\theta_2}{\theta_1}\dotprod{\nabla f(x^k)}{x^k-w^k} + \frac{(1-\theta_1 - \theta_2)}{\theta_1}(f(x^k) - f(y^k))\\
		&& \quad +
		\E{
			\cZ^{k+1}
			-
			\frac{1}{1+\eta\sigma}\cZ^k
		}
		+
		\E{
			\dotprod{g^k}{z^{k+1} - z^k}
			+
			\frac{L}{2\eta}\norm{z^k - z^{k+1}}^2
		}\\
		&\overset{\eqref{eq:wtheta_alg_2}}{\geq}&
		f(x^k)
		+
		\frac{\theta_2}{\theta_1}\dotprod{\nabla f(x^k)}{x^k-w^k} + \frac{(1-\theta_1 - \theta_2)}{\theta_1}(f(x^k) - f(y^k))\\
		& &\quad +
		\E{
			\cZ^{k+1}
			-
			\frac{1}{1+\eta\sigma}\cZ^k
		}
		+
		\E{
			\frac{1}{\theta_1}\left(f(y^{k+1}) - f(x^k)\right)-\frac{\theta_2}{2L\theta_1} \norm{g^k - \nabla f(x^k)}^2
		}\\
		&\overset{\eqref{eq:g^k_alg_2}}{\geq}&
		f(x^k)
		+
		\frac{\theta_2}{\theta_1}\dotprod{\nabla f(x^k)}{x^k-w^k} + \frac{(1-\theta_1 - \theta_2)}{\theta_1}(f(x^k) - f(y^k))\\
		&& \quad +
		\E{
			\cZ^{k+1}
			-
			\frac{1}{1+\eta\sigma}\cZ^k
		}
		+
		\E{
			\frac{1}{\theta_1}\left(f(y^{k+1}) - f(x^k)\right)
			-
			\frac{\theta_2}{\theta_1}
			\left(
				f(w^k) - f(x^k) - \dotprod{\nabla f(x^k)}{w^k - x^k}
			\right)
		}\\
	&=&
	f(x^k)
	+
	\frac{(1-\theta_1 - \theta_2)}{\theta_1}(f(x^k) - f(y^k)) - \frac{1}{1+\eta\sigma}\cZ^k
	-
	\frac{\theta_2}{\theta_1}(f(w^k) - f(x^k))
	\\
	&& \quad +
	\E{
		\cZ^{k+1} 
		+
		\frac{1}{\theta_1}\left(f(y^{k+1}) - f(x^k)\right)
	},
	\end{eqnarray*}
	where in the second inequality we use also convexity of $f(x)$.
	\begin{eqnarray*}
		x^k &\overset{\text{Alg.}~\ref{alg:2}}{=}& \theta_1 z^k + \theta_2 w^k + (1-\theta_1 - \theta_2) y^k\\
		z^k - x^k &=& \frac{\theta_2}{\theta_1}(x^k-w^k) + \frac{1-\theta_1 - \theta_2}{\theta_1}(x^k - y^k).
	\end{eqnarray*}

	After rearranging we get
	\begin{eqnarray*}
		\frac{1}{1+\eta\sigma}\cZ^k + (1-\theta_1 - \theta_2)\cY^k + \frac{\theta_2}{\theta_1}(f(w^k - f^*))\geq
		\E{
			\cZ^{k+1} + \cY^{k+1}
		}.
	\end{eqnarray*}
	Using definition of $\cW^k$ we get
	\begin{equation}
	\E{
		\cZ^{k+1} + \cY^{k+1}
	}
	\leq
	\frac{1}{1+\eta\sigma}\cZ^k + (1-\theta_1 - \theta_2)\cY^k +	\frac{p}{(1+\theta_1)}\cW^k.
	\end{equation}
	
	Finally, using Lemma~\ref{lem:W^k_alg_2} we get
	\begin{eqnarray*}
		\E{
			\cZ^{k+1}
			+
			\cY^{k+1}
			+
			\cW^{k+1}
		}
		&\leq &
			\frac{1}{1+\eta\sigma}\cZ^k + (1-\theta_1 - \theta_2)\cY^k +	\frac{p}{(1+\theta_1)}\cW^k
			+
			(1-p)\cW^k + \theta_2(1+\theta_1) \cY^k\\
		&= &
			\frac{1}{1+\eta\sigma}\cZ^k + (1-\theta_1(1-\theta_2))\cY^k
			+
			\left(
				1 - \frac{p\theta_1}{1+\theta_1}
			\right)\cW^k,
	\end{eqnarray*}
	which concludes the proof.

\end{document}